\documentclass[12pt, hidelinks]{article}

\usepackage{xr-hyper}
\makeatletter
\newcommand*{\addFileDependency}[1]{
  \typeout{(#1)}
  \@addtofilelist{#1}
  \IfFileExists{#1}{}{\typeout{No file #1.}}
}
\makeatother

\usepackage{amsfonts,amsmath,float,color,algorithm, amsthm,
  verbatim,subcaption,algorithmic,bm,booktabs,xcolor,enumerate,
  url,cases,indentfirst,graphicx,multicol,hyperref,multirow,amssymb}
\usepackage{arydshln}
\usepackage[T1]{fontenc}

\usepackage{natbib}
\usepackage{lineno}

\graphicspath{{pictures/}}
\allowdisplaybreaks

\newcommand{\rp}{\mathrm{pl}}
\newcommand{\rs}{\mathrm{sub}}
\newcommand{\rw}{\mathrm{w}}
\newcommand{\ruw}{\mathrm{mscl}}
\newcommand{\rf}{\mathrm{mle}}
\newcommand{\rmle}{\mathrm{scale}}
\newcommand{\radp}{\mathrm{adp}}
\newcommand{\rlas}{\mathrm{las}}
\newcommand{\rt}{\mathrm{t}}
\newcommand{\inn}{\mathrm{in}}
\newcommand{\out}{\mathrm{out}}
\newcommand{\oi}{\mathrm{out}\times\mathrm{in}}

\newcommand{\0}{\bm{0}}

\newcommand{\I}{\bm{I}}

\newcommand{\bOmega}{\bm{\Omega}}
\newcommand{\bLambda}{\bm{\Lambda}}
\newcommand{\ttheta}{{\tilde{\boldsymbol{\theta}}}}
\newcommand{\htheta}{{\hat{\boldsymbol{\theta}}}}

\newcommand{\btheta}{{\bm{\theta}}}

\newcommand{\bSigma}{\bm{\Sigma}}

\newcommand{\A}{\bm{A}}
\newcommand{\cA}{\mathcal{A}}
\newcommand{\cAc}{\mathcal{A}^c}
\newcommand{\hA}{(\hat{\mathcal{A}}_{\rp})}
\newcommand{\B}{\bm{B}}
\newcommand{\U}{\bm{U}}
\newcommand{\y}{\bm{y}}
\newcommand{\x}{\bm{x}}
\newcommand{\tx}{\tilde{\bm{x}}}
\newcommand{\X}{\bm{X}}
\newcommand{\bu}{\bm{u}}
\newcommand{\tu}{\hat{\bu}}
\newcommand{\bt}{\bm{t}}

\newcommand{\z}{\bm{z}}
\newcommand{\W}{\bm{Z}}
\newcommand{\G}{\bm{G}}

\renewcommand{\L}{\bm{L}}
\newcommand{\bd}{\bm{d}}
\newcommand{\V}{\bm{V}}
\newcommand{\M}{\bm{M}}

\newcommand{\batheta}{\acute{\bm{\theta}}}

\newcommand{\oo}{o(1)}
\newcommand{\op}{o_P(1)}
\newcommand{\Op}{O_P(1)}

\newcommand{\ud}{\mathrm{d}}
\newcommand{\g}{g}
\newcommand{\Q}{Q}
\newcommand{\dg}{\dot{\g}}
\newcommand{\ddg}{\ddot{\g}}

\newcommand{\dl}{\dot{\ell}}
\newcommand{\ddl}{\ddot{\ell}}

\newcommand{\df}{\dot{f}}
\newcommand{\Dn}{\mathcal{D}_n}
\newcommand{\Ds}{\mathcal{D}_{\delta}}
\newcommand{\sumn}{\sum_{i=1}^{N}}

\newcommand{\sumjp}{\sum_{j=1}^{p}}

\newcommand{\tp}{^{\mathrm{T}}}

\newcommand{\cvp}{\overset{P}{\longrightarrow}}
\newcommand{\cvas}{\overset{a.s.}{\longrightarrow}}

\newcommand{\cvd}{\rightsquigarrow}

\renewcommand{\Pr}{\mathbb{P}}
\newcommand{\Exp}{\mathbb{E}}
\newcommand{\Var}{\mathbb{V}}
\newcommand{\sgn}{\text{sgn}}
\newcommand{\mmse}{\mathrm{A-OS}}
\newcommand{\mvc}{\mathrm{L-OS}}
\newcommand{\mpr}{\mathrm{P-OS}}

\newcommand{\mle}{{\textnormal{\tiny MLE}}}%
\newcommand{\bbeta}{\bm{\beta}}
\newcommand{\hbeta}{\hat{\bbeta}}
\newcommand{\tbeta}{\tilde{\bbeta}}

\newcommand{\dd}{\mathrm{d}}

\newcommand{\eeta}{\bm{\eta}}
\newcommand{\bH}{\bm{H}}

\newtheorem{assumption}{Assumption}
\newtheorem{lemma}{Lemma}
\newtheorem{proposition}{Proposition}
\newtheorem{theorem}{Theorem}
\newtheorem{corollary}{Corollary}

\theoremstyle{remark}
\newtheorem{remark}{Remark}

\newenvironment{keywords}{%
  \par\medskip\noindent\textbf{Keywords:}\enspace\ignorespaces
}{%
  \par\medskip
}

\newcommand{\Nor}{\mathbb{N}}

\newcommand{\black}{\color{black}}

\newcounter{assumptionpart}[assumption]
\newcommand{\assumptionitem}[1]{%
    \refstepcounter{assumptionpart}\label{#1}%
    \textup{(\alph{assumptionpart})}%
}

\allowdisplaybreaks[4]
\usepackage[left=1.25in,right=1.25in]{geometry}

\begin{document}

\title{Scale-invariant Optimal Sampling for Rare-events Data with Sparse Models}

\author{Jing Wang$^{1}$, HaiYing Wang$^{1}$, Qiang Zhang$^{2}$, and Hao Helen Zhang$^{3}$ \\
  \\
  $^{1}$ Department of Statistics, University of Connecticut,\\ 
  Storrs, CT 06269, USA \\
  \\
  $^{2}$ Vickie and Jack Farber Vision Research Center, \\
  Wills Eye Hospital Thomas Jefferson University, \\
  840 Walnut St, Philadelphia, PA 19107, USA\\
  \\
  $^{3}$ Department of Mathematics, University of Arizona, \\
  Tucson, AZ 85721, USA}

\maketitle

\begin{abstract}%
  Subsampling is effective in tackling computational challenges for massive data
  with rare events.  Overly aggressive subsampling may adversely affect
  estimation efficiency, and optimal subsampling is essential to mitigate the
  information loss. However, existing optimal subsampling probabilities depend
  on data scales, and some scaling transformations may result in inefficient
  subsamples. This problem is more significant when there are inactive features,
  because their influence on the subsampling probabilities can be arbitrarily
  magnified by inappropriate scaling transformations. We tackle this challenge
  and introduce a scale-invariant optimal subsampling function in the context of sparse models, where inactive features are commonly assumed. Instead of
  focusing on estimating model parameters, we define an optimal subsampling
  function to minimize the prediction error, using adaptive lasso to outline the
estimation procedure and study its theoretical guarantee.  We first introduce
the adaptive lasso estimator for rare-events data and establish its oracle
properties, thereby validating the use of subsampling. Then we derive a
scale-invariant
  optimal subsampling function that minimizes the prediction error of the
  inverse probability weighted (IPW) adaptive lasso.  Finally, we present an
  estimator based on the maximum sampled conditional likelihood (MSCL) to
  further improve the estimation efficiency. We conduct numerical experiments
  using both simulated and real-world data sets to demonstrate the performance
  of the proposed methods.
\end{abstract}

\begin{keywords}
  massive data, oracle property, prediction error, scaling, variable selection
\end{keywords}

\section{Introduction}\label{sec:intro}

Rare-events data refer to binary-response data that are highly imbalanced, i.e.,
the number of zeros (a.k.a ``controls'' or ``negative instances'') is possibly
hundreds or thousands of times as large as the number of ones (a.k.a. ``cases''
or ``positive instances''). This type of data is common in various fields, such
as medicine, natural science, political science, and social science, where
examples of rare events can be rare diseases, natural disasters, wars, and
financial crises, respectively. Modern technologies also
prompt
us to pay more attention to rare-events data. For example, in modern online recommendation
systems, clicks are usually rare events compared with nonclicks. Statistical
analyses, including parameter estimation and inference, pose unique challenges
for rare-events data because of high imbalance. In addition, rare-events
data often involve sparse models. For instance, rare diseases might be linked to
a limited number of key genes. Therefore, researchers frequently adopt sparse
models in genome-wide association studies for analyzing rare diseases. A
different yet related example is the use of deep neural networks to predict
click-through rates in modern online recommendation systems. These networks are
typically overparameterized, necessitating methods that balance rare-events data
with the sparsity of the underlying models.
Data balancing is a popular
approach to overcome challenges caused by imbalanced data and is usually
accomplished through subsampling the zeros \citep{drummond2003c4,
liu2008exploratory} or oversampling the ones \citep{chawla2002smote,
han2005borderline, mathew2017classification, douzas2017self}. In addition, rare-events data
are often massive in order to obtain an adequate number of ones, and computation
is demanding. Therefore, we focus on the subsampling approach since it addresses
the imbalance issue and reduces the computational burden simultaneously.

It is shown in \cite{wang2020logistic} that the efficiency of parameter
estimation is essentially determined by the number of ones for rare-events
logistic regression, and subsampling does not reduce the estimation efficiency
as long as sufficient zeros are kept.
In the case of excessive removal of zeros, \cite{wang2021nonuniform} developed an
optimal sampling approach to minimize information loss.  However, the optimal
sampling probabilities in \cite{wang2021nonuniform} are scale-dependent, which
may lead to inefficient results.
Figure~\ref{fig:intro} illustrates the issue using a simulated example, with details in
Section~\ref{sec:simuintro} of the appendix.
We generate the data from the same logistic regression model and re-scale one of the covariates with different scales $s=0.01, 0.1, 1, 10$, and $100$. Then we apply two optimal subsampling methods in
\citep{wang2021nonuniform}, labeled with ``A-OS'' and ``L-OS'' in
Figure~\ref{fig:intro}. It is observed that the prediction errors of A-OS and L-OS are significantly impacted by the data
scaling. The A-OS may perform similarly to the Uni (simple random sampling or
uniform sampling) in Figure~\ref{subfig:a} when $s=0.01$; so is the L-OS in
Figure~\ref{subfig:b} when $s=100$.
This scale-dependent issue is not specific to logistic regression and
rare-events data considered in \citep{wang2021nonuniform}; it is a wide concern in literature for various data types and models, including but not limited to
\cite{ai2019optimal, Zhang2020optimal, wang2021optimal, keret2023analyzing,
  yu2020optimal, yao2018optimal, wang2022sampling}. In
this paper, we propose a scale-invariant optimal subsampling method to overcome
the issue. It is labeled ``P-OS'' in Figure~\ref{fig:intro}.

\begin{figure}[ht]
  \centering
  \begin{subfigure}{0.435\textwidth}
    \includegraphics[width=\textwidth]{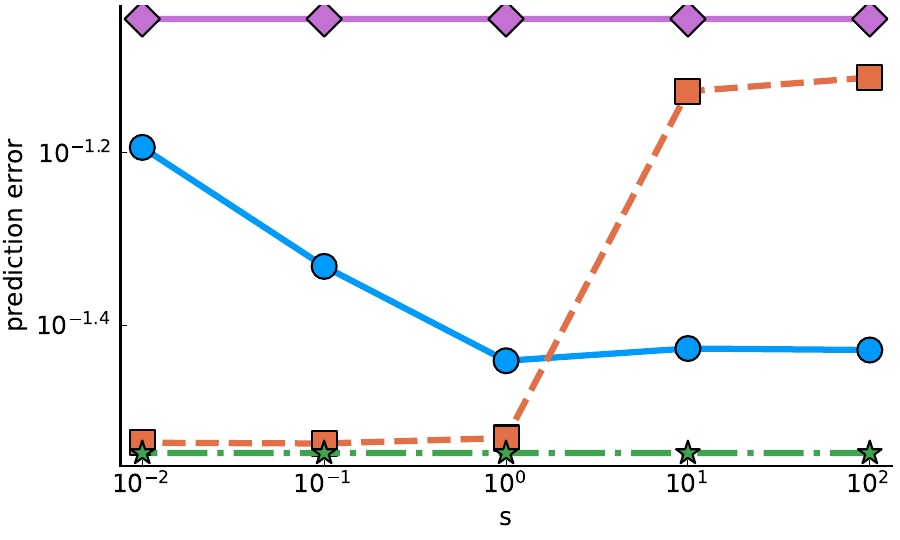}
    \caption{Non-sparse parameter}
    \label{subfig:a}
  \end{subfigure}
  \begin{subfigure}{0.435\textwidth}
    \includegraphics[width=\textwidth]{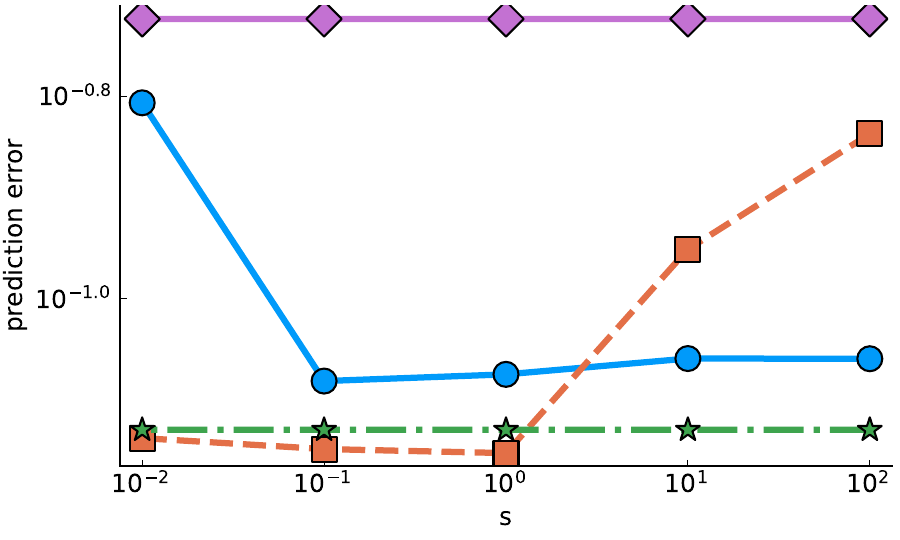}
    \caption{Sparse parameter}
    \label{subfig:b}
  \end{subfigure}
  \begin{subfigure}{0.1\textwidth}
    \includegraphics[width=\textwidth]{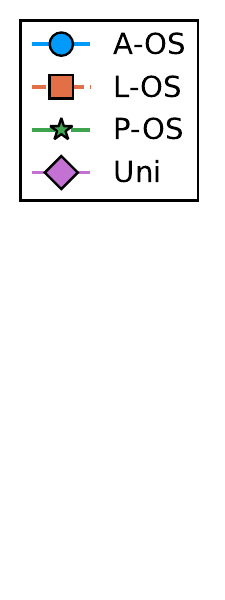}
  \end{subfigure}
  \caption{Prediction errors with different scale transformations of the same
    model with (a) non-sparse parameter $(-1,-1,-0.01,-0.01,-0.01,-0.01)\tp$ and
    (b) sparse parameter $(-1,0,0,0,0,0)\tp$.}
  \label{fig:intro}
\end{figure}

The scale-dependence issue can seriously impact variable selection results for sparse models, where true parameters are zero for inactive covariates. In this case, inactive variables may be arbitrarily transformed
without changing the underlying model, but the A-OS or L-OS would be highly influenced and may lead to misleading results. To resolve this issue, we investigate scale-invariant optimal subsampling in the context of variable
selection, for which one main goal is to distinguish active and inactive
features.

Penalty-based feature selection methods
are widely
used. Specifically, the adaptive lasso is a popular choice due to its oracle
properties, convexity, and practical ease of implementation
\citep[see][]{zou2006adaptive, zhang2007adaptive}. While penalization
methods have been used for bias reduction in rare-events analysis \citep{firth1993bias}, variable
selection for rare-events data has not been investigated.
Conducting effective variable selection is difficult in
the context of rare-events data analysis, mainly due to the scarcity of
information available for ones. An inaccurate variable selection result can
subsequently impact both the effectiveness of optimal subsampling and the
efficiency of parameter estimation. In this paper, we address the
challenge of variable selection in the context of rare-events data. First, we
propose the full data adaptive lasso and study its theoretical properties. Next,
we introduce a novel subsampling estimator that seamlessly combines
penalty-based variable selection and optimal sampling into one unified framework
for rare-events data. The implementation of the adaptive lasso requires a pilot
estimator to construct data-dependent weights for covariates. Given that
optimal sampling also relies on pilot estimates
\citep[see][]{WangZhuMa2017,ai2019optimal}, the adaptive lasso emerges as a
natural choice for conducting variable selection in the context of
subsampled rare-events data. We validate the new estimators by proving
their oracle properties and also develop an efficient algorithm to facilitate
their practical implementation when handling massive real-world data sets.
Our main contributions are listed as follows:
\begin{itemize}
\item We propose scale-invariant optimal subsampling to enhance parameter
  estimation and variable selection. Existing optimal subsampling methods are
  scale-dependent, which may lead to unreliable or misleading results.
\item We define adaptive lasso and establish its oracle properties for
  rare-events data, which show that the asymptotic variances are determined by
  the number of ones in the data and the active features in the model.
\item We present a practical subsampling algorithm based on optimal
  probabilities that significantly reduces the computational burden and
  accelerates the optimization for penalty-based feature selection methods.
\item We prove that the practical subsampling estimator has an asymptotic
  variance that reaches the lower bound of a large class of subsample estimators.
\end{itemize}
The rest of the paper is organized as follows. Section~\ref{sec:model}
introduces the model setup.
Section~\ref{sec:varsel} investigates nonuniform sampling and variable selection
tailored for rare-events data. 
We further discuss the limitation of scale-dependent optimal probabilities and
propose new methods to construct scale-invariant optimal
probabilities. Section~\ref{sec:theory} discusses theoretical properties of the
MSCL estimator and presents a two-step algorithm to implement the proposed
methods. Section~\ref{sec:numeric} conducts numerical experiments on simulated
and real data sets. Section~\ref{sec:conclusion} concludes the paper. Proofs of
theoretical results, and additional details and results of numerical experiments
are presented in the appendix.

\section{Background and model setup}\label{sec:model}

We use the subscript
$_{\rt}$ to indicate the true parameters. For a $p$-dimensional vector $\x$, we
use $x_{(i)}$ to represent its $i$-th element.
For an index subset $\cA \subset \{i:1,2,...,p\}$, we use $\x_{(\cA)}$ to denote
the subvector of $\x$, whose elements correspond to the indices in $\cA$.
Furthermore, we use $\x^{\otimes2}$ to denote
$\x\x\tp$, use ``$\cvd$'' to denote convergence in distribution, use
``$\cvp$'' to denote convergence in probability, and use ``$\cvas$'' to denote
convergence almost surely. We use $\I$ to denote an identity matrix of a
suitable dimension and use $\0$ to denote a vector of zeros of a suitable
dimension. 

Let $(\x_1,y_1),(\x_2,y_2),...,(\x_N,y_N)$ denote $N$ sample points from the joint distribution of $(\x,y)$,
where $\{\x_i\}_{i=1}^N$ denote the $p_{N}$-dimensional predictors and
$\{y_i\}_{i=1}^N$ the binary responses. Assume that the probability of
$y$ being a one ($y=1$) given $\x$ is  
\begin{equation*}
p(\x;\btheta_{\rt}):=\Pr(y=1|\x)
=\frac{e^{\alpha_{\rt}+f(\x;\bbeta_{\rt})}}{1+e^{\alpha_{\rt}+f(\x;\bbeta_{\rt})}}
=\frac{e^{g(\x;\btheta_{\rt})}}{1+e^{g(\x;\btheta_{\rt})}},
\end{equation*}
where $\btheta_{\rt}=(\alpha_{\rt},\bbeta_{\rt}\tp)\tp$ is the vector of true parameters and
$f(\x;\bbeta_t)$ is a smooth function of $\bbeta_t$.
For rare-events data, $N_1^*\ll N_0^{*}$, where $N_1^*=\sumn y_i$ is the
number of ones (i.e. $y_i=1$) and $N_0^*=N-N_1^*$ is the number of zeros
(i.e. $y_i=0$). Let $N_1=\Exp(N_1^*)$ and $N_0=\Exp(N_0^*)$. Following 
\cite{wang2021nonuniform}, we assume that $\alpha_{\rt}\to-\infty$ as
$N\to\infty$, which implies that, under appropriate moment conditions,
\begin{equation}\label{eq:2}
  \frac{N_1^*}{N_0^*} = \frac{N_1}{N_0} \{1+\oo\}
  =\frac{\Exp\{p(\x;\btheta_{\rt})\}}{1-\Exp\{p(\x;\btheta_{\rt})\}}\{1+\oo\}
  =\Exp\{e^{\alpha_{\rt}+f(\x;\bbeta_{\rt})}\}+\oo
  \rightarrow 0, 
\end{equation}
almost surely.
Under this assumption, the asymptotic variance of the full data maximum
likelihood estimator (MLE) is of order $1/N_{1}$ instead of $1/N$,
which is slower because $N_{1}/N\to 0$. 
The value $N_{1}$ can be considered as an intuitive surrogate of
$N_{1}^{*}$, which indicates that the estimation efficiency is
determined by the number of rare ones.\black  Therefore,
we can keep all the ones and sample the zeros to save computational costs. There
could be a variance inflation due to aggressive subsampling, and
\cite{wang2021nonuniform} developed optimal subsampling functions to reduce
the variance inflation. Specifically, the authors proposed non-uniform
  optimal sampling functions under the A- and L-optimality criteria,
  respectively, as follows: 
$\varphi_{\mmse}^{\mathrm{scale}}(\x)\propto p(\x;\btheta_{\rt})\|\M^{-1}\dg(\x;\btheta_{\rt})\|$
 and $\varphi_{\mvc}^{{\mathrm{scale}}}(\x)\propto
 p(\x;\btheta_{\rt})\|\dg(\x;\btheta_{\rt})\|$, where
 $\M=\Exp\{e^{f(\x;\bbeta_{\rt})}\dg^{\otimes2}(\x;\btheta_{\rt})\}$ and
 $\dg(\x;\btheta)$ denotes the derivative of $g(\x;\btheta)$
 with respect to $\btheta$. 
However, 
the sampling functions $\varphi_{\mmse}^{\mathrm{scale}}(\x)$ and $\varphi_{\mvc}^{\mathrm{scale}}(\x)$ proposed in
\cite{wang2021nonuniform} depend on the scale of $\x$, and may not
perform well for certain measurement scale of $\x$. For example, if
  $g(\x;\btheta_{\rt})=\alpha_{\rt}+\x\tp\btheta_{\rt}$, then
  $\varphi_{\mvc}^{\mathrm{scale}}(\x)$ is proportional to $1+\|\x\|$, which will be
  influenced by the scale of $\x$. Similarly, scale changes in $\x$ may also
  change $\varphi_{\mmse}^{\mathrm{scale}}(\x)$, although the impact may not be in the same
  direction, as demonstrated in Figure~\ref{fig:intro}. Besides parameter
estimation, variable selection is another important topic, which has not been
studied in the literature on rare-events data. This work aims to fill this gap.

\section{Nonuniform sampling with variable selection for rare-events data}\label{sec:varsel}
 
The adaptive lasso \citep{zou2006adaptive, zhang2007adaptive} is a popular
variable selection method because it has oracle properties and is easy to
implement. The full data adaptive lasso estimator is
\begin{equation}
\label{eq:f-las}
\htheta_{\rf}^{\radp}:=\mathop{\arg\max}_{\btheta}\left\{\sumn[y_ig(\x_i;\btheta) -
\log\{1+e^{g(\x_i;\btheta)}\}] -
\lambda_N\sumjp\frac{|\beta_{(j)}|}{|\hat{\beta}_{\rp(j)}|^{\gamma}}\right\},
\end{equation}
where $\lambda_N$ and $\gamma$ are tuning parameters, and $\hat{\bbeta}_{\rp}$ is a consistent
pilot estimator of $\bbeta_t$.
 In
practice, it is common to set $\gamma=1$. 
In the literature, iterative algorithms such as coordinate descent are commonly
used to solve the adaptive lasso \citep{friedman2007pathwise}.  However, their
computational demand can become prohibitive when dealing with massive data.
It is feasible to alleviate the computational burden by subsampling
zeros and create a smaller subset of data for adaptive lasso.
To be specific, consider Algorithm~\ref{alg:poi}.
\begin{algorithm}[H]%
  \caption{Poisson subsampling algorithm}
  \label{alg:poi}
  \begin{algorithmic}[1]
    \STATE For $i=1,...,N$:
    \IF {$y_i=1$}
    \STATE include $(\x_i,y_i)$ in the subsample;
    \ELSE
    \STATE compute $\varphi(\x_i)$ and generate $u_i\sim U[0,1]$;
    \IF {$u_i\leq\pi(\x_i,y_i)$}
    \STATE include $(\x_i,y_i)$ and record $\rho\varphi(\x_i)$ in the subsample;
    \ENDIF
    \ENDIF
  \end{algorithmic}
\end{algorithm}
The inclusion probability in Algorithm~\ref{alg:poi} for the $i$th observation
is $\pi(\x_i,y_i)=y_i+(1-y_i)\rho\varphi(\x_i)$, where $\rho$ is the baseline
sampling rate for the zeros and $\varphi(\x)>0$ satisfies
$\Exp\{\varphi(\x)\}=1$.
Let the subsample from Algorithm~\ref{alg:poi} be
$\{\x_i^{\rs},y_i^{\rs}\}_{i=1}^{N_{\rs}^{*}}$, which 
is biased since $\pi(\x_i,y_i)$'s depend on the responses. We introduce an
inverse probability weighting (IPW) adaptive lasso estimator to correct for the
bias, defined as
\begin{equation}
\label{eq:w-las}
\htheta_{\rw}^{\radp}:=\mathop{\arg\max}_{\btheta}\left\{\sum_{i=1}^{N_{\rs}^{*}} \frac{[y_i^{\rs}g(\x_i^{\rs};\btheta) -
\log\{1+e^{g(\x_i^{\rs};\btheta)}\}]}{\pi(\x_i^{\rs},y_i^{\rs})}-\lambda_N\sumjp\frac{|\beta_{(j)}|}{|\hat{\beta}_{\rp(j)}|^{\gamma}}\right\}.
\end{equation}

We use $\cA$ to denote the set of
indices of active variables, i.e., $\cA=\{j:\beta_{\rt(j)}\neq0\}$, and $\cAc$ to
denote the set of indices of inactive variables, i.e.,
$\cAc=\{j:\beta_{\rt(j)}=0\}$. Let $s_{N}=\#\cA$.
We list some general assumptions for theoretical analysis. 
\begin{assumption}
\label{asm:a1}
The first, second and third derivatives of $f(\x;\btheta)$ and
$e^{f(\x;\bbeta)}f(\x;\bbeta)$ with respect to $\bbeta$ are bounded by a square
integrable random variable $B(\x)$. The matrix $\Exp\left\{\dg^{\otimes2}(\x;\btheta)\right\}$
is finite and positive definite.
\end{assumption}
\begin{assumption}
\label{asm:a3}
We assume that $c_N=e^{\alpha_{\rt}}/\rho\to c$, where
$0\leq c<\infty$ is a constant. 
\end{assumption}
\begin{assumption}\label{asm:a4ab}
\assumptionitem{asm:a4} The integral
$\Exp\left[\left\{\varphi(\x)+\varphi^{-1}(\x)\right\}B^2(\x)\right]$ is
finite;
\assumptionitem{asm:a5} The integral $\Exp\left\{e^{f(\x;\bbeta)}\varphi^{-1}(\x)B(\x)\right\}$ is
finite.
\end{assumption}
\begin{assumption}\label{asm:a6}
  Letting $b_{N}=\min\{|\beta_{\rt(j)}|: j\in\cA\}$, we assume that there
  exist a constant $c_{0}>0$ and a sequence $r_{N}$ such that
  $\Pr(\min_{j\in\cA}|\hat{\beta}_{\rp(j)}|\le c_{0}b_{N})\to 0$ and
  $\max_{j\in\cAc}|\hat{\beta}_{\rp(j)}|=O_{P}(r_{N}^{-1})$ as $N\to\infty$.
  Assume that
  $\Pr[\max_{i=1,\ldots,N}\{B(\x_{i})+B(\x_{i})/\varphi(\x_{i})\}<C]\to 1$, for
  some constant $C$, 
  \begin{align}\label{eq:hd-cond}
    \frac{\sqrt{\log s_{N}}}{\sqrt{N_{1}}b_{N}}
    +\frac{\sqrt{N_{1}\log p_{N}}}{\lambda_{N}r_{N}^{\gamma}}
    +\frac{\sqrt{s_{N}}}{N_{1}}\max\left\{\log N_{1},\frac{\lambda_{N}}{b_{N}^{\gamma+1}}\right\}
    =\oo.
  \end{align}
\end{assumption}
Assumption~\ref{asm:a1} imposes moment conditions that are standard in the literature, which are satisfied by many widely used models, such as logistic regression.
Assumption~\ref{asm:a3} essentially requires that the sampling rate be not of a higher
order than the imbalance rate. Since $\rho$ is user-specified, this
assumption is easily satisfied in practice by setting $\rho$ to be
a constant multiple of the imbalance rate. 
Assumption~\ref{asm:a4ab}
imposes regularity conditions on $\varphi(\x)$. These are easily satisfied
since $\varphi(\x)$ is user-specified; in practice, we can restrict
$\varphi(\x)$ to be bounded away from zero and infinity.
Assumption~\ref{asm:a6} specifies
conditions on how fast the dimensions $p_{N}$ and $s_{N}$
can diverge as $N$ increases.
The sequence $b_{N}$ represents the signal strength and must be sufficiently
strong, e.g., satisfying $b_{N}/(\sqrt{\log s_{N}/N_{1}})\to\infty$. The sequence $r_{N}$
controls the convergence rate of the pilot estimator for inactive variables.
For diverging dimensions, we require $B(\x_{i})$ and
$B(\x_{i})/\varphi(\x_{i})$ to be bounded with probability approaching 1, rather
than only controlling their expected values as in Assumption~\ref{asm:a4ab}.
This requirement can be relaxed by assuming that $B(\x_{i}) /\min\{\varphi(\x_{i}),1\}$ is
sub-Gaussian, provided the left-hand side of~\eqref{eq:hd-cond} is multiplied by
$(\log N)^{3/2}$. 
Moreover, the dimension cannot grow too rapidly; for instance, we require $\log
p_{N}=o(\lambda_{N}^{2}r_{N}^{2\gamma}/N_{1})$. Finally, the third term
in~\eqref{eq:hd-cond} controls $\lambda_{N}$ to ensure that the bias of the
adaptive lasso estimator is asymptotically negligible. 
We first study the asymptotic properties of $\htheta_{\rw}^{\radp}$
in the following theorem.
\begin{theorem}
\label{thm:asym-ipw}
Under Assumptions~\ref{asm:a1}-\ref{asm:a3},~\ref{asm:a4}, and~\ref{asm:a6}, if
$\alpha_{\rt}\geq C\log\{\max\{s_{N},\log p_{N}\}/N\}$, where $C$ is a
constant, the
IPW adaptive lasso estimator in~\eqref{eq:w-las} has the
following properties:
\begin{enumerate}[1.]
\item Consistency in variable selection: The estimated active set
  $\hat{\cA}_{\rw}:=\{j:\hat{\beta}_{\rw(j)}^{\radp}\neq0\}$ satisfies that
  $\lim_{N\to\infty}\Pr(\hat{\cA}_{\rw}=\cA)=1$.
\item 
  Asymptotic normality: If $(\sqrt{s_{N}}\lambda_{N})/\sqrt{N_{1}}\to0$, the
  estimator of the active parameter vector satisfies that for any
  $s_{N}$-dimensional vector $\bm{e}_{N}$ with $\|\bm{e}_{N}\|=1$,
\begin{equation*}
\sqrt{N_1}\bm{e}_{N}\tp\V_{\rw(\cA)}^{-1/2}(\htheta_{\rw(\cA)}^{\radp}-\btheta_{\rt(\cA)})\cvd
\Nor(0,1),
\end{equation*}
where
$\V_{\rw(\cA)}=\Exp\left\{e^{f(\x;\bbeta_{\rt})}\right\}\M_{(\cA)}^{-1}\M_{\rw(\cA)}\M_{(\cA)}^{-1}
=\Exp\left\{e^{f(\x;\bbeta_{\rt})}\right\}\left\{\M_{(\cA)}^{-1}+c\V_{\mathrm{sub}(\cA)}\right\}$,
$\M_{(\cA)}=\Exp\left\{e^{f(\x;\bbeta_{\rt})}\dg_{(\cA)}^{\otimes2}(\x;\btheta_{\rt})\right\}$,
$\V_{\mathrm{sub}(\cA)}=\M_{(\cA)}^{-1}
\Exp\left\{\frac{e^{2f(\x;\bbeta_{\rt})}}{\varphi(\x)}
\dg_{(\cA)}^{\otimes2}(\x;\btheta_{\rt})\right\}\M_{(\cA)}^{-1}$, $c=\lim_{N\to\infty}e^{\alpha_{\rt}}/\rho<\infty$, and $\dg_{(\cA)}(\x;\btheta_{\rt})$ consists of the elements of gradient
vector $\dg(\x;\btheta_{\rt})$ with indexes in the active set $\cA$.
\end{enumerate}
\end{theorem}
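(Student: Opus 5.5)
We follow the standard oracle-property route for the adaptive lasso (as in \cite{zou2006adaptive} and its diverging-dimension extensions), working with the IPW objective in \eqref{eq:w-las} and rescaling by $N_1$, the effective sample size. Write $\ell_{\rw}(\btheta)=\sum_{i=1}^{N}\delta_i[y_ig(\x_i;\btheta)-\log\{1+e^{g(\x_i;\btheta)}\}]/\pi(\x_i,y_i)$, where $\delta_i$ is the Poisson sampling indicator; $\ell_{\rw}$ is the unpenalized part of \eqref{eq:w-las}. The plan has three steps. First, construct an oracle estimator $\ttheta$ that maximizes the penalized objective over the restricted set $\{\btheta:\bbeta_{(\cAc)}=\0\}$. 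Second, show that $\ttheta$ has $\|\ttheta_{(\cA)}-\btheta_{\rt(\cA)}\|=O_P(\sqrt{s_N/N_1})$, no zero active components, and the stated asymptotic normality. Third, verify the KKT conditions on $\cAc$, so that $\ttheta$ is, with probability tending to one, the global maximizer of \eqref{eq:w-las}. When $g$ is not linear, the global maximizer should be replaced by a local maximizer in an $N_1^{-1/2}$-neighborhood. Selection consistency then follows from the second step together with $b_N\gg\sqrt{\log s_N/N_1}$.

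For the second step, we need the score and Hessian of $\ell_{\rw}$ at $\btheta_{\rt}$, restricted to $\cA$. Conditioning on the full data and using $\Exp(\delta_i\mid\x_i,y_i)=\pi(\x_i,y_i)$ shows the score is unbiased. The score's variance then splits into a full-data part, of order $N_1\M_{(\cA)}$, and a sampling part. The sampling part is
\[
\sum_{i}(1-y_i)\frac{1-\rho\varphi(\x_i)}{\rho\varphi(\x_i)}p^2(\x_i;\btheta_{\rt})\dg_{(\cA)}^{\otimes2}\approx N\frac{e^{2\alpha_{\rt}}}{\rho}\Exp\{e^{2f}\varphi^{-1}\dg_{(\cA)}^{\otimes2}\},
\]
which is $N_1c_N\,\Exp\{e^{2f}\varphi^{-1}\dg_{(\cA)}^{\otimes2}\}/\Exp(e^f)$ because $N_1\approx Ne^{\alpha_{\rt}}\Exp(e^{f})$. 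Assumptions~\ref{asm:a3} and~\ref{asm:a4} control these moments. Dividing by $N_1$ gives $\M_{\rw(\cA)}/\Exp(e^f)$. Likewise, $-N_1^{-1}\ddot\ell_{\rw(\cA)}$ converges to $\M_{(\cA)}/\Exp(e^f)$, uniformly in operator norm over a $\sqrt{s_N/N_1}$-ball. To get this uniformity as $s_N$ diverges, we use the bounded-$B$ event of Assumption~\ref{asm:a6} together with matrix Bernstein inequalities. These inequalities produce the $\sqrt{s_N}\log N_1/N_1$ term in \eqref{eq:hd-cond}. The condition on $\alpha_{\rt}$ ensures $N_1\gtrsim\max\{s_N,\log p_N\}$, so the concentration is meaningful. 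The penalty gradient on $\cA$ is bounded by $\lambda_N\sqrt{s_N}/(c_0b_N)^{\gamma}$ with probability tending to one, by Assumption~\ref{asm:a6}. Relative to $N_1$ this is negligible, and under the extra condition $\sqrt{s_N}\lambda_N/\sqrt{N_1}\to0$ it does not affect the $\sqrt{N_1}$ limit. A Taylor expansion then gives
\[
\sqrt{N_1}(\ttheta_{(\cA)}-\btheta_{\rt(\cA)})=\Exp(e^f)\M_{(\cA)}^{-1}N_1^{-1/2}\dot\ell_{\rw(\cA)}+o_P(1)
\]
along any fixed unit direction $\bm e_N$. We finish the normality claim with a Lindeberg CLT for the triangular array of independent summands $\bm e_N\tp\V^{-1/2}\cdots$, using the two-stage (data, then sampling) randomness. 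The Lyapunov moment is controlled by the bound $B/\varphi<C$. The resulting variance is $\Exp(e^f)\M_{(\cA)}^{-1}\M_{\rw(\cA)}\M_{(\cA)}^{-1}=\V_{\rw(\cA)}$, as stated.

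The main obstacle is the third step: for every $j\in\cAc$ we must show
\[
|\partial\ell_{\rw}(\ttheta)/\partial\beta_{(j)}|<\lambda_N/|\hat\beta_{\rp(j)}|^{\gamma}.
\]
We would expand the left side around $\btheta_{\rt}$, which gives the score component plus a Hessian cross term of size $N_1\cdot O_P(\sqrt{s_N/N_1})$. The score components are sums of bounded, mean-zero terms with variance of order $N_1$. A Bernstein inequality combined with a union bound over the $p_N$ inactive coordinates gives $\max_{j\in\cAc}|\text{score}_j|=O_P(\sqrt{N_1\log p_N})$. The cross term needs care: we would bound it via the row-wise bounded Hessian entries, again conditional on the bounded-$B$ event. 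The right side is at least $\lambda_Nr_N^{\gamma}$ in order, and the second term of \eqref{eq:hd-cond} makes the score part negligible against it. Handling the cross term uniformly in $j$ without picking up an additional $\sqrt{s_N}$ factor is the delicate point. We would first try to absorb it using the $\sqrt{s_N}\max\{\log N_1,\lambda_N/b_N^{\gamma+1}\}/N_1$ term. Strict concavity of the logistic log-likelihood on the selected support then gives uniqueness, which completes the proof.
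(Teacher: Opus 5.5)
Your architecture is the same as the paper's proof for diverging $p_N$: an oracle estimator on $\cA$, KKT verification on $\cAc$ using Bernstein plus a union bound for the inactive scores against $\lambda_N r_N^{\gamma}$ (via the second term of \eqref{eq:hd-cond}), and a Lindeberg CLT. The paper treats fixed $p$ separately by the Geyer/Fu--Knight argmin route, which your unified argument would subsume. Your variance calculation giving $\V_{\rw(\cA)}$ is correct. However, the step you flag as delicate is a genuine gap, and the fix you suggest cannot work.

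With only the $\ell_2$ rate, the cross term $\sum_i(\delta_i/\pi_i)\phi_i\dg_{(j)}\dg_{(\cA)}\tp(\ttheta_{(\cA)}-\btheta_{\rt(\cA)})$ is of order $N_1\sqrt{s_N/N_1}=\sqrt{s_N N_1}$. But \eqref{eq:hd-cond} only gives $\sqrt{N_1\log p_N}=o(\lambda_N r_N^{\gamma})$. The third term of \eqref{eq:hd-cond} does not involve $\lambda_N r_N^{\gamma}$ at all, so it cannot absorb $\sqrt{s_N N_1}/(\lambda_N r_N^{\gamma})$ when $s_N\gg\log p_N$. The same defect affects your selection claim on $\cA$. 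The assumption gives only $b_N\gg\sqrt{\log s_N/N_1}$, so an $\ell_2$ error of order $\sqrt{s_N/N_1}$ does not rule out zeroing an active coefficient. Your $\ell_2$ rate also drops the penalty bias $\lambda_N\sqrt{s_N}/(N_1 b_N^{\gamma})$, which under the assumptions of part 1 is only known to be $o(b_N)$.

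The missing ingredient is a sup-norm rate for the oracle error:
$\|\tu_{N(\cA)}\|_\infty\lesssim\sqrt{\log s_N}+\lambda_N/(a_N b_N^{\gamma})$, where $\tu_N=a_N(\htheta_{\rw}-\btheta_{\rt})$ and $a_N\asymp\sqrt{N_1}$.

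The paper gets it in three steps:
\begin{enumerate}
\item It uses the explicit quadratic surrogate \eqref{eq:ipw-theta-star}, $\btheta^{*}_{(\cA)}-\btheta_{\rt(\cA)}=(a_N^2\bH_{\rw})^{-1}\{\sum_i\eeta_{i(\cA)}-\lambda_N\bm{\xi}\}$.
\item It combines this with the coordinatewise Bernstein bound \eqref{eq:ipw-infnorm} and $\ell_\infty$-operator-norm control of $\bH_{\rw}^{-1}$.
\item It transfers the bound to $\htheta_{\rw(\cA)}$ via \eqref{eq:ipw-theta}.
\end{enumerate}
The first and third terms of \eqref{eq:hd-cond} then make this sup-norm error $o(b_N)$, which gives sign consistency on $\cA$ directly.

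For the cross term, the paper does not use Cauchy--Schwarz. It bounds each summand through the envelope:
$|a_N^{-2}\sum_i(\delta_i/\pi_i)\phi_i\dg_{(j)}\dg_{(\cA)}\tp\tu_{N(\cA)}|\lesssim a_N^{-2}\sum_i(\delta_i/\pi_i)\phi_i B(\x_i)\,\|\tu_{N(\cA)}\|_\infty\lesssim\|\tu_{N(\cA)}\|_\infty$.
So on the $a_N$ scale the cross term is of order $\sqrt{\log s_N}$ (plus the bias piece), not $\sqrt{s_N}$. Since $\sqrt{\log s_N}\le\sqrt{\log p_N}$, its leading part is dominated by $\lambda_N r_N^{\gamma}/a_N$ through the second term of \eqref{eq:hd-cond}.

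This envelope step reads Assumption~\ref{asm:a1} as letting $B(\x_i)$ dominate $\|\dg_{(\cA)}\|_1$. That reading is exactly what removes the $\sqrt{s_N}$ factor you were worried about. Without it you would need an additional condition, for example $s_N\lesssim\log p_N$, or a bound on the $\ell_1$ norms of the rows of the cross-information block.
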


\begin{corollary}\label{cor:ipw-fixp}
  If $s_{N}=s$ and $p_{N}=p$ are fixed, $\min\{|\beta_{\rt(j)}|:j\in\cA\}>0$, 
  Theorem~\ref{thm:asym-ipw} holds under
  Assumptions~\ref{asm:a1}-\ref{asm:a3},~\ref{asm:a4} and 
  $\lambda_N/\sqrt{N_1}\to0$, $\hat{\bbeta}_{\rp}$ is a
  consistent pilot estimator such that
  $\lambda_N/(\sqrt{N_1}|\hat{\beta}_{\rp(j)}|^{\gamma})\cvp\infty$ for $j\in\cAc$. 
  The asymptotic normality becomes
  $\sqrt{N_1}\V_{\rw(\cA)}^{-1/2}(\htheta_{\rw(\cA)}^{\radp}-\btheta_{\rt(\cA)})\cvd
  \Nor(\0,\I)$. 
\end{corollary}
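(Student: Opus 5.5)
The plan is to check that, when $s$ and $p$ are fixed, every condition of Theorem~\ref{thm:asym-ipw} either holds automatically or can be replaced by the corollary's hypotheses. Alternatively, one can rerun the argument directly in the fixed-dimensional setting, in the style of \cite{zou2006adaptive}. I would take the direct route, since Assumption~\ref{asm:a6} contains a uniform boundedness condition on $B(\x_i)$ that we do not want to impose.

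Let $\bu=\sqrt{N_1}(\btheta-\btheta_{\rt})$, and define the localized objective
\[
\Gamma_N(\bu)=\ell_{\rw}(\btheta_{\rt}+\bu/\sqrt{N_1})-\ell_{\rw}(\btheta_{\rt})-\lambda_N\sumjp\frac{|\beta_{\rt(j)}+u_{(j+1)}/\sqrt{N_1}|-|\beta_{\rt(j)}|}{|\hat{\beta}_{\rp(j)}|^{\gamma}},
\]
where $\ell_{\rw}$ is the IPW log-likelihood in \eqref{eq:w-las}. The unpenalized IPW estimator (without variable selection) has already been analysed under Assumptions~\ref{asm:a1}--\ref{asm:a3} and~\ref{asm:a4}, following \cite{wang2021nonuniform}. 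From that analysis I expect the expansion
\[
\ell_{\rw}(\btheta_{\rt}+\bu/\sqrt{N_1})-\ell_{\rw}(\btheta_{\rt})=\bu\tp\bm{Z}_N-\tfrac12\bu\tp\M\bu\,\Exp\{e^{f}\}^{-1}+\op,
\]
holding pointwise in $\bu$. Here $\bm{Z}_N=N_1^{-1/2}\dot\ell_{\rw}(\btheta_{\rt})\cvd\Nor\big(\0,\Exp\{e^{f}\}^{-1}(\M+c\,\Exp\{e^{2f}\varphi^{-1}\dg^{\otimes2}\})\big)$. This follows from a Lindeberg CLT for the IPW score, using Assumption~\ref{asm:a4}, together with a law of large numbers for the weighted Hessian, using the third-derivative bound $B(\x)$. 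The convention is that the normalizing constants are arranged so that the limit yields $\V_{\rw(\cA)}$ after sandwiching.

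Next, handle the penalty term coordinate by coordinate. For $j\in\cA$, consistency of $\hat{\bbeta}_{\rp}$ gives $|\hat\beta_{\rp(j)}|^{-\gamma}\cvp|\beta_{\rt(j)}|^{-\gamma}$, and $\sqrt{N_1}\{|\beta_{\rt(j)}+u/\sqrt{N_1}|-|\beta_{\rt(j)}|\}\to u\,\sgn(\beta_{\rt(j)})$. Hence that contribution is $O_P(\lambda_N/\sqrt{N_1})=\op$. For $j\in\cAc$, the term equals $\{\lambda_N/(\sqrt{N_1}|\hat\beta_{\rp(j)}|^{\gamma})\}|u_{(j+1)}|$, which tends to $\infty$ in probability unless $u_{(j+1)}=0$. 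So $\Gamma_N(\bu)$ converges in finite-dimensional distributions to a limit $\Gamma(\bu)$ that is a quadratic in $\bu_{(\cA)}$ and equals $-\infty$ whenever $\bu_{(\cAc)}\ne\0$. Since $-\Gamma_N$ is convex, the argmax continuity theorem for convex processes (Geyer; Hjort--Pollard) gives $\hat\bu_{(\cAc)}\cvp\0$ and $\hat\bu_{(\cA)}\cvd\M_{(\cA)}^{-1}\bm{Z}_{(\cA)}$, suitably scaled. The covariance of this limit is exactly $\V_{\rw(\cA)}$, which is the asymptotic normality claim with $\bm{e}_N$ replaced by the full identity.

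For selection consistency, the argument is as follows. Part of it, $\Pr(\cA\subset\hat\cA_{\rw})\to1$, follows from $\sqrt{N_1}$-consistency together with $\beta_{\rt(j)}\ne0$. For the reverse inclusion, take $j\in\cAc$ and suppose $\hat\beta_{\rw(j)}^{\radp}\neq0$. The KKT condition then reads
\[
N_1^{-1/2}\,\partial_{\beta_{(j)}}\ell_{\rw}(\htheta_{\rw}^{\radp})=\pm\frac{\lambda_N}{\sqrt{N_1}|\hat\beta_{\rp(j)}|^{\gamma}}.
\]
The left side is $O_P(1)$ by the Taylor expansion around $\btheta_{\rt}$ and $\sqrt{N_1}$-consistency, while the right side diverges by hypothesis. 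Therefore this event has probability tending to zero.

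The main obstacle is the quadratic expansion of the IPW log-likelihood at rate $N_1$ rather than $N$. One has to control the remainder $N_1^{-1}\sum_i \pi_i^{-1}\delta_i\,\cdots$ with only moment conditions available, because $\alpha_{\rt}\to-\infty$. I would handle it by conditioning on the full data and using Assumption~\ref{asm:a3} ($e^{\alpha_{\rt}}/\rho\to c$) to bound the conditional variance of the subsampling terms by $c\,\Exp\{e^{2f}\varphi^{-1}B^2\}<\infty$. This is the same computation used for the unpenalized estimator, which I would cite rather than redo.
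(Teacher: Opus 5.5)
Your proposal is correct and follows essentially the same route as the paper's fixed-dimension proof. The paper localizes at $a_N=\sqrt{Ne^{\alpha_{\rt}}}$, which differs from your $\sqrt{N_1}$ only by the constant $\Exp^{1/2}\{e^{f(\x;\bbeta_{\rt})}\}$, imports the score CLT, Hessian limit and remainder bounds from \cite{wang2021nonuniform}, treats the adaptive penalty coordinatewise and applies the Geyer/Fu--Knight argmin result; it then obtains selection consistency from the KKT condition, pitting an $O_P(1)$ gradient against a diverging weight. One caveat, shared with the paper: the convexity of $-\Gamma_N$ you invoke holds for linear $g$ (logistic regression) but not for a general smooth $f(\x;\bbeta)$, where the argmin step would also need uniform convergence on compacts and tightness of $\hat{\bu}$.
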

\begin{remark}\label{rm:rho1}
  Theorem~\ref{thm:asym-ipw} shows that the estimation efficiency of $\htheta_{\rw(\cA)}^{\radp}$ is predominantly
  determined by the number of ones instead of the full data size.
  The term $c\V_{\mathrm{sub}(\cA)}$ is the variance inflation due to
  subsampling. The full data adaptive lasso in
\eqref{eq:f-las} corresponds to the scenario with $\rho=1$ and
$\varphi(\x)=1$, for which $c=\lim_{N\to\infty} e^{\alpha_{\rt}}/\rho=0$.
  Intuitively, $c$ can be interpreted as the imbalance rate in the subsample. 
    Theorem~\ref{thm:asym-ipw} requires $c<\infty$. If $c=\infty$, the
    asymptotic normality still holds but becomes 
  $\sqrt{\rho
  N}\bm{e}_{N}\tp\V_{\mathrm{sub}(\cA)}^{-1/2}(\htheta_{\rw(\cA)}^{\radp}-\btheta_{\rt(\cA)})
  \cvd\Nor(0,1)$. However, Algorithm~\ref{alg:poi} is not recommended in this
  case. It's recommended to subsample both zeros and ones for better estimation
  efficiency in this case.
\end{remark}

Theorem~\ref{thm:asym-ipw} also provides guidance on choosing the sampling rate
$\rho$. If we want to avoid any asymptotic information loss and have sufficient
computational resources, we set $\rho \gg N_1/N_0$, which corresponds to $c=0$
in Theorem~\ref{thm:asym-ipw}.  This includes sufficient zeros so that the
subsampling does not reduce the estimation efficiency compared to using the full
data while still reducing the computational cost. In this case, the function
$\varphi(\x)$ does not appear in the asymptotic variance and does not impact the
estimation efficiency, and we can simply use uniform sampling, i.e.,
$\varphi(\x)=1$.  If computational resources are so limited that we have to
choose a $\rho$ that is much smaller than $N_1/N_0$, then
Algorithm~\ref{alg:poi} is not recommended as pointed out in
Remark~\ref{rm:rho1}, because this corresponds to $c=\infty$.  A practical
scenario is when we want to balance estimation and computational efficiency, and
available computational resources allow us to include all ones and a portion of
zeros. We should set $\rho$ at the same order of $N_1/N_0$, which corresponds
to $c\in(0,\infty)$ in Theorem~\ref{thm:asym-ipw}.  In this scenario we set the
number of zeros in the subsample as a constant multiple of the number of ones
and obtain a relatively balanced subsample.  The function $\varphi(\x)$ controls
the information loss due to subsampling. We discuss how to minimize the
information loss by choosing optimal $\varphi(\x)$ in the following.

We derive optimal functions, where $\varphi_{\mmse}^{\radp}(\x)$ corresponds to the A-optimality
criterion \citep{pukelsheim2006optimal} and $\varphi_{\mvc}^{\radp}(\x)$ corresponds to the L-optimality
criterion \citep{pukelsheim2006optimal} in design of experiments.
  Here, the A-optimality minimizes the trace of the asymptotic variance of
  $\htheta_{\rw(\cA)}^{\radp}$; 
  the L-optimality focuses on the asymptotic variance of a linearly transformed estimator
 $\M_{(\cA)}\htheta_{\rw(\cA)}^{\radp}$, which is proportional to
 $\M_{\rw(\cA)}$. The A-optimality criterion has a more direct interpretation, while an
 advantage of the L-optimality criterion is that the resulting optimal
 function is often faster to calculate.

\begin{proposition}
\label{prop:optlas}
  Under the constraints that $\Exp\{\varphi(\x)\}=1$ and a given $\rho$ at
  the same order of $e^{\alpha_{\rt}}$,
the A-optimal function that minimizes $\text{tr}(\V_{\rw(\cA)})$ is 
\begin{equation}
\label{eq:optA}
\varphi_{\mmse}^{\radp}(\x)=\frac{p(\x;\btheta_{\rt})\|\M_{(\cA)}^{-1}\dg_{(\cA)}(\x;\btheta_{\rt})\|}{
\Exp\left\{p(\x;\btheta_{\rt})\|\M_{(\cA)}^{-1}\dg_{(\cA)}(\x;\btheta_{\rt})\right\}}.
\end{equation}
The L-optimal function that minimizes $\text{tr}(\M_{\rw(\cA)})$ is 
\begin{equation}
\label{eq:optL}
\varphi_{\mvc}^{\radp}(\x)=\frac{p(\x;\btheta_{\rt})\|\dg_{(\cA)}(\x;\btheta_{\rt})\|}{
\Exp\left\{p(\x;\btheta_{\rt})\|\dg_{(\cA)}(\x;\btheta_{\rt})\right\}}.
\end{equation}
\end{proposition}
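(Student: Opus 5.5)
Using Theorem~\ref{thm:asym-ipw}, I will reduce both optimality problems to minimizing a linear functional of $1/\varphi$ under the constraint $\Exp\{\varphi(\x)\}=1$, and then apply the Cauchy--Schwarz inequality. Write
$\V_{\rw(\cA)}=\Exp\{e^{f(\x;\bbeta_{\rt})}\}\{\M_{(\cA)}^{-1}+c\V_{\mathrm{sub}(\cA)}\}$. Neither $\M_{(\cA)}$ nor the prefactor $\Exp\{e^{f}\}$ depends on $\varphi$. Since $\rho$ is of the same order as $e^{\alpha_{\rt}}$, the constant $c\in(0,\infty)$ is fixed. Minimizing $\text{tr}(\V_{\rw(\cA)})$ is therefore equivalent to minimizing
\begin{equation*}
\text{tr}(\V_{\mathrm{sub}(\cA)})=\Exp\left\{\frac{e^{2f(\x;\bbeta_{\rt})}}{\varphi(\x)}\|\M_{(\cA)}^{-1}\dg_{(\cA)}(\x;\btheta_{\rt})\|^{2}\right\}.
\end{equation*}
This identity follows from the linearity of trace and expectation and from $\text{tr}(\M^{-1}\bm{v}\bm{v}\tp\M^{-1})=\|\M^{-1}\bm{v}\|^2$. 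For the L-criterion, $\M_{\rw(\cA)}=\M_{(\cA)}\V_{\rw(\cA)}\M_{(\cA)}/\Exp\{e^{f}\}$ equals $\M_{(\cA)}+c\,\Exp\{e^{2f}\varphi^{-1}\dg_{(\cA)}^{\otimes2}\}$. Its trace is minimized by minimizing $\Exp\{e^{2f}\varphi^{-1}\|\dg_{(\cA)}\|^2\}$.

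Both problems have the form: minimize $\Exp\{h^2(\x)/\varphi(\x)\}$ subject to $\Exp\{\varphi(\x)\}=1$ and $\varphi>0$. Here $h(\x)=e^{f(\x;\bbeta_{\rt})}\|\bm{A}\dg_{(\cA)}(\x;\btheta_{\rt})\|$, with $\bm{A}=\M_{(\cA)}^{-1}$ for the A-criterion and $\bm{A}=\I$ for the L-criterion. By Cauchy--Schwarz,
\begin{equation*}
\left[\Exp\{h(\x)\}\right]^2=\left[\Exp\left\{\frac{h(\x)}{\sqrt{\varphi(\x)}}\sqrt{\varphi(\x)}\right\}\right]^2\le \Exp\left\{\frac{h^2(\x)}{\varphi(\x)}\right\}\Exp\{\varphi(\x)\}=\Exp\left\{\frac{h^2(\x)}{\varphi(\x)}\right\}.
\end{equation*}
Equality holds if and only if $\sqrt{\varphi}\propto h/\sqrt{\varphi}$ almost surely, that is, $\varphi\propto h$. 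The normalization then gives $\varphi=h/\Exp(h)$, so the lower bound $[\Exp(h)]^2$ is attained exactly by this choice.

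It remains to match this minimizer to the form stated in \eqref{eq:optA}--\eqref{eq:optL}, which uses $p(\x;\btheta_{\rt})$ in place of $e^{f(\x;\bbeta_{\rt})}$. Note that $p(\x;\btheta_{\rt})=e^{\alpha_{\rt}}e^{f}/(1+e^{\alpha_{\rt}+f})$, and the factor $e^{\alpha_{\rt}}$ cancels in the normalized ratio. The remaining discrepancy $1/(1+e^{\alpha_{\rt}+f})$ tends to $1$ because $\alpha_{\rt}\to-\infty$. Under the moment bounds of Assumption~\ref{asm:a1}, dominated convergence shows that $e^{\alpha_{\rt}}e^{f}$ and $p$ give asymptotically equivalent normalized functions. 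So the $p$-based version is the asymptotic optimizer, exactly as in \cite{wang2021nonuniform}.

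I expect this last identification to be the main delicate step, together with checking that the optimizer is admissible. Admissibility means $\Exp(h)<\infty$ and Assumption~\ref{asm:a4} holding for $\varphi\propto h$. If the theory needs $\varphi$ bounded away from zero, one may truncate $h$ and pass to the limit. The Cauchy--Schwarz step itself is routine.
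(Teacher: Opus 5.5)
Your proposal is correct and follows essentially the same route as the paper: drop the $\varphi$-free terms, use $\text{tr}(\M^{-1}\bm{v}\bm{v}\tp\M^{-1})=\|\M^{-1}\bm{v}\|^2$ to reduce both criteria to minimizing $\Exp\{h^2(\x)/\varphi(\x)\}$, and apply the Cauchy--Schwarz argument of Lemma~\ref{lem:optgeneral}. The only cosmetic difference is the order of steps: the paper replaces $e^{2f}$ by $e^{-2\alpha_{\rt}}p^2\{1+o(1)\}$ before optimizing, whereas you optimize the exact limiting criterion in terms of $e^{f}$ and pass to $p$ afterwards, which is if anything slightly cleaner.
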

The constraints on $\varphi(\x)$ and $\rho$ limit our discussion to
the scenario that $\varphi(\x)$ serves as a key factor controlling the
information loss. Beyond this
scenario $\varphi(\x)$ may be trivial, e.g.,
$\varphi(\x)\equiv\infty$ or $\varphi(\x)\equiv1$.

Unlike the optimal sampling function in \cite{wang2021nonuniform},
$\varphi^{\radp}_{\mmse}(\x)$ (or $\varphi^{\radp}_{\mvc}(\x)$)   
relies only on the active variables. This implies that a first-step pilot
estimator given by the adaptive lasso algorithm can benefit from sparse
estimation methods when calculating optimal probabilities. For example,
employing the standard lasso can effectively eliminate a large number of
inactive variables to facilitate the computation of optimal
$\varphi_{\mmse}^{\radp}(\x)$ and $\varphi^{\radp}_{\mvc}(\x)$.
However, in practice, pilot 
estimators are often obtained from a small subsample size, introducing additional
uncertainty. Therefore, it becomes crucial to exercise caution and be
conservative by over-selecting variables during the first step to prevent
the exclusion of important variables. As a consequence, although
theoretically
$\varphi_{\mmse}^{\radp}(\x)$ and $\varphi^{\radp}_{\mvc}(\x)$ do not depend on
inactive variables, they are affected by inactive variables
in practical implementations.

\subsection{Limitation of scale-dependent optimal functions}\label{sec:limit}

Both $\varphi_{\mmse}^{\radp}(\x)$ and $\varphi^{\radp}_{\mvc}(\x)$ presented in Proposition~\ref{prop:optlas} are scale
dependent, i.e., their values are influenced by the scale of the
covariates. 
In practical implementations, inactive variables may also
influence these two sampling functions, because the true active set is unknown
and has to be replaced by
pilot estimates. The influences of inactive variables on
$\varphi_{\mmse}^{\radp}(\x)$ and $\varphi^{\radp}_{\mvc}(\x)$ are similar to
those on $\varphi^{\rmle}_{\mmse}(\x)$ and $\varphi^{\rmle}_{\mvc}(\x)$, so we use
the latter to illustrate these influences with more details here.
Specifically,
$\varphi^{\rmle}_{\mmse}(\x)$ depends on two terms: $p(\x;\btheta_{\rt})$ and
$\|\M^{-1}\dg(\x;\btheta_{\rt})\|$. The first term $p(\x;\btheta_{\rt})$, called the probability term, represents the
probability of having a case. The
second term $\|\M^{-1}\dg(\x;\btheta_{\rt})\|$, called the leveraging term, and its value depends on the scale of covariates. 

Next, we use a simple logistic regression model to illustrate how scaling
affects $\varphi^{\rmle}_{\mmse}(\x)$ and $\varphi^{\rmle}_{\mvc}(\x)$. 
Without loss of generality, let
$\bbeta_{\rt}=(\bbeta^{\tp}_{\rt(\cA)},\bbeta_{\rt(\cAc)}^{\tp})\tp$, where
$\bbeta_{\rt(\cAc)}=\0$ holds for inactive variables. We further assume
that $\x_{(\cA)}$ and $\x_{(\cAc)}$ are independent and
$\Exp(\x_{(\cA)})=\Exp(\x_{(\cAc)})=\0$. In this scenario,
\begin{align*}
&\M=
\Exp\left\{e^{\x_{\cA}\tp\bbeta_{\rt(\cA)}+\x_{(\cAc)}\tp\bbeta_{\rt(\cAc)}}
\begin{pmatrix}
1 & \x_{(\cA)}\tp & \x_{(\cAc)}\tp\\
\x_{(\cA)} & \x_{(\cA)}\x_{(\cA)}\tp & \x_{(\cA)}\x_{(\cAc)}\tp\\
\x_{(\cAc)} & \x_{(\cAc)}\x_{(\cA)}\tp & \x_{(\cAc)}\x_{(\cAc)}\tp\\
\end{pmatrix}
\right\}\\
&=\begin{pmatrix}
\M_{(\cA)} & \Exp\{e^{\x_{(\cA)}\tp\bbeta_{\rt(\cA)}}(1,\x_{(\cA)}\tp)\tp\}\Exp\{\x_{(\cAc)}\tp\} \\
 \Exp\{\x_{(\cAc)}\}\Exp\{e^{\x_{(\cA)}\tp\bbeta_{\rt(\cA)}}(1,\x_{(\cA)}\tp)\}
 & \Exp\{e^{\x_{(\cA)}\tp\bbeta_{\rt(\cA)}}\}\Exp\{\x_{(\cAc)}\x_{(\cAc)}\tp\} \\
\end{pmatrix}\\
&=\begin{pmatrix}
\Exp\{e^{\x_{(\cA)}\tp\bbeta_{\rt(\cA)}}\} & \Exp\{e^{\x_{(\cA)}\tp\bbeta_{\rt(\cA)}}\x_{(\cA)}\tp\} & \0\\
\Exp\{e^{\x_{(\cA)}\tp\bbeta_{\rt(\cA)}}\x_{(\cA)}\} & \Exp\{e^{\x_{(\cA)}\tp\bbeta_{\rt(\cA)}}\x_{(\cA)}\x_{(\cA)}\tp\} & \0\\
\0 & \0 & \Exp\{e^{\x_{(\cA)}\tp\bbeta_{\rt(\cA)}}\}\Exp\{\x_{(\cAc)}\x_{(\cAc)}\tp\}
\end{pmatrix}.
\end{align*}
If the components of $\x_{(\cAc)}$ are independent,
denoting $\Var(x_{(j)})=\Exp(x_{(j)}^{2})$, $j\in\cAc$,
\begin{equation*}
\varphi^{\rmle}_{\mmse}(\x)\propto
p(\x;\btheta_{\rt})\|\M^{-1}\x\|
=p(\x;\btheta_{\rt})\sqrt{K_{\cA}
+\Exp\{e^{\x_{(\cA)}\tp\bbeta_{\rt}}\}^{-2}\sum_{j\in\cAc}\frac{x_{(j)}^2}{\Var(x_{(j)})^2}},
\end{equation*}
where
$K_{\cA}=\|\M_{(\cA)}^{-1}(1,\x_{(\cA)}\tp)\tp\|^{2}$
is a value depending only on active variables. Similarly,
$\varphi^{\rmle}_{\mvc}$ also has a decomposition without the independences:
\begin{equation*}
\varphi^{\rmle}_{\mvc}(\x)\propto
p(\x;\btheta_{\rt})\|(1,\x\tp)\tp\|
=p(\x;\btheta_{\rt})\sqrt{1+\sum_{j\in\cA}x_{(j)}^2+\sum_{j\in\cAc}x_{(j)}^2}.
\end{equation*}
It is important to note that the probability term $p(\x;\bbeta_{\rt})$ remains
unaffected by the scale of $\x$ because its value depends on
$\alpha_{\rt}+\x\tp\bbeta_{\rt}$ in the logistic regression model. Since
$\beta_{\rt(j)}=0$ for inactive variables, any rescaling of $x_{(j)}$
will not change
$p(\x;\bbeta_{\rt})$. However, the leveraging term is indeed scale-dependent.
For example, if we re-scale all inactive variables $\x_{\cAc}$
to $\tau\x_{\cAc}$, it would lead to changes in the values of the optimal
functions as
\begin{equation*}
\varphi^{\rmle}_{\mmse}(\x)\propto
p(\x;\btheta_{\rt})\sqrt{K_{\cA}
+\frac{1}{\tau^2}\Exp\{e^{\x_{(\cA)}\tp\bbeta_{\rt}}\}^{-2}\sum_{j\in\cAc}\frac{x_{(j)}^2}{\Var(x_{(j)})^2}},
\end{equation*}
\begin{equation*}
\varphi^{\rmle}_{\mvc}(\x)\propto
p(\x;\btheta_{\rt})\sqrt{1+\sum_{j\in\cA}x_{(j)}^2+\tau^2\sum_{j\in\cAc}x_{(j)}^2}.
\end{equation*}
If $\tau$ is extremely small, the contribution of inactive variables to
$\varphi^{\rmle}_{\mmse}(\x)$ would be highly inflated. On the other hand, if
$\tau$ is extremely large, the contribution of inactive variables to
$\varphi^{\rmle}_{\mvc}(\x)$ would be inflated. 
In fact, we have the following limits involving the denominators:
\begin{align*}
   &\varphi^{\rmle}_{\mmse}(\x)
    \to\frac{e^{\x_{(\cA)}\tp\bbeta_{\rt(\cA)}}
      \sqrt{\sum_{j\in\cAc}\frac{x_{(j)}^2}{\Var(x_{(j)})^2}}}
      {\Exp\left\{e^{\x_{(\cA)}\tp\bbeta_{\rt(\cA)}}\right\}
        \Exp\left\{\sqrt{\sum_{j\in\cAc}\frac{x_{(j)}^2}{\Var(x_{(j)})^2}}
      \right\}},\text{ as } \alpha_{\rt}\to-\infty, \tau\to0, \\
   &\varphi^{\rmle}_{\mvc}(\x)\to
      \frac{e^{\x_{(\cA)}\tp\bbeta_{\rt(\cA)}}
      \sqrt{\sum_{j\in\cAc}x_{(j)}^2}}
      {\Exp\left\{e^{\x_{(\cA)}\tp\bbeta_{\rt(\cA)}}\right\}
      \Exp\left\{\sqrt{\sum_{j\in\cAc}x_{(j)}^2}\right\}}, 
      \text{ as }\alpha_{\rt}\to-\infty, \tau\to\infty.
\end{align*}
Thus, the scale of $\x$ has an
effect on both $\varphi^{\rmle}_{\mmse}(\x)$ and $\varphi^{\rmle}_{\mvc}(\x)$,
although in opposing directions.

The scale-dependence of optimal probabilities could mislead the subsampling
procedure, potentially resulting in a poor performance. If a variable
$x_{(j)}$ is inactive, i.e., $\beta_{\rt(j)}=0$, then sampling probabilities
should not highly rely on $x_{(j)}$ as it does not contribute to predicting the
responses. However, for certain scaling of $x_{(j)}$, the optimal sampling
function would be highly affected by this inactive variable. For example, if the
scale of $x_{(j)}$ is very small, the leveraging term of the A-optimal function
$\varphi^{\rmle}_{\mmse}(\x)$ would be dominated by $x_{(j)}$. In this case, the
inclusion of a data point $(\x_i,y_i)$ would be highly determined by the
inactive $j$-th component of $\x_i$'s, though it is more reasonable to determine
sampling probabilities using active variables only. A similar issue exists for
$\varphi^{\rmle}_{\mvc}(\x)$ in the opposite direction; it is dominated by
components of $\x_i$'s with very large magnitudes.
Although $\varphi^{\radp}_{\mmse}(\x)$ and $\varphi^{\radp}_{\mvc}(\x)$ do not
depend on inactive variables, they are still affected by scales of inactive
variables because we need to estimate the active set $\cA$. It is desirable to not exclude important variables and thus we tend to include more variables %
in the pilot estimation. Even for active variables, %
scale-dependency may cause issues as well
because if $x_{(j)}\beta_{\rt(j)}$ is small, it contributes little to the responses. However, $x_{(j)}$ may dominate optimal probabilities if its scale is
inappropriate. %

In more general cases where components of $\x_{(\cAc)}$ are dependent,
theoretical analysis is less transparent and therefore we use a numerical
example to illustrate how scaling may affect optimal probabilities in
Figure~\ref{fig:prbexpr}. A training data set of size $N=250000$ is generated
from two normal distributions such that $\x_{(\cA)}=(x_{(1)},x_{(2)})\tp
\sim\Nor(\0,\I)$
and $\x_{(\cAc)}=(x_{(3)},...,x_{(6)})\tp\sim \Nor(\0,\bSigma)$ with the $(i,j)$th
element of $\bSigma$ being
$\Sigma_{ij}=0.5^{|i-j|}$ for $1\leq i,j\leq 4$. Here, $\x_{(\cA)}$ and
$\x_{(\cAc)}$
are independent. We set $\alpha_{\rt}=-6.5$,
$\bbeta_{\rt}=(1,1,0,...,0)$, and
$g(\x;\btheta)=\alpha+\x\tp\bbeta$. We use
$p(\x;\btheta_{\rt})^2\|\L\tilde{\x}_j\|^2/p(\x;\btheta_{\rt})^2\|\L\x\|^2$
as a measure of the $j$-th element's contribution to sampling probabilities
through the leveraging term, where
$\tilde{\x}_j=(0,0,...,x_{(j)},0,...,0)$, and $\L$ equals $\M^{-1}$ and
  $\I$, for the A- and L-optimal probabilities, respectively.

\begin{figure}[t] %
  \centering 
  \begin{subfigure}{\textwidth}
    \centering
    \includegraphics[width=0.355\textwidth]{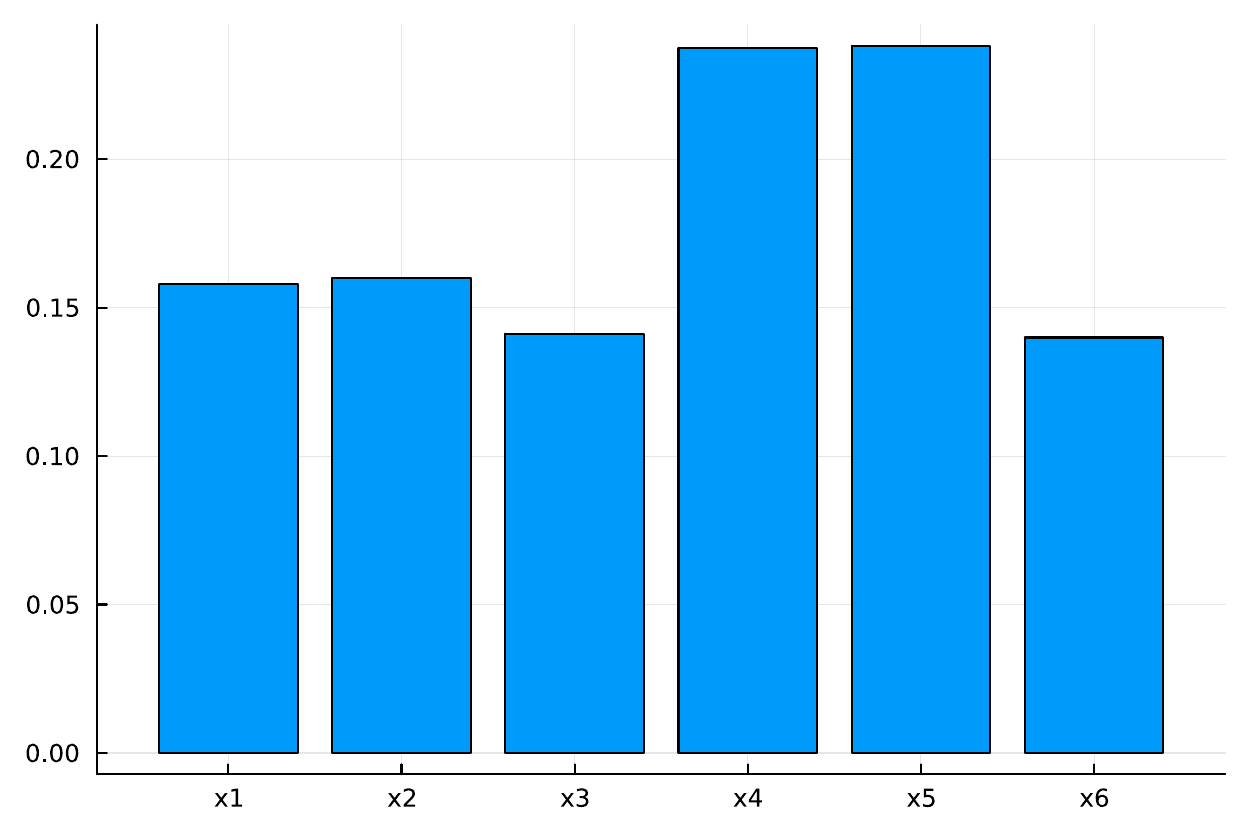}
    \includegraphics[width=0.355\textwidth]{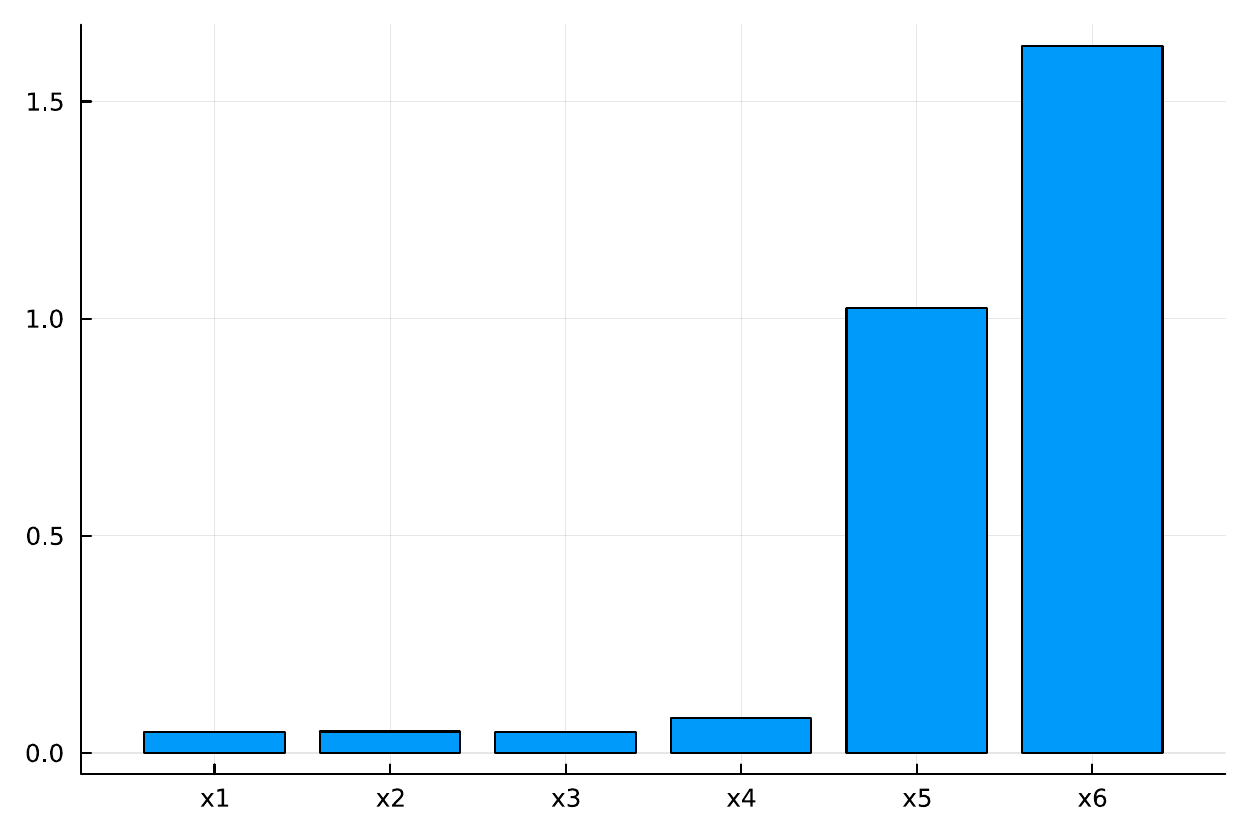}
    \caption{A-optimality probabilities (before and after rescaling)}
  \end{subfigure}
  \begin{subfigure}{\textwidth}
    \centering
    \includegraphics[width=0.355\textwidth]{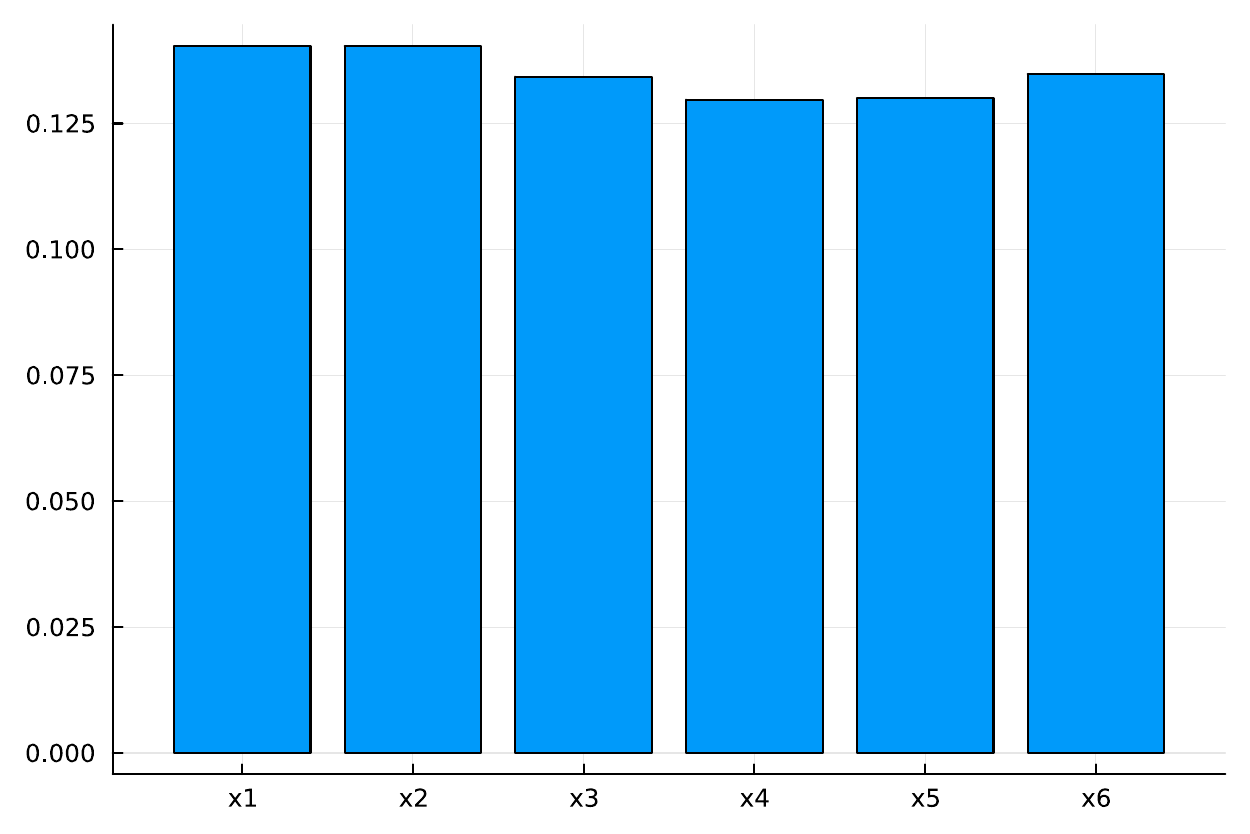}
    \includegraphics[width=0.355\textwidth]{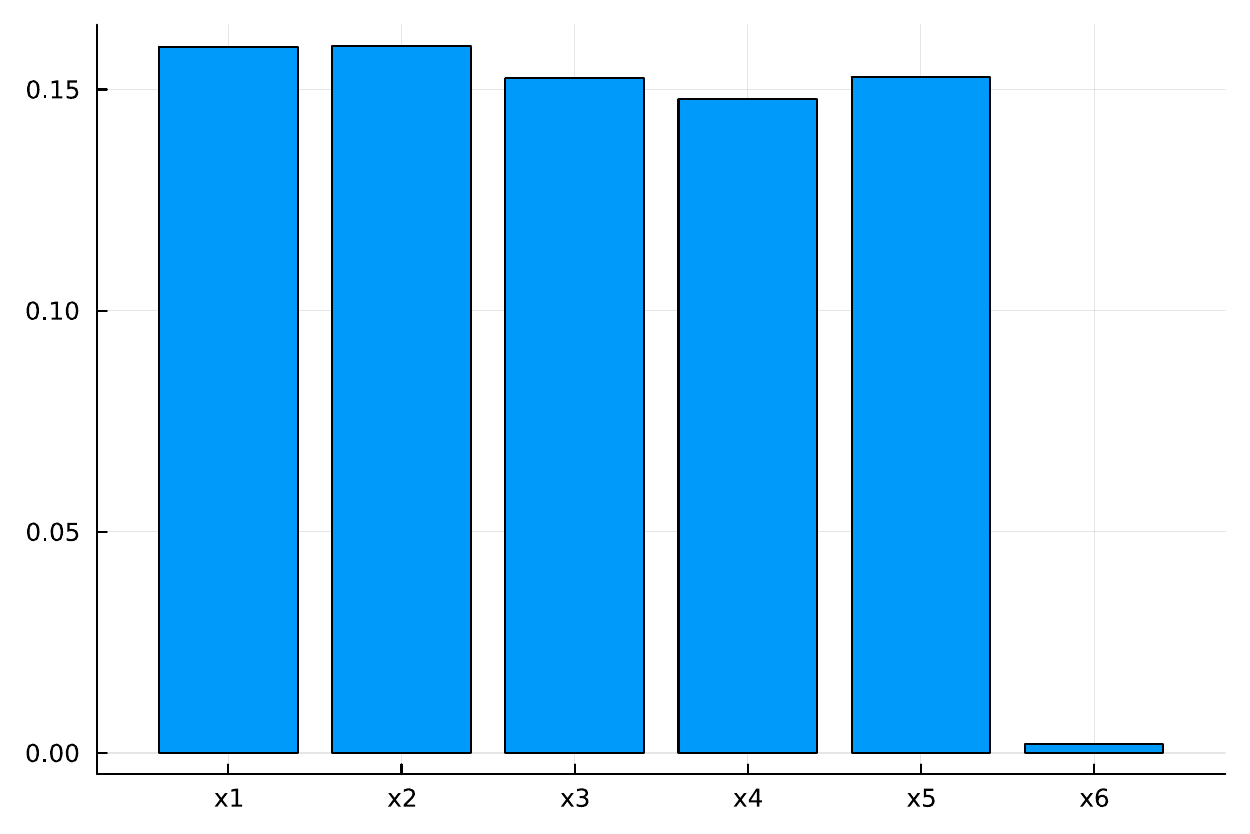}
    \caption{L-optimality probabilities (before and after rescaling)}
  \end{subfigure}
  \caption{Contributions of individual variables to sampling probabilities. The bar charts in the
    left panel present contributions with the
      original scale of $x_{(6)}$; those in the
    right panel present contributions after re-scaling $x_{(6)}$.}
  \label{fig:prbexpr}
\end{figure}

In Figure~\ref{fig:prbexpr}, the left panel illustrates the contributions of
individual variables to the sampling probabilities at the original data scale of
$x_{(6)}$, while the right panel shows the contributions after rescaling:
multiply 0.1 to $x_{(6)}$. It is observed that the contributions of two inactive
variables, $x_{(4)}$ and $x_{(5)}$, are relatively large for
$\varphi^{\rmle}_{\mmse}(\x)$ when $x_{(6)}$ is in the
original scale. Although this is not desirable, the contributions of
active variables $x_{(1)}$ and $x_{(2)}$ are not very small. However, after
re-scaling the inactive variable $x_{(6)}$, the contributions of $x_{(5)}$ and
$x_{(6)}$ dominate the optimal probability
  $\varphi^{\rmle}_{\mmse}(\x)$, which is unfavorable when compared with the
results obtained with the original scale of $x_{(6)}$. On the other hand,
the effects of each variable on $\varphi^{\rmle}_{\mvc}(\x)$ are
  distributed evenly with the original scale of $x_{(6)}$ because the
variances of all variables are the same. Balanced contributions among all
variables are often not suitable for high dimensional data when the dimension of
active variables is low. This is because the contributions of inactive variables
can overshadow those of active variables.
After re-scaling $x_{(6)}$, the
contribution of $x_{(6)}$ to $\varphi^{\rmle}_{\mvc}$ becomes negligible, while
the contributions of other inactive variables remain unchanged. Thus, the issue
of balanced contribution still exists. To overcome this challenge, we propose a
scale-invariant optimal probability in the next section, aiming to mitigate the
adverse effects of scaling dependence.

\subsection{Scale-invariant optimal function}\label{sec:newop}

As previously discussed, scaling-dependent optimal probabilities
may impact the performance of variable selection in practice. To address the
issue, we propose to construct a scale-invariant optimal function
by focusing on the prediction error of an estimator $\htheta$, defined as
\begin{align*}
  \mathrm{MSPE}(\htheta)
  =\Exp_{\x}\left[\left\{p(\x;\htheta)-p(\x;\btheta_{\rt})\right\}^2\right]
  =\int\left\{p(\x;\htheta)-p(\x;\btheta_{\rt})\right\}^2\ud\Pr_{\x},
\end{align*}
where $\Pr_{\x}$ is the distribution measure of $\x$. 
The probability term
$p(\x;\btheta_{\rt})$ involves both the covariates $\x$ and the parameter vector
$\btheta_{\rt}$, and it often does not depend on the scale of $\x$. For example,
in the logistic regression model, the value $p(\x;\btheta_{\rt})$ is only
related to $\x\tp\bbeta_{\rt}$. If we change the scale of $x_{(j)}$, the value
of $\btheta_{\rt}$ would change accordingly under the same data-generating
model and so $p(\x;\btheta_{\rt})$ remains the same. Thus, re-scaling covariates
would not
affect this criterion. In the following, we give an optimal function that minimizes the prediction error.

\begin{theorem}
\label{thm:optprb}
Under the assumptions of Theorem~\ref{thm:asym-ipw}, if the number of
active variables $s_{N}$ is fixed, the prediction error of the IPW
adaptive lasso estimator defined in~\eqref{eq:w-las} satisfies 
\begin{equation}\label{eq:1}
N_1e^{-2\alpha_{\rt}}\mathrm{MSPE}(\htheta^{\radp}_{\rw(\cA)})
\cvd\Exp^{-1}\left\{e^{f(\x;\bbeta_{\rt})}\right\}\W_{(\cA)}\tp\M_{\rw(\cA)}^{1/2}\M_{(\cA)}^{-1}
\bOmega_{(\cA)}\M_{(\cA)}^{-1}\M_{\rw(\cA)}^{1/2}\W_{(\cA)},
\end{equation}
where $\W_{(\cA)}\sim\Nor(\0,\I)$,
and
$\bOmega_{(\cA)}=\Exp\left[e^{2f(\x;\bbeta_{\rt})}\dg_{(\cA)}^{\otimes2}(\x,\btheta_{\rt})\right]$. The
optimal function that minimizes 
the asymptotic mean of the
prediction error in (\ref{eq:1}) is given as
\begin{equation}
\label{eq:optpr}
\varphi_{\mpr}^{\radp}(\x)
=\frac{p(\x;\btheta_{\rt})\|\bOmega_{(\cA)}^{\frac{1}{2}}\M_{(\cA)}^{-1}\dg_{(\cA)}(\x;\btheta_{\rt})\|}
{\Exp\left[p(\x;\btheta_{\rt})\|\bOmega_{(\cA)}^{\frac{1}{2}}\M_{(\cA)}^{-1}\dg_{(\cA)}(\x;\btheta_{\rt})\|\right]}.
\end{equation}
\end{theorem}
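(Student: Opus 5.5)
Two things need to be shown: the distributional limit~\eqref{eq:1}, and the minimizer~\eqref{eq:optpr}. For the limit, I will work on the event $\{\hat{\cA}_{\rw}=\cA\}$, which has probability tending to one by part~1 of Theorem~\ref{thm:asym-ipw}. On that event the inactive coordinates of $\htheta^{\radp}_{\rw}$ are exactly zero, so $p(\x;\htheta^{\radp}_{\rw})$ depends only on $\htheta^{\radp}_{\rw(\cA)}$. I will then Taylor-expand $p(\x;\htheta)-p(\x;\btheta_{\rt})$ around $\btheta_{\rt}$. Its gradient is
\[
p(\x;\btheta)\{1-p(\x;\btheta)\}\dg_{(\cA)}(\x;\btheta)=e^{\alpha_{\rt}}e^{f(\x;\bbeta_{\rt})}\dg_{(\cA)}(\x;\btheta_{\rt})\{1+\oo\},
\]
since $\alpha_{\rt}\to-\infty$. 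With $\bu=\htheta^{\radp}_{\rw(\cA)}-\btheta_{\rt(\cA)}$, this gives
\[
e^{-2\alpha_{\rt}}\mathrm{MSPE}=\bu\tp\bOmega_{(\cA)}\bu\{1+\op\}+R_N .
\]
The remainder $R_N$ is controlled by the bounded third derivatives in Assumption~\ref{asm:a1}. Two points need care here. First, the intercept component also moves, but $e^{\hat\alpha-\alpha_{\rt}}\to1$ because $\hat\alpha-\alpha_{\rt}=O_P(N_1^{-1/2})$. Second, the factor $1-p$ is $1-\oo$ uniformly up to integrable terms. Because $s_N$ is fixed, Theorem~\ref{thm:asym-ipw} part~2 (equivalently Corollary~\ref{cor:ipw-fixp}) yields
\[
\sqrt{N_1}\,\bu\cvd \Exp^{1/2}\{e^{f}\}\M_{(\cA)}^{-1}\M_{\rw(\cA)}^{1/2}\W_{(\cA)} ,
\]
which matches the variance $\V_{\rw(\cA)}$. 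The continuous mapping theorem then gives the quadratic-form limit in~\eqref{eq:1}.

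The main obstacle is making the remainder uniform in $\x$ while the integrand involves the random $\htheta$. My first approach is a mean-value form with an intermediate $\bar\btheta$, bounding
\[
|p(\x;\bar\btheta)(1-p)\dg-e^{\alpha_{\rt}}e^{f}\dg| \le e^{\alpha_{\rt}}\|\bu\|B^2(\x)e^{O(\|\bu\|B(\x))}
\]
and using square integrability of $B$ together with dominated convergence on a shrinking neighborhood. This is where the moment conditions on $e^{f}f$ in Assumption~\ref{asm:a1} enter.

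For optimality, the asymptotic mean of the limit is
\[
\Exp^{-1}\{e^{f}\}\,\mathrm{tr}\bigl(\M_{(\cA)}^{-1}\bOmega_{(\cA)}\M_{(\cA)}^{-1}\M_{\rw(\cA)}\bigr).
\]
From Theorem~\ref{thm:asym-ipw},
\[
\M_{(\cA)}^{-1}\M_{\rw(\cA)}\M_{(\cA)}^{-1}=\M_{(\cA)}^{-1}+c\,\V_{\rs(\cA)},
\]
so the only term depending on $\varphi$ is
\[
c\,\Exp\Bigl\{\varphi^{-1}(\x)\,e^{2f(\x;\bbeta_{\rt})}\bigl\|\bOmega_{(\cA)}^{1/2}\M_{(\cA)}^{-1}\dg_{(\cA)}(\x;\btheta_{\rt})\bigr\|^2\Bigr\}.
\]
Minimizing this subject to $\Exp\{\varphi(\x)\}=1$ is done with Cauchy--Schwarz:
\[
\Exp\{h^2/\varphi\}\,\Exp\{\varphi\}\ge \{\Exp(h)\}^2 ,
\]
with equality iff $\varphi\propto h=e^{f}\|\bOmega_{(\cA)}^{1/2}\M_{(\cA)}^{-1}\dg_{(\cA)}\|$. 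Finally, $e^{f}\propto p(\x;\btheta_{\rt})$ up to a factor $\{1+\oo\}$ (and exactly $p/(1-p)\propto e^{f}$). After normalization this gives~\eqref{eq:optpr}, in the same way Proposition~\ref{prop:optlas} is obtained.
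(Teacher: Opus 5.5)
Your proposal follows the paper's proof. The paper applies the second-order delta method (Theorem~1.12(ii) of Shao) to $h(\btheta)=e^{-2\alpha_{\rt}}\mathrm{MSPE}(\btheta)$, whose gradient vanishes at $\btheta_{\rt}$ and whose Hessian tends to $2\bOmega_{(\cA)}$, and feeds in $a_N(\htheta^{\radp}_{\rw(\cA)}-\btheta_{\rt(\cA)})\cvd\M_{(\cA)}^{-1}\M_{\rw(\cA)}^{1/2}\W_{(\cA)}$. It then replaces $e^{2f}$ by $e^{-2\alpha_{\rt}}p^{2}\{1+\oo\}$ and minimizes the trace of the limiting mean with the same Cauchy--Schwarz lemma. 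Your first-order expansion of $p$ inside the square, your restriction to $\{\hat{\cA}_{\rw}=\cA\}$, and your explicit remainder control are the same argument, written out more carefully. In particular, the paper cites a fixed-function delta method even though $h$ depends on $N$ through $\alpha_{\rt}$.

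One correction concerns the constant in \eqref{eq:1}. Your own display $\sqrt{N_1}\,\bu\cvd\Exp^{1/2}\{e^{f}\}\M_{(\cA)}^{-1}\M_{\rw(\cA)}^{1/2}\W_{(\cA)}$ gives $N_1\bu\tp\bOmega_{(\cA)}\bu\cvd\Exp\{e^{f(\x;\bbeta_{\rt})}\}\,\W_{(\cA)}\tp\M_{\rw(\cA)}^{1/2}\M_{(\cA)}^{-1}\bOmega_{(\cA)}\M_{(\cA)}^{-1}\M_{\rw(\cA)}^{1/2}\W_{(\cA)}$. The prefactor is therefore $\Exp\{e^{f}\}$, not the $\Exp^{-1}\{e^{f}\}$ printed in \eqref{eq:1}, so continuous mapping does not literally return \eqref{eq:1}. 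A constant-$f$ example confirms this: there the direct computation gives $e^{2k}\chi^2_1$. The paper's proof makes the same slip when it passes from $a_N^{2}$ to $N_1=a_N^{2}\Exp\{e^{f}\}$. Because the factor is a positive constant, the minimizer \eqref{eq:optpr} is unaffected.

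Separately, your dominating function $B^{2}(\x)e^{O(\|\bu\|B(\x))}$ is not integrable from $\Exp\{B^{2}(\x)\}<\infty$ alone. You need the $e^{f}$-weighted derivative bounds in Assumption~\ref{asm:a1} to hold uniformly over a neighbourhood of $\bbeta_{\rt}$. The paper's proof also leaves this requirement implicit.
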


We refer to this prediction-oriented criterion as the P-optimality criterion. As we expect, the optimal
function in (\ref{eq:optpr}) is unaffected by the
scale of $\x$ for a class of functions $g$. We present this formally in the
following proposition. 

\begin{proposition}\label{prop:invar}
The $\varphi^{\radp}_{\mpr}(\x)$ is invariant to scale changes of $\x$,
if $g(\x;\btheta)$ satisfies that for every non-singular matrix
$\A$ there exists a non-singular matrix $\B$, such that 
\begin{equation}
\label{eq:invg}
g(\A\x;\B\tp\btheta)=g(\x;\btheta),
\end{equation}
\end{proposition}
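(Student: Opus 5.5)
The plan is to track how each ingredient of $\varphi^{\radp}_{\mpr}(\x)$ in \eqref{eq:optpr} transforms when $\x$ is replaced by $\tx=\A\x$. Under the reparametrized model the true parameter becomes $\tilde\btheta_{\rt}$ with $\btheta_{\rt}=\B\tp\tilde\btheta_{\rt}$, i.e.\ $\tilde\btheta_{\rt}=\B^{-\mathrm{T}}\btheta_{\rt}$. Condition \eqref{eq:invg} gives $g(\A\x;\B\tp\btheta)=g(\x;\btheta)$, equivalently $g(\tx;\tilde\btheta)=g(\x;\B\tp\tilde\btheta)$. The goal is to show $\tilde\varphi(\tx)=\varphi(\x)$ pointwise, where $\tilde\varphi$ denotes the function computed from the transformed data and parameter.

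\textbf{Step 1: the probability term.} From the display above, $g(\tx;\tilde\btheta_{\rt})=g(\x;\btheta_{\rt})$. Hence $p(\tx;\tilde\btheta_{\rt})=p(\x;\btheta_{\rt})$, and likewise $e^{f}$ is unchanged, taking the intercept component of $\B$ to be preserved, as in the linear case.

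\textbf{Step 2: the gradient.} Differentiating $g(\tx;\tilde\btheta)=g(\x;\B\tp\tilde\btheta)$ in $\tilde\btheta$ by the chain rule gives
\begin{equation*}
\dg(\tx;\tilde\btheta_{\rt})=\B\,\dg(\x;\btheta_{\rt}).
\end{equation*}
Plugging this into the definitions of $\M$ and $\bOmega$ yields
\begin{equation*}
\tilde\M=\B\M\B\tp,\qquad \tilde\bOmega=\B\bOmega\B\tp.
\end{equation*}

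\textbf{Step 3: restriction to the active set.} The active set must be handled. Scale changes map zero coefficients to zero, so I will take $\B$ block-structured (diagonal for pure rescaling), so that $\B_{(\cA)}$ is nonsingular and $\dg_{(\cA)}$ transforms by $\B_{(\cA)}$. The same then holds for $\M_{(\cA)}$ and $\bOmega_{(\cA)}$.

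\textbf{Step 4: the key identity.} Write $\bm v=\M_{(\cA)}^{-1}\dg_{(\cA)}$. Then
\begin{equation*}
\tilde{\bm v}=\B^{-\mathrm{T}}\M^{-1}\B^{-1}\B\dg=\B^{-\mathrm{T}}\bm v,
\end{equation*}
dropping the subscript $(\cA)$ for brevity. Therefore
\begin{equation*}
\|\tilde\bOmega^{1/2}\tilde{\bm v}\|^2=\tilde{\bm v}\tp\tilde\bOmega\tilde{\bm v}=\bm v\tp\B^{-1}\B\bOmega\B\tp\B^{-\mathrm{T}}\bm v=\bm v\tp\bOmega\bm v=\|\bOmega^{1/2}\bm v\|^2 .
\end{equation*}
This step is the heart of the argument. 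It also removes any dependence on the choice of square root of $\bOmega$, because only the quadratic form enters.

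\textbf{Step 5: conclusion.} The numerator of \eqref{eq:optpr} is therefore invariant pointwise. The denominator is the expectation of the numerator over the distribution of $\x$, equivalently of $\tx$, so it is invariant as well, and $\tilde\varphi(\tx)=\varphi(\x)$.

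The main obstacle I expect is Step 3: justifying that $\B$ respects the active/inactive block structure. For a general nonsingular $\A$, $\B$ may mix coordinates and change which coefficients are zero. I would handle this by restricting the claim to scale changes ($\A$ diagonal, giving $\B$ diagonal with a unit intercept entry). Alternatively, I would argue that $\cA$ is defined in the transformed parametrization, so that $\B_{(\cA)}$ is the relevant block; this requires the $\cAc$-to-$\cA$ off-diagonal part of $\B$ to vanish.
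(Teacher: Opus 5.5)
Your argument follows the paper's proof. You differentiate the invariance identity to get a linear transformation rule for $\dg$, deduce congruence transformations of $\M$ and $\bOmega$, cancel them in the quadratic form $\dg\tp\M^{-1}\bOmega\M^{-1}\dg$, and note that $p(\x;\btheta_{\rt})$ depends on $\x$ only through the invariant value of $g$.

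There is, however, an inversion slip. With the $\B$ supplied by \eqref{eq:invg}, the transformed truth is $\ttheta_{\rt}=\B\tp\btheta_{\rt}$. The identity you call equivalent is really $g(\tx;\ttheta)=g(\x;(\B\tp)^{-1}\ttheta)$. Hence $\dg(\tx;\ttheta_{\rt})=\B^{-1}\dg(\x;\btheta_{\rt})$, $\tilde{\M}=\B^{-1}\M(\B\tp)^{-1}$ and $\tilde{\bOmega}=\B^{-1}\bOmega(\B\tp)^{-1}$, exactly as in the paper. Your versions are false as written. In the linear case $\B=\mathrm{diag}(1,\A^{-1})$, and your choice $\ttheta_{\rt}=(\B\tp)^{-1}\btheta_{\rt}$ gives $g(\A\x;\ttheta_{\rt})=\alpha_{\rt}+\x\tp(\A\tp)^{2}\bbeta_{\rt}\neq g(\x;\btheta_{\rt})$. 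You use $\B$ consistently where $\B^{-1}$ belongs, and the cancellation in your Step 4 works for any nonsingular matrix, so relabeling repairs everything.

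Incidentally, you do not need the intercept to be preserved. Invariance of $g$ gives $e^{f(\tx;\tbeta_{\rt})}=\kappa\, e^{f(\x;\bbeta_{\rt})}$ with the constant $\kappa=e^{\alpha_{\rt}-\tilde{\alpha}_{\rt}}$. This multiplies $\M$ by $\kappa$ and $\bOmega$ by $\kappa^{2}$, and the factors cancel in $\M^{-1}\bOmega\M^{-1}$.

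Where you go beyond the paper is Step 3. The paper's proof works throughout with the full-dimensional $\M$, $\bOmega$ and $\dg$. It establishes invariance of $p(\x;\btheta_{\rt})\|\bOmega^{1/2}\M^{-1}\dg(\x;\btheta_{\rt})\|$ for an arbitrary nonsingular $\A$, and silently identifies this with the $\cA$-restricted quantity in \eqref{eq:optpr}. You correctly see two problems with that identification. First, the restricted leveraging term transforms through a single block only if $\B$ does not mix $\cAc$-coordinates into $\cA$-coordinates. Second, a general nonsingular $\A$ need not even preserve the active set. Restricting to pure rescalings with diagonal $\B$ settles both, and then the same cancellation holds after restricting to any fixed set of coordinates. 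Be explicit that diagonality of $\B$ for diagonal $\A$ is an additional hypothesis rather than a consequence of \eqref{eq:invg}. It does hold for the linear model and for the paper's neural-network example.
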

\begin{remark}
The condition in \eqref{eq:invg} is not restrictive and it is easy to
satisfy. One example of $g(\x;\btheta)$ that satisfies the condition is
a linear function $g(\x;\btheta)=\alpha+\x\tp\bbeta$, which corresponds to the
logistic regression. The condition is also satisfied by
more complex models. For example, consider an $L$-layer neural network with
\begin{equation*}
g(\x;\bm{W}^1,\bm{W}^2,...,\bm{W}^L,\bm{b}^1,...,\bm{b}^L)
=f^L(f^{L-1}(...f^1(\x\tp\bm{W}^1+\bm{b}^1))\tp\bm{W}+\bm{b}^L),
\end{equation*}
where $\bm{W}^l$ are the weights and
$\bm{b}^l$ are the biases in each layer, $l=1,2,...,L$. If $\x$ is rescaled to
$\A\x$, we can change $\bm{W}^1$ to $(\A^T)^{-1}\bm{W}^1$ so that
\begin{align*}
&g(\bm{A}\x;(\A^T)^{-1}\bm{W}^1,\bm{W}^2,...,\bm{W}^L,\bm{b}^1,...,\bm{b}^L)
=f^L(f^{L-1}(...f^1(\x\tp\bm{W}^1+\bm{b}^1))\tp\bm{W}+\bm{b}^L)\\
&=g(\x;\bm{W}^1,\bm{W}^2,...,\bm{W}^L,\bm{b}^1,...,\bm{b}^L).
\end{align*}
\end{remark}
Next, we again use the simplified logistic model in Section~\ref{sec:limit} to
illustrate the invariance property of $\varphi_{\mpr}^{\radp}(\x)$ to scale
changes. The numerical results are shown in Figure~\ref{fig:prbexpr2}: the left
panel presents variable contributions with the original scale, and the right
panel shows their contributions after rescaling. Regardless of which variables
are rescaled or how they are rescaled, their contributions remain unchanged,
confirming the scale-invariant property of $\varphi_{\mpr}^{\radp}(\x)$.
Furthermore, the previously observed low contributions of active variables to
$\varphi_{\mmse}(\x)$ in Figure~\ref{fig:prbexpr}(a) are alleviated in
Figure~\ref{fig:prbexpr2}, and active variables now contribute more than the
inactive variables.
\begin{figure}[H]
  \centering 
  \begin{subfigure}{0.355\textwidth}
    \includegraphics[width=\textwidth]{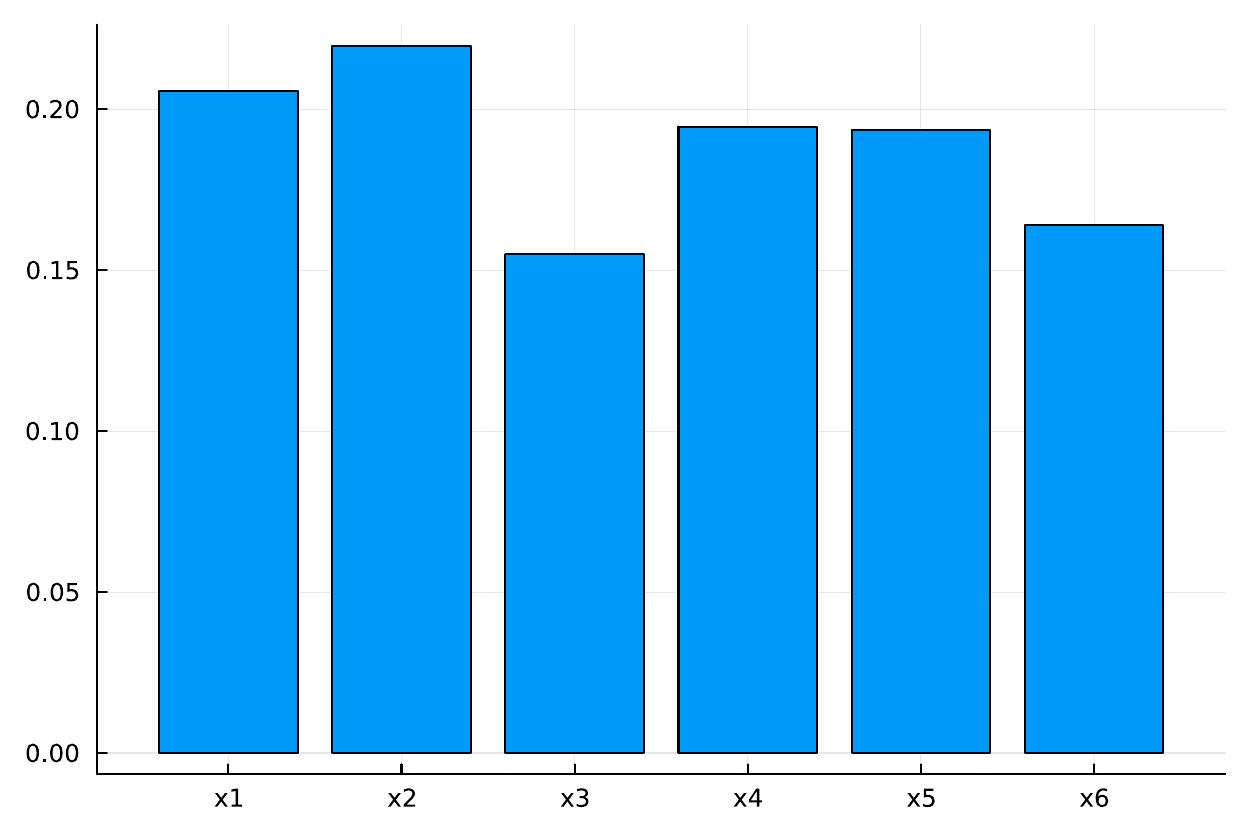}
  \end{subfigure}
  \begin{subfigure}{0.355\textwidth}
    \includegraphics[width=\textwidth]{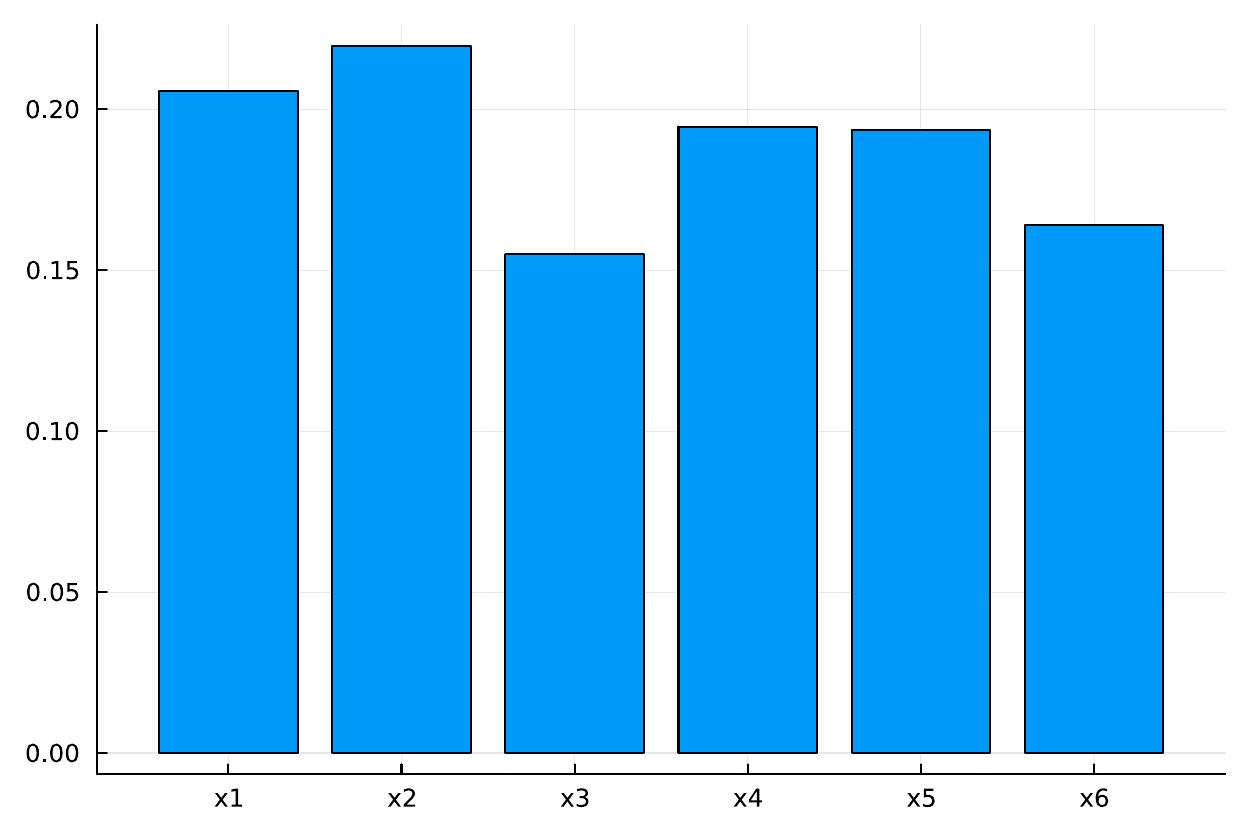}
  \end{subfigure}
  \caption{Bar charts for contributions of individual variables to the P-optimal
    probabilities. The left panel presents the contributions with the original
    scale and the right panel presents the contributions after re-scaling.}
  \label{fig:prbexpr2}
\end{figure}

\section{Penalized MSCL estimator}\label{sec:theory}

The IPW estimator in \eqref{eq:w-las} is not the most efficient estimator, because it assigns smaller
weights for more informative data points with larger sampling probabilities. To improve the estimation efficiency, we propose
 the
penalized MSCL estimator for variable selection: 
\begin{equation}
\label{eq:lik-las}
\htheta_{\ruw}^{\radp}:=\mathop{\arg\max}_{\btheta}\left\{\sum_{i=1}^{N_{\rs}^{*}}[y_i^{\rs}g(\x_i^{\rs};\btheta) -
\log\{1+e^{g(\x_i^{\rs};\btheta)+l_i^{\rs}}\}] - \lambda_N\sumjp\frac{|\beta_{(j)}|}{|\hat{\beta}_{\rp(j)}|^{\gamma}}\right\},
\end{equation}
where $l_i^{\rs}=-\log\left\{\rho\varphi(\x_i^{\rs})\right\}$. 
Here is how the off-set term comes in. Let  $\Delta_{i}$ denote the event that
observation $(y_{i}^{\rs},\x_{i}^{\rs})$ is included in the subsample. Then, we have
\begin{equation}\label{eq:lik-correct}
\Pr(y_{i}^{\rs}=1|\x_{i}^{\rs},\Delta_{i})
=\frac{\Pr(\Delta_{i}|y_{i}^{\rs}=1,\x_{i}^{\rs})\Pr(y_{i}^{\rs}=1|\x_{i}^{\rs})}{\Pr(\Delta_{i}|\x_{i}^{\rs})}
=\frac{e^{g(\x_{i}^{\rs};\btheta)+l_{i}^{\rs}}}{1+e^{g(\x_{i}^{\rs};\btheta)+l_{i}^{\rs}}}.
\end{equation}
Note that \eqref{eq:lik-correct} gives the conditional likelihood of the
selected subsample, and we use it to obtain~\eqref{eq:lik-las}.
More details on deriving \eqref{eq:lik-correct} are in Section~\ref{sec:lik-fixp}. Considering 
The MSCL estimator introduced in \cite{wang2021nonuniform} is defined as the minimizer of the objective function in~\eqref{eq:lik-las}, excluding the penalization term. In this paper, we extend this approach by proposing a penalized MSCL estimator to ensure model sparsity.
We present the oracle properties of the penalized MSCL estimator in the
following theorem.
\begin{theorem}
\label{thm:asym-las}
Under Assumptions~\ref{asm:a1}-\ref{asm:a3},
~\ref{asm:a5}, and~\ref{asm:a6}, 
if $\alpha_{\rt}\geq C\log\{\max\{s_{N},\log p_{N}\}/N\}$, where $C$ is a constant,
the estimator based on MSCL function with adaptive
lasso penalty in~\eqref{eq:lik-las}  
has the following properties:
\begin{enumerate}[1.]
\item Consistency in variable selection: The estimated active set
  $\hat{\cA}_{\ruw}:=\{j:\hat{\beta}_{\ruw(j)}^{\radp}\neq0\}$ satisfies that
  $\lim_{N\to\infty}\Pr(\hat{\cA}_{\ruw}=\cA)=1.$
\item Asymptotic normality: If $(\sqrt{s_{N}}\lambda_{N}/\sqrt{N_{1}})\to0$, the
estimator of the active parameter vector satisfies that for any
$s_{N}$-dimensional vector $\bm{e}_{N}$ with $\|\bm{e}_{N}\|=1$,
  \begin{equation}
  \sqrt{N_1}\bm{e}_{N}\tp\V_{\ruw(\cA)}^{-1/2}(\htheta_{\ruw(\cA)}^{\radp}-\btheta_{\rt(\cA)})\cvd
  \Nor(0,1),
  \end{equation}
  where $\V_{\ruw(\cA)}=\Exp\left\{ e^{f(\x;\bbeta_{\rt})}
  \right\}\bLambda_{\ruw(\cA)}^{-1}$
  and $\bLambda_{\ruw(\cA)}=\Exp\left\{
    \frac{e^{f(\x;\bbeta_{\rt})}\dg_{(\cA)}^{\otimes2}(\x;\btheta_{\rt})}{1+c\varphi^{-1}(\x)e^{f(\x;\bbeta_{\rt})}}
    \right\}$.
\end{enumerate}
\end{theorem}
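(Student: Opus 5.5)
The plan is to follow the same primal-dual witness route used for Theorem~\ref{thm:asym-ipw}, with the IPW log-likelihood replaced by the sampled conditional log-likelihood
\[
\ell_{\ruw}(\btheta)=\sum_{i=1}^{N}\delta_i\Bigl[y_ig(\x_i;\btheta)-\log\{1+e^{g(\x_i;\btheta)+l_i}\}\Bigr],
\]
where $\delta_i$ is the inclusion indicator and $l_i=-\log\{\rho\varphi(\x_i)\}$.

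\emph{Step 1: gradient and Hessian of the oracle problem.}
First I will study the oracle problem restricted to $\cA$. At $\btheta_{\rt}$, identity~\eqref{eq:lik-correct} makes each summand of $\dot\ell_{\ruw}$ a martingale-difference term with conditional mean zero given $\x_i$ and $\delta_i$. Its conditional variance is
\[
\Pr(\delta_i=1\mid\x_i)\,\tilde p_i(1-\tilde p_i)\,\dg^{\otimes2},\qquad \tilde p_i=\frac{e^{g+l_i}}{1+e^{g+l_i}} .
\]
Using $\Pr(\delta_i=1\mid\x_i)=p+(1-p)\rho\varphi$ and $e^{\alpha_{\rt}}/\rho\to c$, this quantity equals
\[
e^{\alpha_{\rt}}\,\frac{e^{f}}{1+c\varphi^{-1}e^{f}}\,\dg^{\otimes2}\{1+\oo\}.
\]
Hence $N^{-1}e^{-\alpha_{\rt}}$ times the score covariance converges to $\bLambda_{\ruw(\cA)}$. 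The Hessian equals the same quantity summed with $\delta_i$, so its normalized limit is also $\bLambda_{\ruw(\cA)}$. This is the information identity for a conditional likelihood, which is why no sandwich appears.

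Dominated convergence for the $c\varphi^{-1}$ term uses Assumption~\ref{asm:a5}. Since $N_1=Ne^{\alpha_{\rt}}\Exp\{e^f\}\{1+\oo\}$, we get $\V_{\ruw(\cA)}=\Exp\{e^f\}\bLambda_{\ruw(\cA)}^{-1}$.

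\emph{Step 2: oracle estimator rate and normality.}
Next I will show that the oracle maximizer $\ttheta_{(\cA)}$ satisfies $\|\ttheta_{(\cA)}-\btheta_{\rt(\cA)}\|=O_P(\sqrt{s_N/N_1})$ plus a bias of order $\sqrt{s_N}\lambda_N/(N_1b_N^{\gamma})$. The argument is the usual convexity argument on a ball around $\btheta_{\rt(\cA)}$. The score is bounded in norm by a Bernstein/union bound, which gives the $\sqrt{\log s_N}$ factor via the event $\max_i B(\x_i)/\varphi(\x_i)<C$. Uniform Hessian control on the ball uses the third-derivative bound and the $\sqrt{s_N}\log N_1/N_1$ part of~\eqref{eq:hd-cond}. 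The penalty on $\cA$ is controlled by $|\hat\beta_{\rp(j)}|>c_0b_N$.

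Asymptotic normality for $\bm e_N\tp$ projections then follows from a Lindeberg CLT for triangular arrays applied to $\bm e_N\tp\bLambda^{-1/2}\dot\ell$. The bias is negligible when $\sqrt{s_N}\lambda_N/\sqrt{N_1}\to0$. Lindeberg holds because the summands are bounded by $C$ with probability tending to one, and the total variance is of order $N_1\to\infty$. The last claim needs $\alpha_{\rt}\ge C\log\{\max(s_N,\log p_N)/N\}$, which keeps $N_1\gtrsim\max(s_N,\log p_N)$.

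\emph{Step 3: the witness satisfies the KKT conditions on $\cAc$.}
Finally I will extend $\ttheta$ by zeros on $\cAc$ and verify strict dual feasibility:
\[
\max_{j\in\cAc}\bigl|\partial_j\ell_{\ruw}(\ttheta)\bigr|<\frac{\lambda_N}{|\hat\beta_{\rp(j)}|^{\gamma}}\quad\text{w.p.}\to1 .
\]
I will write $\partial_j\ell_{\ruw}(\ttheta)$ as $\partial_j\ell_{\ruw}(\btheta_{\rt})$ plus a Taylor remainder. The first term is $O_P(\sqrt{N_1\log p_N})$ by a maximal Bernstein inequality over $p_N$ coordinates. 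The remainder is $O_P(N_1\|\ttheta-\btheta_{\rt}\|)=O_P(\sqrt{s_NN_1}+\dots)$. The right-hand side is at least of order $\lambda_Nr_N^{\gamma}$, so the middle term of~\eqref{eq:hd-cond} makes the inequality hold w.p.$\to1$. By concavity this gives uniqueness and $\hat\cA_{\ruw}\subseteq\cA$. The sign consistency $\hat\cA_{\ruw}\supseteq\cA$ follows from $\|\ttheta-\btheta_{\rt}\|_\infty=o_P(b_N)$, which comes from the first term of~\eqref{eq:hd-cond}.

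\emph{Main obstacle.}
I expect the main difficulty to be Step 3 together with the uniform Hessian bound in Step 2. The offset $l_i$ diverges because $\rho\to0$, so the summands are not identically scaled. The Bernstein variance proxies must be shown to be of order $e^{\alpha_{\rt}}$ rather than $1$, uniformly over the $\sqrt{s_N/N_1}$-ball. My first attempt will use the bound $\tilde p_i\le e^{g+l_i}\le c_N e^{f}/\varphi$ together with the high-probability bound on $B/\varphi$, which reduces every moment to $e^{\alpha_{\rt}}$ times a bounded quantity.
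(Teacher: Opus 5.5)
Your architecture matches the paper's diverging-dimension argument. It uses an oracle solution on $\cA$, Bernstein bounds on the MSCL score with variance proxy of order $e^{\alpha_{\rt}}$, concentration of the restricted Hessian, KKT verification, and a Lindeberg--Feller CLT. Your Step~1 is correct: the information identity for the sampled conditional likelihood means there is no sandwich, so $\V_{\ruw(\cA)}=\Exp\{e^{f(\x;\bbeta_{\rt})}\}\bLambda_{\ruw(\cA)}^{-1}$. The gap is that every rate you actually derive is an $\ell_2$ rate, but the parts of \eqref{eq:hd-cond} you invoke are calibrated to $\ell_\infty$ rates. As a result, Step~3 and your sign-consistency claim do not close as written.

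\textbf{Step~3.} For $j\in\cAc$ you bound the Taylor remainder by $N_1\|\ttheta-\btheta_{\rt}\|=O_P(\sqrt{s_NN_1}+\cdots)$ and then appeal to the middle term of \eqref{eq:hd-cond}. That term only gives $\sqrt{N_1\log p_N}=o(\lambda_Nr_N^{\gamma})$. When $s_N\gg\log p_N$ it does not give $\sqrt{s_NN_1}=o(\lambda_Nr_N^{\gamma})$, and nothing else in Assumption~\ref{asm:a6} does. So strict dual feasibility is not established.

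\textbf{Sign consistency.} Step~2 yields only $\|\ttheta_{(\cA)}-\btheta_{\rt(\cA)}\|=O_P(\sqrt{s_N/N_1})$ plus bias, which bounds the sup-norm only by $\sqrt{s_N/N_1}$. The first term of \eqref{eq:hd-cond} controls $\sqrt{\log s_N/N_1}$ relative to $b_N$, not $\sqrt{s_N/N_1}$. Hence $\|\ttheta-\btheta_{\rt}\|_\infty=o_P(b_N)$ does not follow from what you have shown.

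\textbf{How the paper handles this.} It works with the linearized solution $\btheta^{*}_{(\cA)}-\btheta_{\rt(\cA)}=(a_N^2\bH_{\ruw})^{-1}\{\sum_i\eeta_{i(\cA)}-\lambda_N\bm{\xi}\}$ from \eqref{eq:lik-theta-star}. Combining the coordinatewise Bernstein bound \eqref{eq:las-infnorm} with control of $\|\bH_{\ruw}\|_\infty$, it gets $\|\btheta^{*}_{(\cA)}-\btheta_{\rt(\cA)}\|_\infty\lesssim\sqrt{\log s_N/a_N^2}+\lambda_N/(a_N^2b_N^{\gamma})$, and transfers this sup-norm rate to the oracle estimator. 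It then bounds the cross-Hessian term for $j\in\cAc$ by a row-sum bound times $\|\hat{\bu}_{N(\cA)}\|_\infty$, rather than by Cauchy--Schwarz. That way only $\sqrt{\log s_N}\le\sqrt{\log p_N}$ enters, and the middle term of \eqref{eq:hd-cond} suffices.

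To repair your argument you need this $\ell_1$/$\ell_\infty$ pairing, including a sup-norm (not merely $\ell_2$) bound on $\ttheta_{(\cA)}-\btheta^{*}_{(\cA)}$. The alternative is to add an assumption such as $\sqrt{s_NN_1}=o(\lambda_Nr_N^{\gamma})$, which the theorem does not impose.
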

\begin{corollary}\label{cor:lik-fixp}
  If $s_{N}=s$ and $p_{N}=p$ are fixed, and $\min\{|\beta_{\rt(j)}|:j\in\cA\}>0$, 
  then
  Theorem~\ref{thm:asym-las} holds under
  Assumptions~\ref{asm:a1}-\ref{asm:a3},~\ref{asm:a5} when
  $\lambda_N/\sqrt{N_1}\to0$, and $\hat{\bbeta}_{\rp}$ is a
  consistent pilot estimator such that
  $\lambda_N/(\sqrt{N_1}|\hat{\beta}_{\rp(j)}|^{\gamma})\cvp\infty$ for $j\in\cAc$. 
  The asymptotic normality becomes
  $\sqrt{N_1}\V_{\ruw(\cA)}^{-1/2}(\htheta_{\ruw(\cA)}^{\radp}-\btheta_{\rt(\cA)})\cvd
  \Nor(\0,\I)$. 
\end{corollary}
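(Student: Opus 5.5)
Corollary~\ref{cor:lik-fixp} is the fixed-dimension specialisation of Theorem~\ref{thm:asym-las}. I would prove it by a direct fixed-$p$ argument in the style of \cite{zou2006adaptive}, rather than by checking Assumption~\ref{asm:a6} literally. In particular, the bounded-$B$ requirement in Assumption~\ref{asm:a6} need not hold; fixed $p$ makes it unnecessary.

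\textbf{Step 1: rescaled objective.} Write $\btheta=\btheta_{\rt}+\bu/\sqrt{N_1}$ and set
\[
\Q_N(\bu)=\ell_{\ruw}(\btheta_{\rt}+\bu/\sqrt{N_1})-\ell_{\ruw}(\btheta_{\rt})-\lambda_N\sumjp\frac{|\beta_{\rt(j)}+u_{(j)}/\sqrt{N_1}|-|\beta_{\rt(j)}|}{|\hat\beta_{\rp(j)}|^{\gamma}},
\]
where $\ell_{\ruw}$ is the unpenalised MSCL log-likelihood in \eqref{eq:lik-las}. The unpenalised part is handled exactly as for the MSCL estimator of \cite{wang2021nonuniform}, which needs only Assumptions~\ref{asm:a1}--\ref{asm:a3} and \ref{asm:a5}.
\begin{itemize}
\item The score satisfies $N_1^{-1/2}\dot\ell_{\ruw}(\btheta_{\rt})\cvd\Nor(\0,\Exp\{e^{f}\}^{-1}\bLambda_{\ruw})$. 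This follows from a Lindeberg--Feller CLT conditional on the full data, using the Poisson sampling indicators and the fact that the subsample conditional likelihood \eqref{eq:lik-correct} is correctly specified.
\item The Hessian satisfies $-N_1^{-1}\ddot\ell_{\ruw}(\btheta_{\rt}+\bu/\sqrt{N_1})\cvp\Exp\{e^{f}\}^{-1}\bLambda_{\ruw}$, uniformly on compact sets in $\bu$. The third-derivative bound by $B(\x)$ and the integrability in Assumption~\ref{asm:a5} control the remainder, and $e^{\alpha_{\rt}}/\rho\to c$ produces the factor $1+c\varphi^{-1}e^{f}$.
\end{itemize}

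\textbf{Step 2: penalty terms.}
\begin{itemize}
\item For $j\in\cA$: consistency of $\hat\bbeta_{\rp}$ and $\min_{j\in\cA}|\beta_{\rt(j)}|>0$ give $|\hat\beta_{\rp(j)}|^{-\gamma}\cvp|\beta_{\rt(j)}|^{-\gamma}$. The penalty difference is then $\lambda_N N_1^{-1/2}O_P(1)u_{(j)}\to0$, since $\lambda_N/\sqrt{N_1}\to0$.
\item For $j\in\cAc$: the difference equals $\lambda_N|u_{(j)}|/(\sqrt{N_1}|\hat\beta_{\rp(j)}|^{\gamma})$. By the stated condition this tends to $\infty$ whenever $u_{(j)}\ne0$, and it is $0$ when $u_{(j)}=0$.
\end{itemize}

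\textbf{Step 3: limit of the objective and normality.} By Steps 1--2, $\Q_N(\bu)$ converges in finite-dimensional distributions to
\[
\bu_{(\cA)}\tp\W-\tfrac12\bu_{(\cA)}\tp\Exp\{e^f\}^{-1}\bLambda_{\ruw(\cA)}\bu_{(\cA)}
\]
if $\bu_{(\cAc)}=\0$, and to $-\infty$ otherwise. The objective is concave in $\bu$, since the MSCL log-likelihood is concave in $g$ and the penalty is convex; for nonlinear $f$ the quadratic approximation holds locally. The convexity argmax lemma (Geyer; Knight--Fu), as in \cite{zou2006adaptive}, then gives $\hat\bu_{(\cA)}\cvd\Exp\{e^f\}\bLambda_{\ruw(\cA)}^{-1}\W$ and $\hat\bu_{(\cAc)}\cvd\0$. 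This is the stated normality with covariance $\V_{\ruw(\cA)}$.

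\textbf{Step 4: selection consistency.}
\begin{itemize}
\item For $j\in\cA$, Step 3 gives $\hat\beta_{\ruw(j)}\cvp\beta_{\rt(j)}\ne0$, so $\Pr(\cA\subset\hat\cA_{\ruw})\to1$.
\item For $j\in\cAc$, suppose $\hat\beta_{\ruw(j)}\ne0$. The KKT condition then forces
\[
|N_1^{-1/2}\partial_j\ell_{\ruw}(\hat\btheta)|=\lambda_N/(\sqrt{N_1}|\hat\beta_{\rp(j)}|^{\gamma}).
\]
A Taylor expansion around $\btheta_{\rt}$, using Step 1 and $\sqrt{N_1}$-consistency, shows the left side is $O_P(1)$, while the right side tends to $\infty$ in probability. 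Hence this event has probability tending to $0$.
\end{itemize}

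\textbf{Main obstacle.} The main difficulty is the Hessian convergence in Step 1 under only the integrability condition of Assumption~\ref{asm:a5}, since $\varphi^{-1}$ enters through the offset. I would bound the third-order remainder by $\sum_i\delta_i B(\x_i)\|\bu\|^3/N_1^{3/2}$ and take expectations over the sampling, giving the order $N\rho\Exp\{e^{\alpha_{\rt}+f}\varphi^{-1}B\}/N_1^{3/2}\cdot N_1/ (N\rho)$-type terms, which are $O(N_1^{-1/2})$.
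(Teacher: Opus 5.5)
Your proposal follows the paper's fixed-dimension proof essentially step for step. The paper rescales by $a_N=\sqrt{Ne^{\alpha_{\rt}}}$, which equals your $\sqrt{N_1}$ up to the constant $\Exp^{1/2}\{e^{f(\x;\bbeta_{\rt})}\}$, imports the score CLT, Hessian limit and remainder bounds from \cite{wang2021nonuniform}, shows the active penalty terms vanish and the inactive ones diverge, applies the Geyer/Fu--Knight argmin argument, and proves selection consistency through the KKT condition with an $O_P(1)$ bound on the rescaled score at the estimator.

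One small precision: condition the score CLT on the covariates (as with $\Dn$ in the paper) or not at all, rather than on the responses, because the MSCL score has mean zero given $\x_i$ but not given $y_i$. Your caveat about the lack of concavity for nonlinear $f$ is equally unaddressed in the paper.
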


\begin{remark}\label{rm:rho2}
  Similarly to Remark~\ref{rm:rho1}, Theorem~\ref{thm:asym-las} requires $c<\infty$. When $c=\infty$, the asymptotic normality becomes 
  $
  \sqrt{\rho N}\bm{e}_{N}\tp\check{\bLambda}_{\ruw(\cA)}^{-1/2}
  (\htheta_{\ruw(\cA)}^{\radp}-\btheta_{\rt(\cA)})\cvd\Nor(0,1)
  $,
  with
  $\check{\bLambda}_{\ruw(\cA)}=\Exp\{\varphi(\x)\dg_{(\cA)}^{\otimes2}(\x;\btheta_{\rt})\}$. 
\end{remark}

The above theorem shows that the penalized MSCL estimator has the same
asymptotic variance as the MSCL estimator under the unknown true model.
This indicates that the MSCL estimator is more efficient than the penalized IPW estimator
\cite{wang2021nonuniform}.  We prove this and present the result in the
following theorem.

\begin{theorem}
\label{thm:effi-comp}
  For fixed $s_{N}$, let the asymptotic variances $\V_{\rw(\cA)}$ for $\htheta_{\rw(\cA)}^{\radp}$
in \eqref{eq:w-las} and $\V_{\ruw(\cA)}$ for $\htheta_{\ruw(\cA)}^{\radp}$ in
(\ref{eq:lik-las}) be finite and positive-definite,
i.e., $0<\V_{\rw(\cA)},\V_{\ruw(\cA)}<\infty$.
Then $\V_{\ruw(\cA)}\le\V_{\rw(\cA)}$ in the Loewner ordering, where the equality
holds when $c=0$.
\end{theorem}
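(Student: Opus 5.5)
We compare $\V_{\rw(\cA)}=\Exp\{e^{f}\}\M_{(\cA)}^{-1}\M_{\rw(\cA)}\M_{(\cA)}^{-1}$ from Theorem~\ref{thm:asym-ipw} with $\V_{\ruw(\cA)}=\Exp\{e^{f}\}\bLambda_{\ruw(\cA)}^{-1}$ from Theorem~\ref{thm:asym-las}. Since $\Exp\{e^{f}\}>0$ is a common scalar, it suffices to show
\begin{equation*}
\bLambda_{\ruw(\cA)}^{-1}\le \M_{(\cA)}^{-1}\M_{\rw(\cA)}\M_{(\cA)}^{-1},
\qquad\text{where}\quad \M_{\rw(\cA)}=\M_{(\cA)}+c\,\Exp\bigl\{e^{2f}\varphi^{-1}\dg_{(\cA)}^{\otimes2}\bigr\}.
\end{equation*}
Throughout we abbreviate $f=f(\x;\bbeta_{\rt})$ and $\dg_{(\cA)}=\dg_{(\cA)}(\x;\btheta_{\rt})$. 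The argument is a matrix Cauchy--Schwarz inequality.

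\textbf{Step 1.} Introduce the weight $w(\x)=1+c\varphi^{-1}(\x)e^{f}\ge 1$. Then
\begin{equation*}
\M_{\rw(\cA)}=\Exp\{e^{f}w\,\dg_{(\cA)}^{\otimes2}\},\qquad
\M_{(\cA)}=\Exp\{e^{f}\dg_{(\cA)}^{\otimes2}\},\qquad
\bLambda_{\ruw(\cA)}=\Exp\{e^{f}w^{-1}\dg_{(\cA)}^{\otimes2}\}.
\end{equation*}

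\textbf{Step 2.} Define the random vectors $\bu=e^{f/2}w^{1/2}\dg_{(\cA)}$ and $\bm{v}=e^{f/2}w^{-1/2}\dg_{(\cA)}$. Then $\Exp(\bu\bu\tp)=\M_{\rw(\cA)}$, $\Exp(\bm{v}\bm{v}\tp)=\bLambda_{\ruw(\cA)}$, and $\Exp(\bu\bm{v}\tp)=\M_{(\cA)}$. The matrix Cauchy--Schwarz inequality follows from positive semidefiniteness of the joint second moment of $(\bu\tp,\bm{v}\tp)\tp$ and a Schur complement argument, using that $\bLambda_{\ruw(\cA)}$ is invertible by the positive-definiteness hypothesis. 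It gives
\begin{equation*}
\Exp(\bu\bu\tp)\ge \Exp(\bu\bm{v}\tp)\{\Exp(\bm{v}\bm{v}\tp)\}^{-1}\Exp(\bm{v}\bu\tp),
\quad\text{that is,}\quad
\M_{\rw(\cA)}\ge \M_{(\cA)}\bLambda_{\ruw(\cA)}^{-1}\M_{(\cA)}.
\end{equation*}
Pre- and post-multiplying by the symmetric matrix $\M_{(\cA)}^{-1}$ preserves Loewner order and yields the desired inequality. The finiteness of the expectations involved follows from the assumptions (Assumptions~\ref{asm:a1} and~\ref{asm:a4ab}), together with the hypothesis $0<\V_{\rw(\cA)},\V_{\ruw(\cA)}<\infty$.

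\textbf{Step 3 (equality).} When $c=0$ we have $w\equiv1$, so $\M_{\rw(\cA)}=\bLambda_{\ruw(\cA)}=\M_{(\cA)}$ and both variances equal $\Exp\{e^{f}\}\M_{(\cA)}^{-1}$.

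The only delicate point is Step 2: one has to confirm the integrability needed for the cross moment and verify the Schur complement step, which requires $\bLambda_{\ruw(\cA)}$ to be invertible. Both are supplied by the stated hypotheses, so no real obstacle is expected.
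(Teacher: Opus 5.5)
Your proposal is correct and is essentially the paper's argument: the same weighted vectors $w^{\pm 1/2}e^{f/2}\dg_{(\cA)}$ combined with the matrix Cauchy--Schwarz inequality. The one cosmetic difference is the direction. The paper bounds $\bLambda_{\ruw(\cA)}$ from below by $\M_{(\cA)}\M_{\rw(\cA)}^{-1}\M_{(\cA)}$ and then inverts. You bound $\M_{\rw(\cA)}$ from below and conjugate by $\M_{(\cA)}^{-1}$, which avoids needing the order-reversing property of matrix inversion.
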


Theorem~\ref{thm:effi-comp} shows that $\htheta_{\ruw}^{\radp}$ is statistically
more efficient than $\htheta_{\rw}^{\radp}$.
They can be equally efficient when $c=0$, which means that the number of zeros
in the subsample is sufficiently large. In this scenario, a subsample estimator
is as efficient as the full data estimator, and a better subsampling design is
not necessary. When the subsampling probability matters, the MSCL estimator is
recommended for better estimation efficiency.

The MSCL estimator $\htheta_{\ruw(\cA)}^{\radp}$ is not only more efficient than
the IPW estimator $\htheta_{\rw(\cA)}^{\radp}$, but also optimal among a class
of asymptotically unbiased estimators based on the selected subsample.
We prove that
$\htheta_{\ruw(\cA)}^{\radp}$ achieves the lower bound of asymptotic variances
for a class of subsample estimators in the following theorem.

\begin{theorem}
\label{thm:cr}
Denote $\X=(\x_1,...,\x_N)$ as the full-data design matrix and $\Ds$ as a
subsample. For fixed $s_{N}$,
consider a class of subsample estimators under the true model with the following
asymptotic representation:
\begin{equation}\label{eq:crtheta}
\htheta_{U(\cA)}=\U_{(\cA)}(\btheta_{\rt};\Ds)
+o_P\left(\frac{1}{\sqrt{N_1}}\right),
\end{equation}
where $\U_{(\cA)}(\btheta_{\rt};\Ds)$ satisfies that
$\Exp\{\U_{(\cA)}(\btheta_{\rt};\Ds)|\X\}=\btheta_{\rt(\cA)}$,
$N_1\Var\{\U_{(\cA)}(\btheta_{\rt};\Ds)\} \cvp\V_{U(\cA)}$, and
$\Exp\left\{\partial\U_{(\cA)}(\btheta_{\rt};\Ds)/\partial\btheta_{(\cA)}\tp\Big|\X\right\}=\0$. 
Then the asymptotic variance $\V_{U(\cA)}$ satisfies that
$\V_{U(\cA)}\geq\V_{\ruw(\cA)}$.
\end{theorem}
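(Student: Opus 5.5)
The plan is a Cramér--Rao type argument conditional on the full-data design $\X$, using the sampled conditional likelihood \eqref{eq:lik-correct} as the relevant model for $\Ds$. Given $\X$ and the inclusion events, the subsample responses are independent Bernoulli with success probabilities $p_i^{\rs}(\btheta)=e^{g(\x_i^{\rs};\btheta)+l_i^{\rs}}/\{1+e^{g(\x_i^{\rs};\btheta)+l_i^{\rs}}\}$. Write $\dot\ell_{(\cA)}(\btheta;\Ds)=\sum_{i=1}^{N_{\rs}^*}\{y_i^{\rs}-p_i^{\rs}(\btheta)\}\dg_{(\cA)}(\x_i^{\rs};\btheta)$ for the active-part score of this conditional log-likelihood under the true model. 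It has conditional mean zero and conditional variance
\begin{equation*}
\bm{J}_N=\sum_{i=1}^{N_{\rs}^*}p_i^{\rs}(1-p_i^{\rs})\dg_{(\cA)}^{\otimes2}(\x_i^{\rs};\btheta_{\rt}).
\end{equation*}
The first step is to show $N_1^{-1}\bm{J}_N\cvp\V_{\ruw(\cA)}^{-1}=\Exp^{-1}\{e^{f}\}\bLambda_{\ruw(\cA)}$. This is exactly the Hessian limit already established in the proof of Theorem~\ref{thm:asym-las}, so I would cite it, using Assumption~\ref{asm:a3} and $N_1\approx Ne^{\alpha_{\rt}}\Exp\{e^f\}$.

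Next comes the covariance identity. Differentiate $\Exp\{\U_{(\cA)}(\btheta;\Ds)\mid\X\}=\btheta_{(\cA)}$ with respect to $\btheta_{(\cA)}\tp$, interpreting unbiasedness as holding in a neighbourhood of $\btheta_{\rt}$ under the conditional model. Swap derivative and sum; this is legitimate because, given $\X$, $\Ds$ takes finitely many values. Then use $\Exp\{\partial\U_{(\cA)}/\partial\btheta_{(\cA)}\tp\mid\X\}=\0$ to obtain
\begin{equation*}
\I=\Exp\{\U_{(\cA)}\dot\ell_{(\cA)}\tp\mid\X\}=\mathrm{Cov}(\U_{(\cA)},\dot\ell_{(\cA)}\mid\X).
\end{equation*}
The standard matrix Cauchy--Schwarz inequality (positive semidefiniteness of the joint covariance of $(\U_{(\cA)},\dot\ell_{(\cA)})$) then gives
\begin{equation*}
\Var(\U_{(\cA)}\mid\X)\ge \bm{J}_N^{-1}.
\end{equation*}
Multiplying by $N_1$ and using the law of total variance, $\Var(\U_{(\cA)})\ge\Exp\{\Var(\U_{(\cA)}\mid\X)\}$, the first step yields $\V_{U(\cA)}\ge\lim N_1\bm{J}_N^{-1}=\V_{\ruw(\cA)}$. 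A Fatou/convergence-in-probability argument handles the passage from conditional to limit bounds, since the conditional bound holds pointwise in $\X$ and $N_1\bm{J}_N^{-1}\cvp\V_{\ruw(\cA)}$.

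The main obstacle is making the differentiation step rigorous and conceptually consistent. The hypotheses say $\U$ is unbiased at $\btheta_{\rt}$ with zero expected derivative, and the derivative of the expectation must include the score term, which requires the model family to be the sampled conditional model rather than the full-data one. I would state explicitly that expectations are taken under $P_{\btheta}$ for the conditional model \eqref{eq:lik-correct}, with $\varphi$ (hence $l_i^{\rs}$) fixed. A further technical point is the limit of a lower bound under convergence in probability rather than in $L^1$. I would handle it by taking $\bm{e}\tp(\cdot)\bm{e}$ for fixed unit $\bm{e}$ and applying a truncation argument, noting that the $o_P(N_1^{-1/2})$ remainder in \eqref{eq:crtheta} does not affect the variance limit $\V_{U(\cA)}$ as defined.
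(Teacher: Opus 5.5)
Your proposal is correct and follows essentially the same route as the paper. Both arguments differentiate the unbiasedness identity under the sampled conditional density $\exp\{\ell_{\ruw}(\btheta)+\sum_{i}\delta_i y_i l_i\}$, use the zero expected derivative of $\U_{(\cA)}$ to get $\I$ as its covariance with the MSCL score $\dl_{\ruw(\cA)}(\btheta_{\rt})$, and finish with the matrix Cauchy--Schwarz inequality and $a_N^{-2}\Var\{\dl_{\ruw(\cA)}(\btheta_{\rt})\}\to\bLambda_{\ruw(\cA)}$. The only difference is that you apply Cauchy--Schwarz conditionally and then use the law of total variance and Fatou, whereas the paper averages to $\mathrm{Cov}\{\U_{(\cA)},\dl_{\ruw(\cA)}\}=\I$ and applies Cauchy--Schwarz to the unconditional moments, which avoids the Fatou step; your remarks on unbiasedness in a neighbourhood of $\btheta_{\rt}$ and on conditioning on the inclusion events make explicit what the paper uses implicitly.
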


Theorem~\ref{thm:cr} establishes the optimality of $\htheta_{\ruw}^{\radp}$ among
 a class of asymptotically unbiased subsample estimators, which includes $\htheta_{\rw}^{\radp}$ as a special case.
 The variance $\V_{\ruw(\cA)}$ can be interpreted as the Cramér–Rao bound
for the class of asymptotically unbiased subsample estimators.
The condition on the partial derivative of $\U_{(\cA)}(\btheta_{\rt};\Ds)$ is
assuming that the derivative of the small term in~\eqref{eq:crtheta} is small as
well, which is satisfied for all unbiased estimators.
\black

\subsection{Practical algorithm and computational complexity}\label{sec:comp}

Since the MSCL estimator $\htheta_{\ruw}^{\radp}$ outperforms the IPW estimator
according to Theorems~\ref{thm:effi-comp} and \ref{thm:cr}, we recommend to use
the MSCL estimator $\htheta_{\ruw}^{\radp}$ and give a practical two-step
algorithm based on it. The optimal sampling functions contain
unknown values and the adaptive lasso
penalty also requires a consistent pilot estimator to build weights,
so it is natural to combine optimal sampling and the adaptive
lasso into one unified framework. We recommend to use the lasso 
for pilot estimation in most cases. One reason is that it does estimation and variable
selection simultaneously, 
 and excluding some inactive variables improves the estimation accuracy
of optimal probabilities. This also reduces the computational burden for
subsequent steps. Another reason is that the lasso estimator tends to include
more variables in practice and therefore has a low risk of excluding important
variables in the pilot step.
We present an outline of the practical implementation in
Algorithm~\ref{alg:adplas}. More details are given in
Section~\ref{sec:dtlalg} of the appendix.
\black

\begin{algorithm}[h]%
  \caption{Two-step subsampling adaptive lasso algorithm}
  \label{alg:adplas}
  \begin{algorithmic}[1]
    \STATE First stage screening:
    \begin{itemize}
    \item Take a pilot sample of
      expected sample size $N_{\rp}$ using
      $\{\pi(y_i)=\rho_0+y_i(\rho_1-\rho_0)\}_{i=1}^N$ and obtain a lasso
      penalized MSCL pilot
      estimator, an estimated active set $\hat{\cA}_{\rp}$.
  \item Calculate approximate optimal sampling probabilities
    $\{\hat{\pi}(\x_i,y_i)=y_i+(1-y_i)\rho\hat\varphi(\x_i)\}_{i=1}^N$
    based on \eqref{eq:optA}, \eqref{eq:optL}, or \eqref{eq:optpr}.
    \end{itemize}
    \STATE Second stage screening:
 Use Algorithm~\ref{alg:poi}  with the estimated optimal sampling
      probabilities to obtain a subsample of expected sample size $N_{\rs}$ and compute the adaptive
      lasso penalized MSCL estimator based on $\hat{\cA}_{\rp}$.
  \end{algorithmic}
\end{algorithm}

Our method requires a pilot step for sampling probabilities and
adaptive weights, along with a penalized MSCL optimization. As shown in~\eqref{eq:lik-las},
the MSCL objective function adds an off-set term to the regular likelihood
function. Therefore, it can be implemented using many existing packages such as
\texttt{Lasso.jl} \citep{juliastats2022lasso} in \texttt{Julia}. The overall
computational complexity is
significantly reduced compared with full data estimators despite two-steps
involve. We next analyze the
total computational complexity of the
practical algorithm. To facilitate the presentation, we consider a special case
when $g(\x;\btheta)=\alpha+f(\x\tp\bbeta)$, and assume that the number of
variables selected at the first-stage is $q$. The algorithm
proposed in~\cite{yuan2012improved} for lasso and adaptive lasso
requires inner iterations for optimal directions and outer iterations for
updating estimators. The computational complexity of full data lasso and
full data adaptive lasso is $O(\zeta_{\oi} Np)$ ($p$ is short for $p_{N}$ for
clearer presentation), where
$\zeta_{\oi}=\zeta_{\out}\zeta_{\inn}$, and $\zeta_{\inn}$, $\zeta_{\out}$ are
the numbers of inner and outer iterations, respectively. Using the coordinate
descent algorithm, the computational complexity
of the two-step algorithm is
$O\{\zeta_{\rp,\oi}^{\rlas}N_{\rp}p+Nq^2+\zeta_{\oi}N(e^{\alpha_{\rt}}+\rho)q\}$
using~\eqref{eq:optA} 
or~\eqref{eq:optpr}, and it is $O\{\zeta_{\rp,\oi}^{\rlas}N_{\rp}p+Nq+\zeta_{\oi}
N(e^{\alpha_{\rt}}+\rho)q\}$ using~\eqref{eq:optL}. 
Here, $\zeta_{\rp,\oi}^{\rlas}$ is the iteration number for pilot estimator.
For optimal probabilities in \eqref{eq:optA} or \eqref{eq:optpr}, when
$\zeta_{\oi}>q/(e^{\alpha_{\rt}}+\rho)$, the dominating term of the complexity is $\zeta_{\oi}
N(e^{\alpha_{\rt}}+\rho)q$, since the first step subsample size is usually
small compared with the second step subsample size. 
The condition $\zeta_{\oi}>q/(e^{\alpha_{\rt}}+\rho)$ is satisfied in
practice considering $\zeta_{\out}$ is usually large for coordinate descent. 
The dominating term for the time complexity of optimal probabilities in~\eqref{eq:optL} is also $\zeta_{\oi}
N(e^{\alpha_{\rt}}+\rho)q$. Compared with full data estimators, both
the sample size and the dimension are reduced. If we set the subsample size
to be the same order of $N_1$,
the time complexity of Algorithm~\ref{alg:adplas} is of order
$O(\zeta_{\oi}Ne^{\alpha_{\rt}}q)$, which is significantly faster than that of
the full data estimator. A more detailed illustration of computing the
complexities is presented in Section~\ref{sec:complexity}.

\section{Numerical experiments}\label{sec:numeric}
In this section, we use numerical experiments on both simulated and real data to
investigate the performances of proposed optimal subsampling and variable
selection procedures.

\subsection{Simulation}\label{sec:simu}

We consider a logistic regression with
$g(\x;\btheta)=\alpha+\x\tp\bbeta$ and the following three true parameters
$\bbeta_{\rt}$ of dimension 50.
We set different $\alpha_{\rt}$ so that the proportion of ones is $0.005$:  
\begin{enumerate}[(1)]
\item{\textbf{Case A:}}
  $\bbeta_{\rt}=(0.75,0.75, \0_7\tp,0.75,0,0.75,0.75,
  \0_{37}\tp)\tp $ and $\alpha_{\rt}=-5.8$.

\item{\textbf{Case B:}}
  $\bbeta_{\rt}=(3,-2,\0_7\tp,0.85,0,-0.75,
  \0_{38}\tp)\tp$ and $\alpha_{\rt}=-6.2$.
  
\item{\textbf{Case C:}}
  $\bbeta_{\rt}=(3,2,\0_7\tp,0.85,
  \0_{40}\tp)\tp$ and $\alpha_{\rt}=-7.5$.
\end{enumerate}
Here, $\0_d$ denotes the zero vector of dimension $d$.
 We use $p_{\cA}$ and $p_{\cAc}$ to denote the number of active and inactive
 variables, respectively, and assume that $\x$ is 
a normal random vector. 
   The active components $x_{(\cA,j)}$, $1\le j \le p_{\cA}$ of
 $\x$ have variances 0.25 and the inactive components $x_{(\cAc,j)}$, $1\le j
 \le p_{\cAc}$ of $\x$ have variances $100/p_{\cAc}^3, 100/(p_{\cAc}-1)^3, ...,
100/3^3,100/2^3,100/1^3$. The correlation between the $i$-th and
$j$-th elements of $\x$ is $0.5^{|i-j|}, 1\leq i,j\leq p$.
We repeat our experiments $S=500$ times generating $N=500000$ data points in
each run and use a pilot sample of size $N_{\rp}=500$ for
obtaining pilot estimates based on the lasso.
We consider
uniform sampling, the full data lasso, and the full data adaptive lasso for
comparison.
We use the 5-fold cross-validation and Bayesian information
criterion (BIC) to determine the tuning parameter $\lambda$ for the lasso and the
adaptive lasso, and choose $\gamma=1$ for the adaptive lasso.

\subsubsection{Estimation and prediction efficiency}

We present the empirical median squared error (eMSE) for parameter estimation in Figure~\ref{fig:msebadA}.
All optimal sampling estimators outperform the uniform sampling. 
As the sampling rate increases, sampling
estimators outperform the full data lasso estimator eventually in all
  of the three cases.
Among the three optimal subsampling methods, $\hbeta^{\radp}_{\mpr}$ performs better
than the other two subsampling methods.

\begin{figure}[htp]%
  \centering 
  \begin{subfigure}{0.27\textwidth}
    \includegraphics[width=\textwidth]{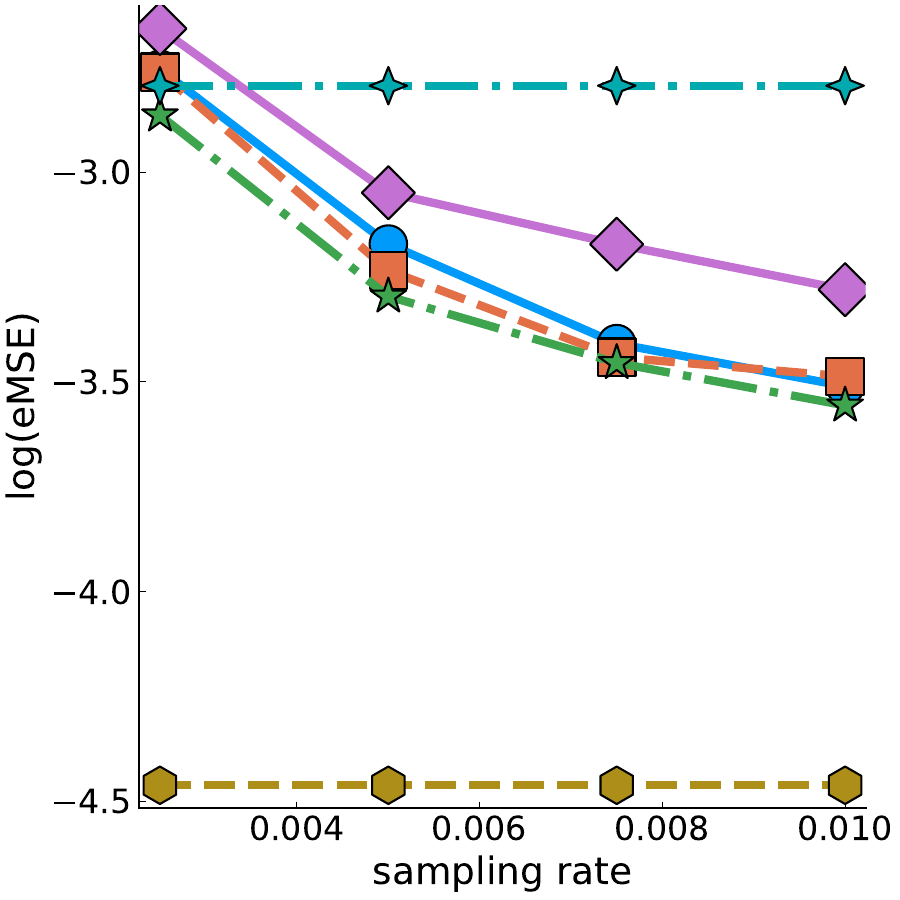}
    \caption{Case A}
  \end{subfigure}
  \begin{subfigure}{0.27\textwidth}
    \includegraphics[width=\textwidth]{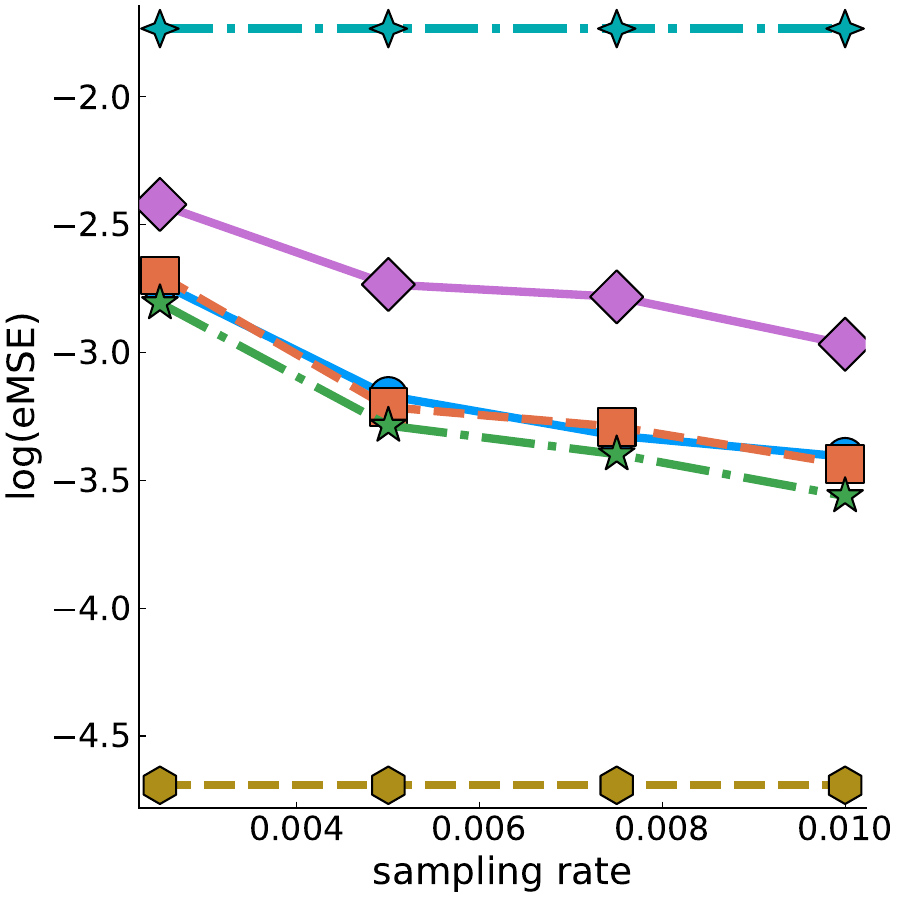}
    \caption{Case B}
  \end{subfigure}
  \begin{subfigure}{0.27\textwidth}
    \includegraphics[width=\textwidth]{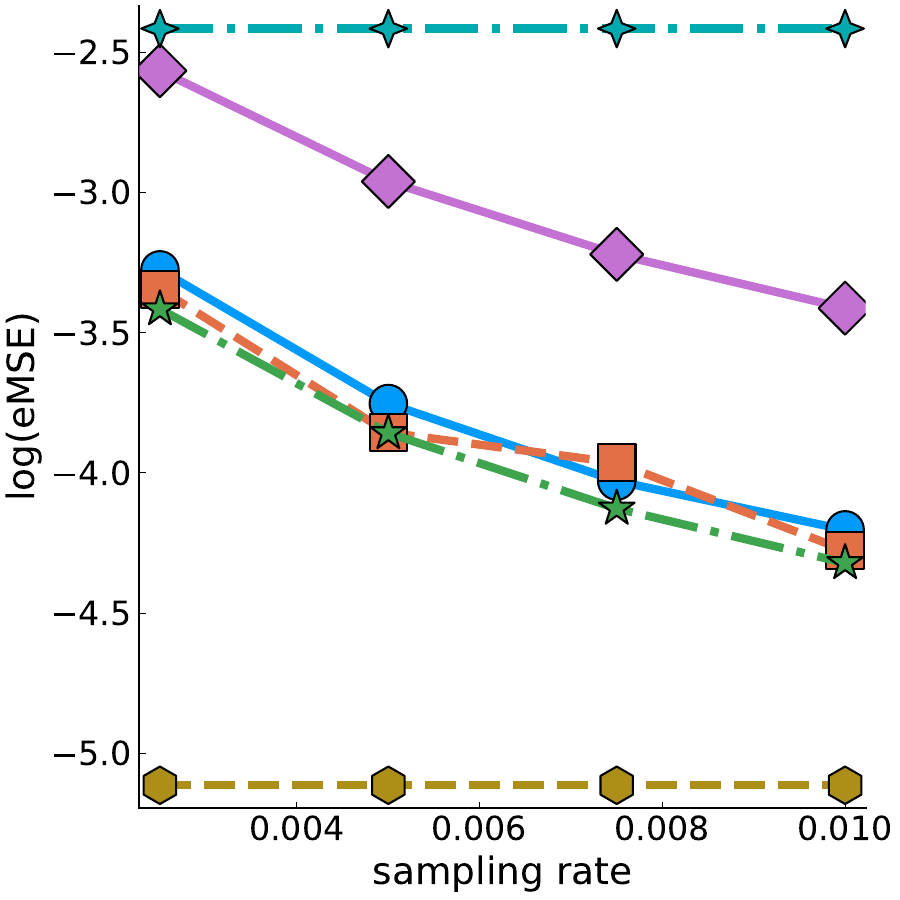}
    \caption{Case C}
  \end{subfigure}  
  \begin{subfigure}{0.1\textwidth}
    \includegraphics[width=\textwidth]{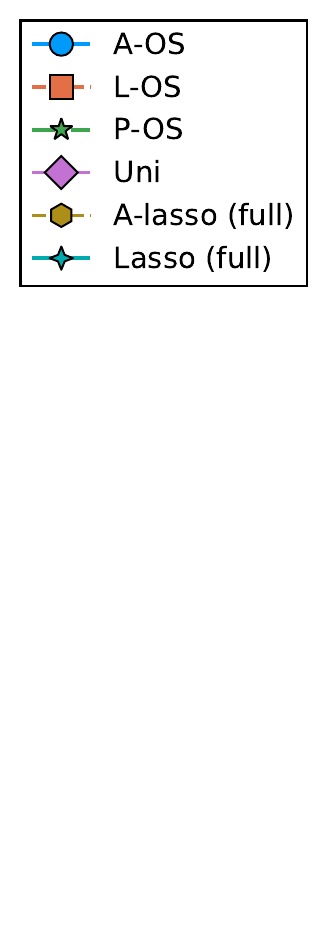}
  \end{subfigure}
  \caption{eMSE for different true parameters with different sampling rates.}
  \label{fig:msebadA}
\end{figure}

\begin{figure}[htp]%
  \centering 
  \begin{subfigure}{0.27\textwidth}
    \includegraphics[width=\textwidth]{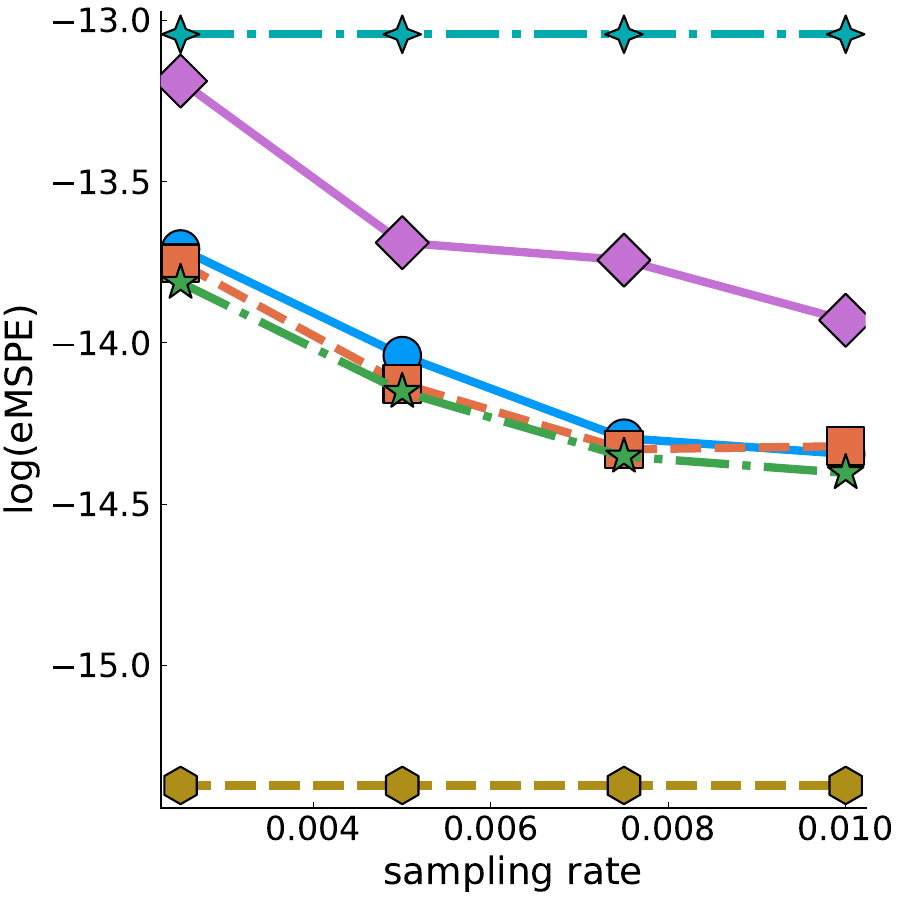}
    \caption{Case A}
  \end{subfigure}
  \begin{subfigure}{0.27\textwidth}
    \includegraphics[width=\textwidth]{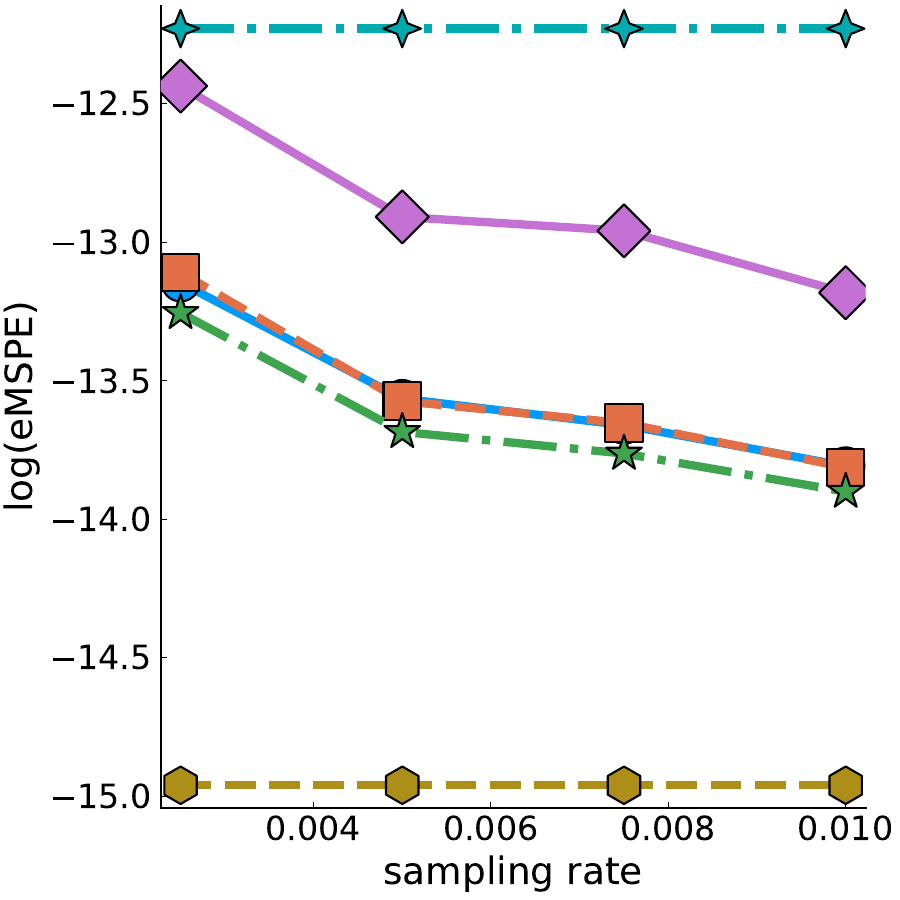}
    \caption{Case B}
  \end{subfigure}  
  \begin{subfigure}{0.27\textwidth}
    \includegraphics[width=\textwidth]{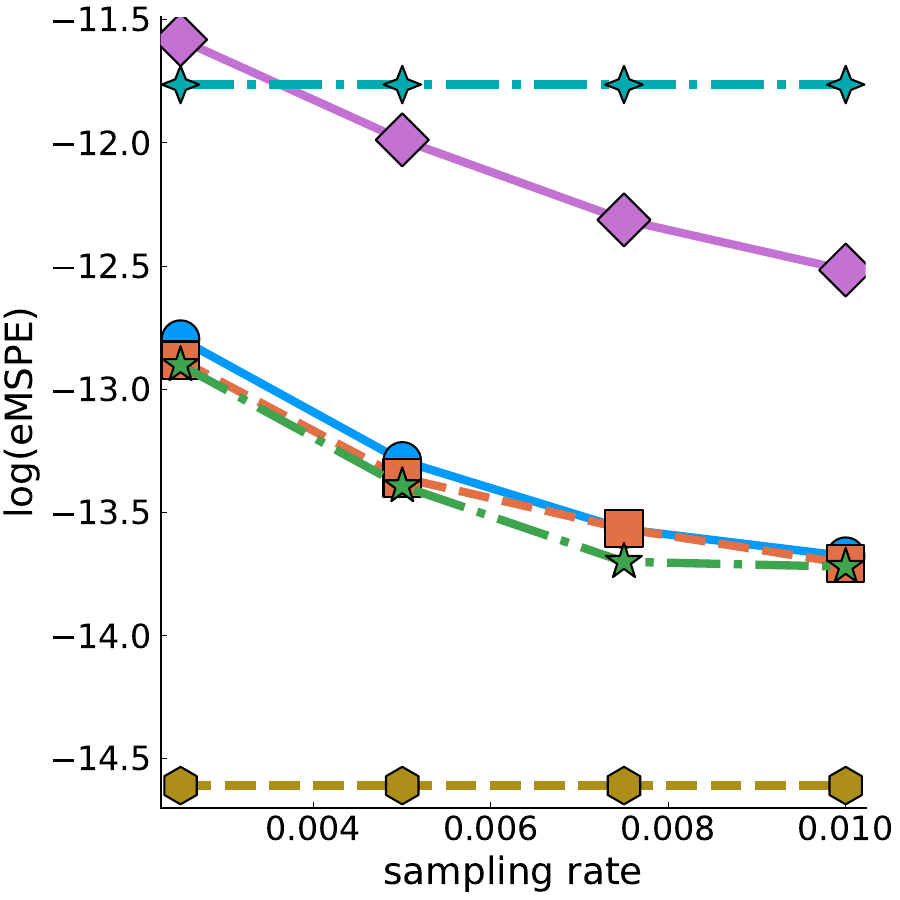}
    \caption{Case C}
  \end{subfigure}
  \begin{subfigure}{0.1\textwidth}
    \includegraphics[width=\textwidth]{leg10.pdf}
  \end{subfigure}
  
  \caption{eMPSE of estimated probability with different sampling rates.}
  \label{fig:mprbadA}
\end{figure}

Figure~\ref{fig:mprbadA} shows the results of the empirical median squared prediction error (eMSPE). Similarly to the
  results of eMSE, optimal sampling estimators
perform better than the uniform sampling, meaning optimal sampling results
  in less information loss. It is possible that
sampling estimators outperform the full data lasso estimator as the
sampling rate increases, despite that the latter uses all of the data. 
In general, $\hbeta^{\radp}_{\mpr}$ performs the best among the three optimal
  subsampling algorithms.

\subsubsection{Variable selection and computational complexity}
In this section, we discuss the results of variable selection in terms of the
first stage screening and the second stage screening. 
Table~\ref{tb:mnumv} presents the mean numbers of selected variables in Case C, where the
numbers in the parentheses are the corresponding standard errors.
Results for Cases A and B are similar so are put in Table~\ref{tb:mnumvS} of the
appendix.

\begin{table}[htp] %
\caption{Mean number of selected variables in Case C.}
\label{tb:mnumv}
\centering
\begin{tabular}{lccccc}
\hline
$\rho$  & first-stage & Uni        & A-OS       & L-OS       & P-OS       \\\hline
 0.0025 & 13.27(0.34) & 2.84(0.02) & 2.97(0.02) & 2.96(0.02) & 2.96(0.02) \\
  0.005 & 12.46(0.32) & 2.94(0.02) & 3.04(0.03) & 3.05(0.03) & 3.06(0.03) \\
 0.0075 & 12.76(0.33) & 2.97(0.01) & 3.04(0.02) & 3.03(0.02) & 3.03(0.02) \\
   0.01 & 12.81(0.34) & 2.96(0.01) & 3.03(0.02) & 3.02(0.01) & 3.02(0.01) \\
\hline
\end{tabular}
\end{table}

While the first stage screening significantly reduces the dimension in Table~\ref{tb:mnumv},
it indeed includes inactive variables as expected. In the
  second stage screening, the mean numbers of selected variables are close to
  the true numbers of active variables for all subsampling
  methods.
However, the mean number of selected variables from uniform sampling is smaller
than the true number of active variables especially when the sampling rate is low.
This indicates that the second-stage screening of uniform sampling may exclude
active variables. We present the rates of missing active variables in
Table~\ref{tb:covermodel} for Case C. It shows that uniform sampling has higher
rates of excluding active variables than optimal subsampling 
procedures, so optimal sampling may be preferable
in practice. Results for Cases A and B are similar and are put in Section~\ref{sec:vari-select-resultsS}. We also investigate the rates
of selecting the true model in that section.

\begin{table}[htp]
\caption{Rates of excluding active variables (false negative rate) in Case C.}
\label{tb:covermodel}
\centering
\begin{tabular}{lcccc}
  \hline
$\rho$ & Uni          & A-OS         & L-OS         & P-OS         \\\hline
0.0025 & 0.168(0.017) & 0.086(0.013) & 0.088(0.013) & 0.084(0.013) \\
 0.005 & 0.100(0.013) & 0.068(0.011) & 0.066(0.011) & 0.066(0.011) \\
0.0075 & 0.066(0.011) & 0.046(0.009) & 0.048(0.010) & 0.046(0.009) \\
  0.01 & 0.068(0.011) & 0.052(0.010) & 0.054(0.010) & 0.054(0.010) \\
\hline
\end{tabular}
\end{table}

\subsubsection{Computational time}\label{sec:computation}

We present the mean computational times of different algorithms in
Table~\ref{tb:time}. Our codes are written in the \texttt{Julia} programming
language \citep{julia} and
implemented on a Linux workstation. The lasso paths are solved with 
\texttt{Lasso.jl} \citep{juliastats2022lasso}.
As shown in Table~\ref{tb:time}, subsampling algorithms
significantly reduce the computational times compared with full data
estimators. Although optimal sampling requires calculating sampling
probabilities, it uses only about 0.77\% of the computational
time that the full data adaptive lasso requires. %
Optimal sampling
algorithms reduce both the sample size and the data dimension as we discussed in
Section~\ref{sec:comp}. Therefore, the
computational cost of the coordinate descent algorithm, which often requires
  a large number of iterations, is significantly reduced. 
\begin{table}[H]
  \centering
  \caption{Mean computational time (seconds).}
  \label{tb:time}
  \begin{tabular}{ccccccc}\hline
    Case & Uni & A-OS & L-OS & P-OS & A-lasso (full) & Lasso (full)\\\hline
    A & 0.29 & 1.09 & 0.91 & 1.06 & 129.62 & 112.97 \\
    B & 0.31 & 1.23 & 1.20 & 1.27 & 129.89 & 122.40 \\
    C & 0.31 & 1.02 & 0.93 & 1.00 & 130.33 & 121.29 \\\hline
  \end{tabular}
\end{table}

\subsubsection{Comparison of different pilot estimators}\label{sec:pilot}

In this section, we investigate the effects of different pilot estimators. As
mentioned earlier, we recommend the lasso for first-stage screening, because it
performs estimation along with variable selection and tends to over-select
variables. In general, any approach that combines estimation and variable
selection is suitable for our first-stage screening. Examples include sure
independence screening (SIS) and the lasso followed by an MLE step. We compare
the performance of different pilot estimators using Case C as an example.

Besides lasso, we consider SIS for the first-stage screening. This approach
selects a pre-determined number of variables based on marginal correlations.  As
shown in Table~\ref{tb:mnumv}, lasso selects approximately 13 active variables
on average during the first-stage screening. Therefore, we set the
pre-determined number of selected variables for SIS to around 13. Specifically,
we examine SIS with 6, 13, and 20 variables. Additionally, we consider SIS with
3 variables because the true number of active variables in Case C is 3. After
applying SIS, we perform an MLE step to obtain pilot estimates for selected
covariates.

We also consider the lasso followed by an MLE (Las-mle) step for the first
 stage-screening, where we apply lasso for variable selection and then apply
the MLE to obtain parameter estimates with selected variables. Since the
relative performances across different subsampling probabilities are similar, we
only present results for the P-OS subsampling probabilities. The results based
on 500 iterations are shown in Table~\ref{tb:sis1}.

\begin{table}[H]
\centering
\caption{Table of eMSEs with different pilot estimators for Case C.}
\label{tb:sis1}
\begin{tabular}{ccccccc}
  \hline
    $\rho$ & Las & SIS-3 & SIS-6 & SIS-13 & SIS-20 & Las-mle \\\hline
    0.0025 & 0.025 & 0.729 & 0.023 & 0.034 & 0.047 & 0.030 \\
    0.0050 & 0.017 & 0.728 & 0.014 & 0.023 & 0.031 & 0.020 \\
    0.0075 & 0.014 & 0.726 & 0.012 & 0.016 & 0.025 & 0.016 \\\hline
\end{tabular}
\end{table}

As shown in Table~\ref{tb:sis1}, $\hbeta_{\mathrm{Las}}^{\rp}$ performs better
than $\hbeta_{\mathrm{SIS13}}^{\rp}$, indicating that
$\hbeta_{\mathrm{Las}}^{\rp}$ is a better choice when selecting similar numbers
of variables. When selecting six variables in the pilot stage,
$\hbeta_{\mathrm{SIS6}}^{\rp}$ performs slightly better than
$\hbeta_{\mathrm{Las}}^{\rp}$. This is because $\hbeta_{\mathrm{SIS6}}^{\rp}$
may result in more accurate variable selection than
$\hbeta_{\mathrm{Las}}^{\rp}$, given that the true number of active variables is
3. This suggests that SIS can outperform lasso as a pilot estimator when the
number of variables is set appropriately.  However, setting an inappropriate
number of selected variables for SIS may lead to significantly worse
performance.  For example, $\hbeta_{\mathrm{SIS3}}^{\rp}$ results in a large
eMSE because it excludes active variables with high probability. Conversely, if
we attempt to avoid excluding important variables by selecting a larger number
of variables (e.g., $\hbeta_{\mathrm{SIS20}}^{\rp}$), SIS may not perform as
well as lasso.

Between $\hbeta_{\mathrm{Las-mle}}^{\rp}$ and $\hbeta_{\mathrm{Las}}^{\rp}$, the
former performs worse than the latter, as shown in Table~\ref{tb:sis1}.  This
occurs in our simulations because lasso selects some inactive variables.  For
these inactive variables, the MLE estimates have larger magnitudes than the
corresponding lasso estimates, since lasso shrinks all estimates toward zero.
As a result, the adaptive weights $|\hat{\beta}_{\rp(j)}|^{-\gamma}$ for the
inactive variables are smaller when using the Las-MLE pilot compared to the
lasso pilot.

In summary, the lasso is a robust choice for the pilot estimator, as it performs
well in various scenarios, and we recommend using it to construct the pilot
estimator in general. Although SIS can outperform lasso when the number of
selected variables is set appropriately, this information is often not
available in practice.

\subsection{Real data}\label{sec:real}

\subsubsection{Public benchmark data sets}
We evaluate performances of the proposed estimators with two
open source data sets.
\begin{enumerate}[(i)]
\item{\textbf{Covtype data set:}} It is available at
  \url{https://archive.ics.uci.edu/ml/datasets/covertype}, with $N=581012$
  observations and 54 covariates -- 10 being quantitative and 44 being
  qualitative with dummy coding. We drop the 14th and 54th columns to avoid
  exact colinearity of the dummy variables. Our goal is to classify whether the
  forest cover type is Cottonwood/Willow (labeled as 1) or not (labeled as 0).
  The proportion of Cottonwood/Willow is 0.473\%, which is highly imbalanced.
\item{\textbf{Font data set:}} It is available at
  \url{https://archive.ics.uci.edu/ml/datasets/Character+Font+Images}, with
  0.50\% of the $N=832670$ responses being the GADUGI font. The first 10
  covariates are about the value, size, and style of the characters and there
  are additional 400 pixel values of the $20\times20$ images.  We remove the
  4th, 9th, and 10th covariates because they are constants.
\end{enumerate}

For both data sets, we apply
Algorithm~\ref{alg:adplas} on the logarithmic-transformed data. We use
pilot samples of size $N_{\rp}=1000$ for the covtype data and $N_{\rp}=1500$ for the font
data due to its higher dimension. Since we do not know
the true parameter for real data, we use area under the
  curve (AUC) to
measure the performances of subsampling algorithms. %
We repeat the experiment for
$S=500$ and compute the empirical median AUC using the full data. The results are
  summarized in Figure~\ref{fig:realdata}.  
\begin{figure}[t]
  \centering 
  \begin{subfigure}{0.315\textwidth}
    \includegraphics[width=\textwidth]{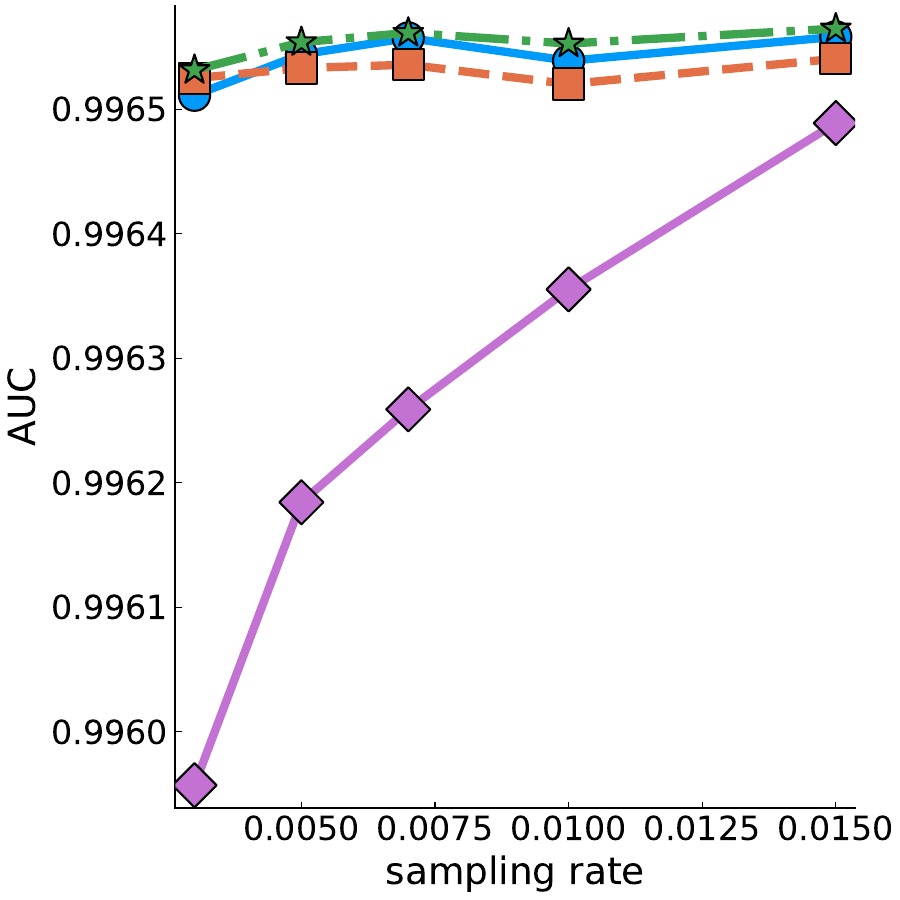}
    \caption{AUCs of the covtype data}
  \end{subfigure}
  \begin{subfigure}{0.315\textwidth}
    \includegraphics[width=\textwidth]{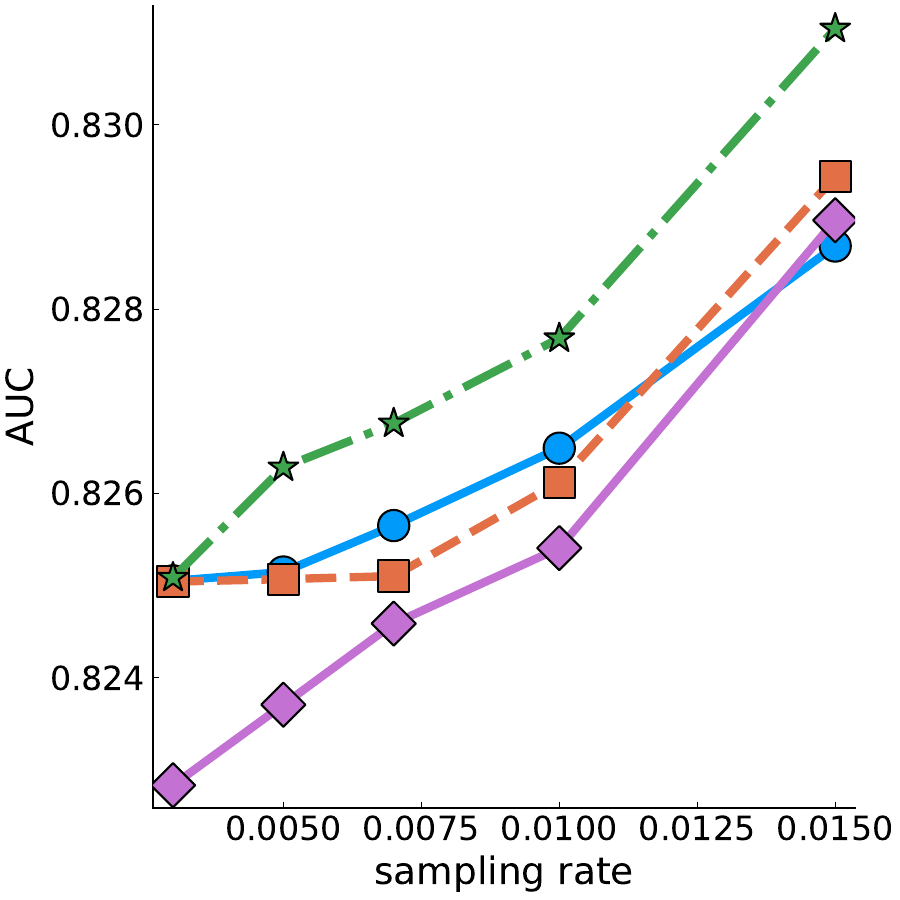}
    \caption{AUCs of the font data}
  \end{subfigure}
  \begin{subfigure}{0.1\textwidth}
    \includegraphics[width=\textwidth]{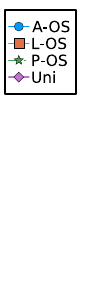}
  \end{subfigure}
  \caption{Empirical median AUCs for two real data sets.}
  \label{fig:realdata}
\end{figure}
As shown in Figure~\ref{fig:realdata}, nonuniform sampling outperforms uniform
sampling in general. There is one case for font data set that
$\hbeta_{\mmse}^{\radp}$ is worse than the uniform sampling when the sampling
rate is high. For the covtype data set, among the three estimators based on
optimal sampling, $\hbeta^{\radp}_{\mpr}$ performs the best and
$\hbeta^{\radp}_{\mvc}$ is worst. For the font data set,
$\hbeta^{\radp}_{\mmse}$ and $\hbeta^{\radp}_{\mvc}$ are similar, and
$\hbeta^{\radp}_{\mpr}$ based on the scale-invariant optimal sampling function
is significantly better.

\subsubsection{American Academy of Ophthalmology IRIS® Registry data}
Launched
on March 24, 2014, 
the IRIS® (Intelligent Research in Sight) Registry
\footnote{\url{https://www.aao.org/iris-registry}}
 is the first electronic health record-based comprehensive
eye disease and condition registry in the United States.
As of July. 1, 2023, the IRIS Registry has data on
over 70/423.4 million patients/visits, and it is growing rapidly daily.
We applied our proposed methods on a dataset from IRIS, in which all patients in
the IRIS Registry from 2013 to 2018 between 18-90 years of age as of last
procedure or documented diagnosis were included \citep{ramesh2023thyroid}. We
are interested in the thyroid eye disease in
this analysis. The presence of thyroid eye disease was defined as
having at least two visits to an ophthalmology practice coded with International
Classification of Diseases, Ninth/Tenth Revision (ICD-9: 242.00, ICD-10:
E05.00). Individuals not meeting these criteria were considered to be
non-cases. We considered the following variables in this analysis: age
(numerical), gender (categorical), race (categorical), ethnicity (categorical),
region (categorical), and smoking status (categorical). The numerical age was
standardized and all categorical variables were coded with dummy variables. The
resulting full data contains 47,872,555 observations with sixteen potential
predictors.

The prevalence of thyroid eye disease is 0.0861\% in this cohort. We obtain the
pilot estimator from a pilot subsample of $N_{\rp}=1000$ using the SIS
algorithm. We set the number of variables for the pilot estimator as 90\% of the
total number of covariates. We repeat the experiments $S=500$. The results for
classification are summarized in Figure~\ref{fig:eyedata}.
\begin{figure}[h]
  \centering 
  \begin{subfigure}{0.385\textwidth}
    \includegraphics[width=\textwidth]{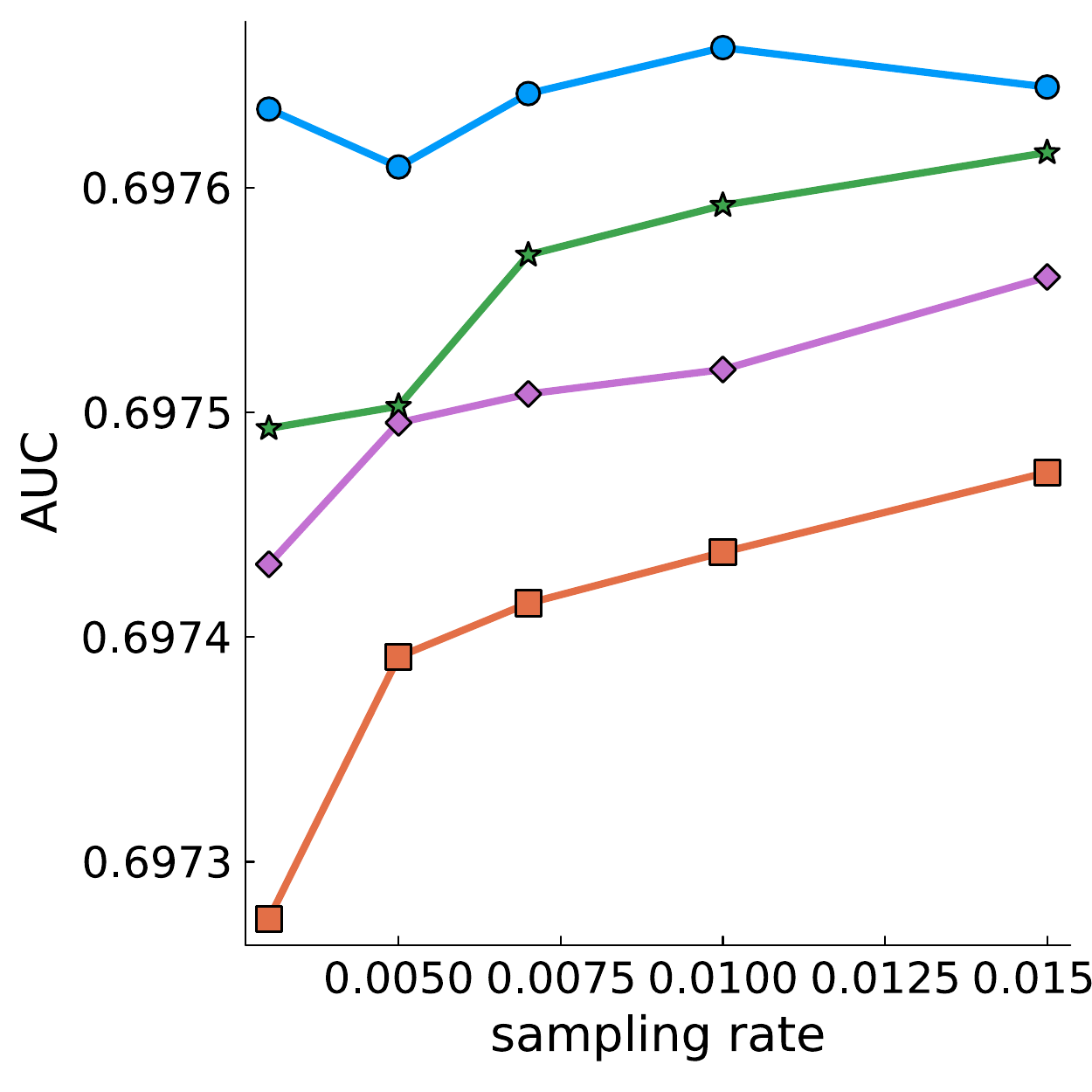}
  \end{subfigure}
  \begin{subfigure}{0.11\textwidth}
    \includegraphics[width=\textwidth]{legr1.pdf}
  \end{subfigure}
  \caption{Empirical median AUC for the IRIS® data set.}
  \label{fig:eyedata}
\end{figure}

As shown in Figure~\ref{fig:eyedata}, both $\hbeta^{\radp}_{\mmse}$ and
$\hbeta^{\radp}_{\mpr}$ outperform uniform subsampling. For this data set,
$\hbeta^{\radp}_{\mmse}$ performs best, while $\hbeta^{\radp}_{\mvc}$ performs
worst — even worse than uniform subsampling. This suggests that scale-dependent
sampling probabilities (used for $\hbeta^{\radp}_{\mmse}$ and
$\hbeta^{\radp}_{\mvc}$) are not robust: they may increase or decrease
estimation efficiency relative to simple uniform subsampling, and unfortunately
this cannot be known before taking subsamples. Scale-invariant subsampling
yields the estimator $\hbeta^{\radp}_{\mpr}$, which behaves more robustly: its
performance lies between that of $\hbeta^{\radp}_{\mmse}$ and
$\hbeta^{\radp}_{\mvc}$. As noted earlier, although $\hbeta^{\radp}_{\mpr}$ may
not be optimal in every case, it is a more robust choice and is never the worst.

We also provide variable selection results for the thyroid eye disease data. In
each of 500 repetitions, we select variables using subsampling estimators,
allowing us to compute the selection frequency for each
variable. Figure~\ref{fig:eyeselect} summarizes these results. The variable
selection patterns are similar across different subsampling probabilities, so we
present only the results for $\hbeta^{\radp}_{\mpr}$ with $\rho=0.01$. Notably,
age—the only numerical predictor—is not the most frequently selected variable,
which is consistent with medical evidence that thyroid eye disease can occur with
wide age ranges. Gender is always selected, aligning with the higher prevalence of the
disease in females. Although the ``Unknown'' category of Race is not of the primary
interest, it is consistently selected; this supports the practice of
including an ``Unknown'' race group in analyses to maintain comparability of samples and/or models and
improve the results. It is also important to collect detailed information such as race so that the results
can be explained accordingly. The
selection of region and smoking-status indicators suggests that these factors may be
associated with differing risks of thyroid eye disease.

\begin{figure}[H]
    \centering
    \includegraphics[width=\textwidth]{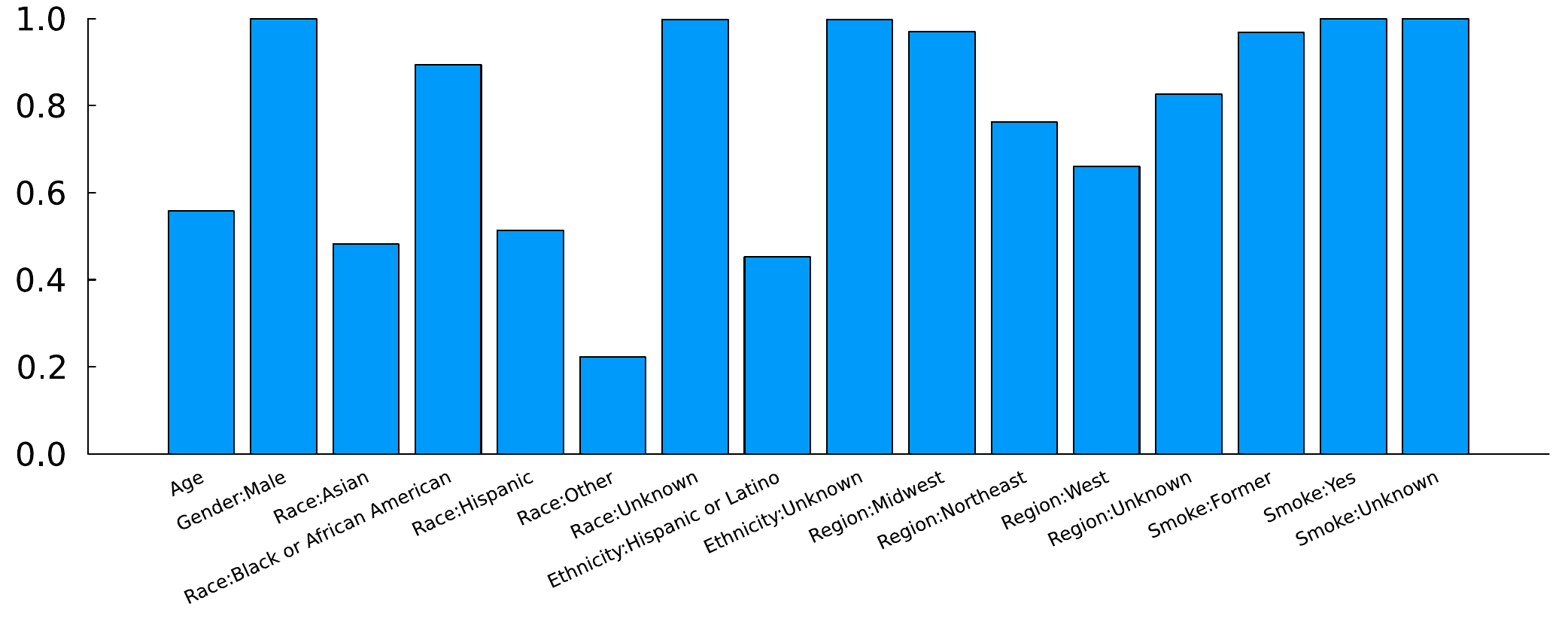}
    \caption{Frequencies of selected variables.}
    \label{fig:eyeselect}
\end{figure}

\section{Conclusion and limitations}\label{sec:conclusion}

In this paper, we investigated the problem of scale-invariant optimal subsampling
in the context of variable selection for rare-events
data. %
We derived optimal probabilities based on the A- and L- optimality criteria, and
discussed their limitations. Furthermore, we proposed scale-invariant optimal
probabilities based on prediction errors to overcome the
limitations of scale-dependence.
Both analytical and numerical results show the desirable properties of the
proposed methods.
As we have shown, the proposed sampling procedure results
in small and balanced subsamples without much information loss, which facilitates
the computation in practice.
The practical implementation of the proposed method requires selecting a pilot
estimator, a sampling rate $\rho$, and an importance function $\varphi(\x)$. For
the pilot estimator, we recommend using a lasso estimator in practice. For the
sampling rate $\rho$, we recommend setting it to be larger than but of
the same order of magnitude as $N_{1}/N_{0}$ to achieve a balance between
statistical efficiency and computational speed. Finally, for the importance
function $\varphi(\x)$, since the P-OS optimal function usually performs well
and shows robustness in simulations and real data analyses, we suggest
$\varphi_{\mpr}^{\radp}(\x)$ without pre-knowledge of the true model.
Our investigation has the following limitations.

\begin{itemize}
\item Our proposed criterion optimizes the probabilities by minimizing the
  asymptotic mean squared error in estimating rare-event probabilities. While
  this prioritizes the accuracy of estimation, it puts less emphasis on the
  quality of variable selection. Further research is needed to devise optimal
  probabilities that focus on variable selection performance metrics.
\item Our theoretical analysis is based on asymptotic properties, with optimal
  probabilities defined through the asymptotic normality. Although our results
  may hold for sufficiently sparse models, they may not generalize to cases
  where the model is dense or over-parameterized, because asymptotic normality
  may no longer be applicable. Also, our asymptotic results of prediction
  errors is limited to fixed numbers of active variables. 
  Therefore, an important direction for future
  research is to study the non-asymptotic properties of our estimators, such
  as prediction error bounds, especially when dimensions diverge with
  $N$. Non-asymptotic behaviors are particularly of interest in
  high-dimensional regimes.

\item We assume that the underlying full model is correctly specified and
  possesses a sparse structure. Our analysis does not account for model
  misspecification. Further research is required to address scenarios where the
  model is possibly misspecified or where the number of features vastly exceeds
  the number of observations.
\end{itemize}

\newpage
\appendix

\section{Details of mathematical proofs}\label{sec:proof}

In this section, we provide mathematical proofs of the theoretical results in
the main paper. We begin with some general assumptions used throughout this
paper.

\subsection{General assumptions}\label{sec:asm}

Here, we remind some notations used in the main
paper:
\begin{equation*}
p(\x;\btheta)=\frac{e^{\alpha+f(\x;\bbeta)}}{1+e^{\alpha+f(\x;\bbeta)}},
\phi(\x;\btheta)=p(\x;\btheta)\left\{1-p(\x;\btheta)\right\},
\M=\Exp\left\{e^{f(\x_i;\bbeta_{\rt})}\dg^{\otimes2}(\x_i,\btheta_{\rt})\right\},
\end{equation*}
and 
$
\bLambda_{\ruw}=\Exp\left[
\{e^{f(\x;\bbeta_{\rt})}\dg^{\otimes2}(\x;\btheta_{\rt})\}/\{1+c\varphi^{-1}(\x)e^{f(\x;\bbeta_{\rt})}\}
\right].
$
To ease the presentation in the following sections, we denote
$
\M_{\rw(\cA)}=\Exp\left[\left\{1+ce^{f(\x;\bbeta_{\rt})}/\varphi(\x)\right\}
e^{f(\x;\bbeta_{\rt})}\dg_{(\cA)}^{\otimes2}(\x;\btheta_{\rt})\right],
$
and $a_N=\sqrt{Ne^{\alpha_{\rt}}}$ in the appendix. Here 
$N_{1}^{*}=\sumn y_i=\Exp(N_{1}^{*})\{1+\oo\}$ almost surely, and
\begin{align*}
&N_1=\Exp(N_{1}^{*})=\Exp\left(\sumn y_i\right)
=N\Exp\left\{\frac{e^{\alpha_{\rt}+f(\x;\bbeta_{\rt})}}
{1+e^{\alpha_{\rt}+f(\x;\bbeta_{\rt})}}\right\}\\
&=Ne^{\alpha_{\rt}}\Exp\left\{e^{f(\x;\bbeta_{\rt})}\right\}
=a_N^2\Exp\left\{e^{f(\x;\bbeta_{\rt})}\right\}.
\end{align*}
In this Appendix, for two sequences $v_{N}$ and $w_{N}$, we use notation
$v_{N}\lesssim w_{N}$ to represent $v_{N}\le Cw_{N}$, and use notation
$v_{N}\gtrsim w_{N}$ to
represent $v_{N}\ge Cw_{N}$ where $C$ is a constant that does
not depend on $N$. Also, since $c_{N}=e^{\alpha_{\rt}}/\rho\to
c<\infty$ in Assumption~\ref{asm:a3}. We assume $c_{N}<\infty$ in the rest of
the Appendix without loss of generality.

\subsection{Proof of Theorem~\ref{thm:asym-ipw}}\label{sec:optprb}

\subsubsection{Proof when $p_{N}$ and $s_{N}$ are fixed}\label{sec:ipw-fixp}
We start with proving Theorem~\ref{thm:asym-ipw} when the dimension $p_{N}$ and
the number of active variables $s_{N}$ are fixed, e.g., $p_{N}=p$ and $s_{N}=s$.
We consider the target of IPW estimator: 
\begin{align*}
&\Q_{\rw}(\btheta)=
-\sumn\frac{\delta_i}{\pi(\x_i,y_i)}[y_ig(\x_i;\btheta)-\log\{1+e^{g(\x_i;\btheta)}\}]+
\lambda_N\sumjp\hat{w}_j|\beta_{(j)}|\\
&=-\ell_{\rw}(\btheta)+\lambda_N\sumjp\hat{w}_j|\beta_{(j)}|
=-\ell_{\rw}(\btheta)+\lambda_N\sumjp\frac{1}{|\hat{\beta}_{\rp(j)}|}|\beta_{(j)}|,     
\end{align*}
Then, we have that
$\tu_N=a_N(\htheta_{\rw}-\btheta_{\rt})$ is the minimizer of
$
\gamma_{\rw}^N(\bu)=Q_{\rw}(\btheta_{\rt}+a_N^{-1}\bu)-Q_{\rw}(\btheta_{\rt}).
$
\paragraph{Asymptotic normality:} We prove the asymptotic normality. By Taylor's expansion, 
\begin{align*}
\gamma_{\rw}^N(\bu)&=-\frac{1}{a_N}\bu\tp\dl_{\rw}(\btheta_{\rt})+\frac{1}{2a_N^2}
\sumn\frac{\delta_i}{\pi(\x_i,y_i)}\phi(\x_i;\btheta_{\rt})\{\bu\tp\dg(\x_i;\btheta_{\rt})\}^2-\Delta_{\rw}+R_{\rw}\\
&\quad+\frac{\lambda_N}{a_N}\sumjp \hat{w}_j a_N\left( \left|
\beta_{\rt(j)}+\frac{u_{(j)}}{a_N} \right|-|\beta_{\rt(j)}| \right).
\end{align*}
We first consider the limit
behavior of the IPW target function by prove the asymptotic normality.
In \cite{wang2021nonuniform}, the authors established that under
Assumptions~\ref{asm:a1} to~\ref{asm:a3},   
$a_N^{-1}\dl_{\rw}(\btheta_{\rt})\cvd\M_{\rw}^{1/2}\W$,
$1/a_N^2
\sumn\{\delta_i/\pi(\x_i,y_i)\}\phi(\x_i;\btheta_{\rt})\dg^{\otimes2}(\x_i;\btheta_{\rt})
\cvp\M$,
and
$\Delta_{\rw}=\op,R_{\rw}=\op$.
Thus, 
$-\ell_{\rw}(\btheta_{\rt})\cvd-\bu\tp\M_{\rw}^{1/2}\W+0.5\bu\tp\M\bu$.
Next, we consider the limit behavior of the adaptive lasso penalty. Since we
assume $\hat{\bbeta}_{\rp}$ to be a consistent estimator, we know that when
$j\in\cA$, i.e., $\bbeta_{\rt(j)}\neq0$, 
$\hat{w}_j=|\hat{\beta}_{\rp(j)}|^{-\gamma}\cvp|\beta_{\rt(j)}|^{-\gamma}>0$,
and 
$a_N \left( \left|\beta_{\rt(j)}+u_{(j)}/a_N\right|-|\beta_{\rt(j)}|
\right)\to\sgn(\beta_{\rt(j)})u_{(j)}$.
Therefore, for $j\in\cA$, we have that
$(\lambda_N/a_N)\hat{w}_ja_N \left( \left| \beta_{\rt(j)}+u_{(j)}/a_N
\right|-|\beta_{\rt(j)}| \right)=\op$, %
since $\lambda_N/a_N=\lambda_N/\sqrt{Ne^{\alpha_{\rt}}}\to0$.
On the other hand, when $j\in\cAc$, i.e., $\beta_{\rt(j)}=0$, we have that for
$u_{(j)}\neq 0$, %
\begin{equation*}
\frac{\lambda_N}{a_N}\hat{w}_ja_N \left( \left| \beta_{\rt(j)}+\frac{u_{(j)}}{a_N}
\right|-|\beta_{\rt(j)}| \right)
=\frac{\lambda_N}{a_N}\hat{w}_j|u_{(j)}|=\frac{\lambda_N}{a_N|\hat{\beta}_{\rp(j)}|^{\gamma}}
|u_{(j)}|\cvp\infty,
\end{equation*}
since $\lambda_N/(\sqrt{Ne^{\alpha_{\rt}}}|\hat{\beta}_{\rp(j)}|^{\gamma})\cvp\infty$.
Then, we have that $\gamma_{\rw}^N(\bu)\cvd\gamma_{\rw}(\bu)$, where 
\begin{equation*}
\gamma_{\rw}(\bu)=
\begin{cases}
\frac{1}{2}\bu_{(\cA)}\tp\M_{(\cA)}\bu_{(\cA)}-\bu_{(\cA)}\tp\M_{\rw(\cA)}^{1/2}\W_{(\cA)} &
\text{if }u_{(j)}=0, \forall j\in\cAc  \\
\infty   & \text{otherwise}.  \\
\end{cases}
\end{equation*}
Note that the unique minimizer of $\gamma_{\rw}^N(\bu)$ is
$(\M_{(\cA)}^{-1}\M_{\rw(\cA)}^{1/2}\W_{(\cA)}\tp,\0)\tp$ if we put all the indexes of active
variables in front. Thus, following the results of \cite{geyer1994asymptotics}
and \cite{fu2000asymptotics}, we have the minimizer of $\gamma_{\rw}^N(\bu)$,
i.e., $\tu_N$, satisfies that 
$\tu_{N(\cA)}\cvd\M_{(\cA)}^{-1}\M_{\rw(\cA)}^{1/2}\W_{(\cA)}$, and $\tu_{N(\cAc)}\cvd\0$.
Thus,
$\tu_{N(\cA)}=a_N(\htheta_{\rw(\cA)}-\btheta_{\rt(\cA)})
\cvd\Nor(\0,\M_{(\cA)}^{-1}\M_{\rw(\cA)}\M_{(\cA)}^{-1})$.
Since
$\sqrt{N_1}=a_N\Exp^{1/2}\left\{e^{f(\x;\bbeta_{\rt})}\right\}$,
we have 
$\sqrt{N_1}\V_{\rw(\cA)}^{-1/2}(\htheta_{\rw(\cA)}-\btheta_{\rt(\cA)})
\cvd\Nor(\0,\I)$.

\paragraph{Consistency in variable selection}

We prove the consistency in variable selection in this paragraph. From the
result of asymptotic normality, we know
that $\hat{\beta}_{\rw(j)}\cvp\beta_{\rt(j)}$ for every $j\in 
\cA$ and therefore $\Pr(j\in \hat{\cA}_{\rw})\to1$. Thus, we only consider
$j'\in\cAc$. When $j'\in \hat{\cA}_{\rw}$, we know that by K-K-T
optimality conditions, we have 
$\lambda_N\hat{w}_{j'}\sgn(\hat{\beta}_{(j')})=\dl_{\rw}(\htheta_{\rw})$,
which means
\begin{align*}
&\frac{\lambda_N\hat{w}_{j'}\sgn(\hat{\beta}_{(j')})}{a_N}    
=\frac{\dl_{\rw}(\htheta_{\rw})}{a_N}
=\frac{\dl_{\rw}(\btheta_{\rt})}{a_N}+\frac{a_N
  \left\{\dl_{\rw}(\htheta_{\rw})-\dl_{\rw}(\btheta_{\rt})
  \right\}}{a_N^2}=:I_1+I_2.
\end{align*}
We have known that $I_1=\dl_{\rw}(\btheta_{\rt})/a_N\cvd\W_{\rw}$. We now prove that
$I_2=\Op$. We apply Taylor expansion to the $k$-th element of
$\dl_{\rw}(\htheta_{\rw})$ and have that
\begin{equation*}
\frac{a_N \left\{
\dl_{(k)}(\htheta_{\rw})-\dl_{(k)}(\btheta_{\rt})
\right\}}{a_N^2}=-\frac{1}{a_N^2}\sumn\frac{\delta_i}{\pi(\x,y_i)}\phi(\x_i;\btheta_{\rt})
\dg_{(k)}(\x_i;\btheta_{\rt})\dg\tp(\x_i;\btheta_{\rt})\tu_N+\tilde{\Delta}_{\rw(k)}+\tilde{R}_{\rw(k)},
\end{equation*}
where,
$\tu_N=a_N(\htheta_{\rw}-\btheta_{\rt})=\Op$,
\begin{equation*}
\tilde{\Delta}_{\rw(k)}=\frac{1}{a_N^2}\sumn\frac{\delta_i}{\pi(\x_i,y_i)}
\left\{y_i-p(\x_i;\btheta_{\rt})\right\}\sum_{j=1}^d\ddg_{(kj)}(\x_i;\btheta_{\rt})\hat{u}_{N(j)},
\end{equation*}
and 
\begin{align*}
\tilde{R}_{\rw(k)}&=-\frac{1}{2a_N^3}\sumn\frac{\delta_i}{\pi(\x_i,y_i)}\phi(\x_i;\batheta_k)
\left\{1-2p(\x_i;\batheta_k)
\right\}\dg_{(k)}(\x_i;\batheta_k)\tu_N\tp\dg^{\otimes2}(\x_i;\batheta_k)\tu_N\\
&\quad-\frac{2}{2a_N^3}\sumn\frac{\delta_i}{\pi(\x_i,y_i)}\phi(\x_i;\batheta_k)
\left\{\tu_N\tp\frac{\partial\dg_{(k)}(\x_i;\batheta_k)}{\partial\btheta}\right\} \left\{
\tu_N\tp\dg(\x_i;\batheta_k) \right\}\\
&\quad-\frac{1}{2a_N^3}\sumn\frac{\delta_i}{\pi(\x_i,y_i)}\phi(\x_i;\batheta_k)\dg_{(k)}(\x_i;\batheta_k)
\left\{ \tu_N\tp\ddg(\x_i;\batheta_k)\tu_N \right\}\\
&\quad+\frac{1}{2a_N^3}\sumn\frac{\delta_i}{\pi(\x_i,y_i)}\left\{
y_i-p(\x_i;\batheta_k)
\right\}\tu_N\tp \frac{\partial^2\dg_{(k)}(\x_i;\batheta_k)}{\partial\btheta^2}\tu_N.
\end{align*}
where $\batheta_k$ is between $\htheta_{\rw}$ and $\btheta_{\rt}$. First, we prove
that $\tilde{R}_{\rw(k)}$ is $\op$. We have that  
\begin{align*}
|\tilde{R}_{\rw(k)}|
& \leq\frac{\|\tu_N\|^2}{2a_N^3}\sumn\frac{\delta_i}{\pi(\x_i,y_i)}\phi(\x_i;\batheta_k)
    \left|1-2p(\x_i;\batheta_k)
    \right|\left|\dg_{(k)}(\x_i;\batheta_k)\right|\left\|\dg(\x_i;\batheta_k)\right\|^2\\
&\quad+\frac{2\|\tu_N\|^2}{2a_N^3}\sumn\frac{\delta_i}{\pi(\x_i,y_i)}\phi(\x_i;\batheta_k)
    \left\|\frac{\partial\dg_{(k)}(\x_i;\batheta_k)}{\partial\btheta}\right\| \left\|
    \dg(\x_i;\batheta_k) \right\|\\
  &\quad+\frac{\|\tu_N\|^2}{2a_N^2}\sumn\frac{\delta_i}{\pi(\x_i,y_i)}\frac{}{}\phi(\x_i;\batheta_k)\left|
    \dg_{(k)}(\x_i;\batheta_k) \right| \left\|\ddg(\x_i;\batheta_k)\right\|\\
  &\quad+\frac{\|\tu_N\|^2}{2a_N^3}\sumn\frac{\delta_i}{\pi(\x_i,y_i)}p(\x_i;\batheta_k)\left\|
    \frac{\partial^2\dg_{(k)}(\x_i;\batheta_k)}{\partial\btheta^2}
    \right\|\\
  &\quad+\frac{\|\tu_N\|^2}{2a_N^3}\sumn\frac{\delta_i}{\pi(\x_i,y_i)}y_i\left\|
    \frac{\partial^2\dg_{(k)}(\x_i;\batheta_k)}{\partial\btheta^2} \right\|\\
  &\leq \frac{\|\tu_N\|^2}{2a_N^3}\sumn\frac{\delta_i}{\pi(\x_i,y_i)}
    p(\x_i;\batheta_k)C(\x_i;\batheta_k)+\frac{\|\tu_N\|^2}{2a_N^3}\sumn\frac{\delta_i}{\pi(\x_i,y_i)}y_iB(\x_i)\\ 
  &\leq
    \frac{\|\tu_N\|^2e^{\acute{\alpha}_k-\alpha_{\rt}}e^{\alpha_{\rt}}}{2a_N^3}\sumn\frac{\delta_i}{\pi(\x_i,y_i)}
    e^{f(\x_i;\acute{\bbeta}_k)}
    C(\x_i;\batheta_k)+\frac{\|\tu_N\|^2}{2a_N^3}\sumn\frac{\delta_i}{\pi(\x_i,y_i)}y_iB(\x_i)\\
  &\leq
    \frac{\|\tu_N\|^2e^{\acute{\alpha}_k-\alpha_{\rt}}}{2Na_N}\sumn\frac{\delta_i}{\pi(\x_i,y_i)}
    e^{f(\x_i;\acute{\bbeta}_k)}
    C(\x_i;\batheta_k)+\frac{\|\tu_N\|^2}{2a_N^3}\sumn\frac{\delta_i}{\pi(\x_i,y_i)}y_iB(\x_i)\\
  &=\op,
\end{align*}
where
\begin{align*}
C(\x_i;\batheta) &=
\left|\dg_{(k)}(\x_i;\batheta)\right|\left\{\left\|\dg(\x_i;\batheta_k)\right\|^2
                 + \left\|\ddg(\x_i;\batheta)\right\| \right\}\\
&\quad + \left\|\frac{\partial\dg_{(k)}(\x_i;\batheta_k)}{\partial\btheta}\right\|
\left\| \dg(\x_i;\batheta_k) \right\| + \left\|
  \frac{\partial^2\dg_{(k)}(\x_i;\batheta_k)}{\partial\btheta^2} \right\|.
\end{align*}
Therefore, we proved that $\tilde{R}_{\rw(k)}=\op$. Next, we prove that
$\tilde{\Delta}_{\rw(k)}=\op$. We know that $\Exp\left[ a_N^{-2}\sumn\delta_i/\pi(\x_i,y_i)\left\{
y_i-p(\x_i;\btheta_{\rt})\right\}\ddg(\x_i;\btheta_{\rt})\right]=\0$. We also have
that for the every element of
$a_N^{-2}\sumn\delta_i/\pi(\x_i,y_i)\left\{y_i-p(\x_i;\btheta_{\rt})\right\}\ddg(\x_i;\btheta_{\rt})$, we
have
\begin{align*}
  &\Var\left[ a_N^{-2}\sumn\frac{\delta_i}{\pi(\x_i,y_i)}\left\{
    y_i-p(\x_i;\btheta_{\rt})
                 \right\}\ddg_{(jl)}(\x_i;\btheta_{\rt})\right]\\
  &\leq \frac{1}{a_N^4}\sumn\Exp \left\{
    p(\x_i;\btheta_{\rt})\ddg_{(jl)}^2(\x_i;\btheta_{\rt})
    \right\}
    \leq \frac{1}{a_N^2}\Exp[e^{f(\x;\bbeta_{\rt})}\|\ddg(\x;\btheta_{\rt})\|^2]\to0.
\end{align*}
Thus, due to Chebyshev's inequality, we know that
$\tilde{\Delta}_{\rw}=\op$. Since we know that
$\frac{1}{a_N^2}\sumn\delta_i/\pi(\x_i,y_i)\phi(\x_i;\btheta_{\rt})\dg^{\otimes2}(\x_i;\btheta_{\rt})=\bH_{\rw}=\Op$. Hence,
we have that 
$\dl_{\rw}(\htheta_{\rw})/a_N=\Op$.
Note that we also have
$(\lambda_N\hat{w}_{j'})/a_N
=(\lambda_N/a_N)|\hat{\beta}_{\rp(j')}|^{-\gamma}\cvp\infty$.
Therefore, 
\begin{align*}
&\Pr(j'\in\hat{\cA}_{\rw})
\leq\Pr\left\{\lambda_N\hat{w}_{j'}\sgn(\hat{\beta}_{(j')})
=\dl_{\rw}(\htheta_{\rw})\right\}
=\Pr\left\{\frac{\lambda_N\hat{w}_{j'}\sgn(\hat{\beta}_{(j')})}{a_N}
=\frac{\dl_{\rw}(\htheta_{\rw})}{a_N}\right\}\to0.
\end{align*}
This finished the proof of consistency for variable selection. 

\subsubsection{Proof when $p_{N}$ and $s_{N}$ diverge with $N$}\label{sec:ipw-divergep}

In this section, we prove Theorem~\ref{thm:asym-ipw} when $p_{N}$ and $s_{N}$
diverge with $N$.
Let $\eta_{i(j)}=\{\delta_i/\pi(\x_i,y_i)\}\left\{
y_i-p(\x_i;\btheta_{\rt})\right\}\dg_{(j)}(\x_i;\btheta_{\rt})$, we have
that the $j$-th element of $\dl_{\rw}(\btheta_{\rt})$,
$\dl_{\rw(j)}(\btheta_{\rt})=\sumn\eta_{i(j)}$, satisfies $\Exp(\eta_{i(j)})=0$, and
\begin{align*}
  &\Var(\eta_{i(j)}|\x_{i})
  =p(\x_{i};\btheta_{\rt})\{1-p(\x_{i};\btheta_{\rt})\}
  \{\dg_{(j)}(\x_{i};\btheta_{\rt})\}^{2}
  \left\{1-p(\x_{i};\btheta_{\rt})+\frac{p(\x_{i};\btheta_{\rt})}{\rho\varphi(\x_{i})}\right\}\\
  &\le e^{\alpha_{\rt}}e^{f(\x_{i};\bbeta_{\rt})}
  \{\dg_{(j)}(\x_{i};\btheta_{\rt})\}^{2}
  +\frac{1}{\rho}e^{2\alpha_{\rt}}\frac{e^{2f(\x_{i};\bbeta_{\rt})}}{
    \varphi(\x_{i})}\{\dg_{(j)}(\x_{i};\btheta_{\rt})\}^{2}
  \le e^{\alpha_{\rt}}(1+c_{N})B^{2}(\x_{i}).
\end{align*}
Since $|y_{i}-p(\x_{i};\btheta_{\rt})|\le 1$, we have that
$|\eta_{i,(j)}|\le\{1+\rho^{-1}\varphi^{-1}(\x_{i})\}B(\x_{i})\le
R_{N}/e^{\alpha_{\rt}}$, where $R_{N}=\max_{i=1,...,N}\{1+c_N/\varphi(\x_i)\}B(\x_i)$.
Conditional on $\Dn=\mathcal{F}(\x_{1},\x_{2},\ldots,\x_{N})$ and applying
Bernstein's inequality, we have that for any sequence $k_{N}$,
\begin{align*}
  \Pr\left(\max_{j=1,\ldots,k_{N}}\left|\sumn\eta_{i(j)}\right|>t\Big|\Dn\right)
  \le 2k_{N}\exp\left\{\frac{-t^{2}}{2e^{\alpha_{\rt}}(1+c_{N})\sumn B^{2}(\x_{i})
  +\frac{2R_{N}t}{3e^{\alpha_{\rt}}}}\right\}.
\end{align*}
Then, by taking
$
T=\sqrt{4e^{\alpha_{\rt}}(1+c_{N})\log\left(2k_{N}/\delta_{N}\right)
\sum_{i=1}^{N}B^{2}(\x_{i})}+(4/3)e^{-\alpha_{\rt}}R_{N}\log\left(2k_{N}/\delta_{N}\right)
$,
we have that
$\Pr\left(\max_{j=1,\ldots,k_{N}}\left|\sumn\eta_{i,(j)}\right|>T\right)
  =\Exp\left\{\Pr\left(\max_{j=1,\ldots,k_{N}}
  \left|\sumn\eta_{i,(j)}\right|>T\Big|\Dn\right)\right\}
<\delta_{N}$.
Now, since $(1/N)\sumn B^{2}(\x_{i})\le R_{N}^{2}$ and 
$\alpha_{\rt}\gtrsim\log\{(\max\{\log p_{N}, s_{N}\})/N\}$, 
\begin{align}\label{eq:ipw-infnorm}
  \Pr\left(\left\|\frac{1}{a_{N}^{2}}\sumn\eeta_{i(\cA)}\right\|_{\infty}
  \gtrsim\sqrt{\frac{\log s_{N}}{a_{N}^{2}}}\right)
  <\frac{1}{s_{N}},
  \Pr\left(\left\|\frac{1}{a_{N}^{2}}\sumn\eeta_{i(\cAc)}\right\|_{\infty}
  \gtrsim\sqrt{\frac{\log p_{N}}{a_{N}^{2}}}\right)
  <\frac{1}{p_{N}},
\end{align}
by taking appropriate $k_{N}$ and $\delta_{N}$.
Using exactly the same arguments and the multi-variate version of Bernstein's
inequality, we can also show that 
$
\Pr\left(\left\|a_{N}^{-2}\sumn\eeta_{i(\cA)}\right\|\gtrsim\sqrt{a_{N}^{-2}s_{N}}\right)
<5^{-s_{N}},
$
by taking $\delta_{N}=5^{-s_{N}}$. 
Next, we need to show $\bH_{\rw}$, where
\begin{align*}
  \bH_{\rw}=\frac{1}{a_{N}^{2}}\sumn\frac{\delta_{i}}{\pi(\x_{i},y_{i})}
  \phi(\x_{i};\btheta_{\rt})\dg_{(\cA)}^{\otimes2}(\x_{i};\btheta_{\rt})
  =\frac{1}{N}\sumn\frac{\delta_{i}}{\pi(\x_{i},y_{i})}
  \frac{e^{f(\x_{i};\bbeta_{\rt})}}{\{1+e^{g_{(\cA)}(\x_{i};\btheta_{\rt})}\}^{2}}
  \dg_{(\cA)}^{\otimes2}(\x_{i};\btheta_{\rt}),
\end{align*}
is positive-definite with a high probability. Using
the same arguments in~\cite{wang2021nonuniform} of (S.12), we know that
$\Var(\bH_{\rw,i}|\x_{i})\le B^{2}(\x_{i})/e^{\alpha_{\rt}}$, where
$\bH_{\rw}=(1/N)\sumn\bH_{\rw,i}$, and $\|\bH_{\rw,i}\|\le R_{N}/e^{\alpha_{\rt}}$.
Again, we have that
$\Pr\left(\left\|\sum\{\bH_{\rw,i}-\Exp(\bH_{\rw,i})\}\right\|
\gtrsim\sqrt{s_{N}e^{-\alpha_{\rt}}}+s_{N}e^{-\alpha_{\rt}}\right)<5^{-s_{N}}$,
which implies that $\|\bH_{\rw}-\Exp(\bH_{\rw})\|\lesssim
  \sqrt{s_{N}/a_{N}^{2}}+s_{N}/a_{N}^{2}$ and $\|\bH_{\rw}-\Exp(\bH_{\rw})\|_{\infty}\lesssim
  \sqrt{s_{N}^{2}/a_{N}^{2}}+s_{N}^{3/2}/a_{N}^{2}$.
Thus, it is easy to know that $\|\bH_{\rw}\|$ and $\|\bH_{\rw}\|_{\infty}$ are
bounded away from zero and bounded above with a probability tends to 1.
Now, due to the K-K-T conditions, we know that
$\htheta=(\htheta_{\rw(\cA)}\tp,\0\tp)\tp$ is the unique adaptive lasso
estimator if and only if
\begin{align*}
  \dl_{\rw}(\hat{\theta}_{(j)})=\lambda_{N}w_{j}\sgn(\hat{\theta}_{(j)}),
  \text{ for }j\in\cA, \text{ and }
  |\dl_{\rw}(\hat{\theta}_{(j)})|\le\lambda_{N}w_{j}, \text{ for }j\notin\cA.
\end{align*}
To prove the K-K-T conditions, it is sufficient to show that
\begin{align}\label{eq:hd-kkt}
\begin{split}
  &\sgn(\theta_{\rt(j)})(\theta_{\rt(j)}-\hat{\theta}_{\rw(j)})<|\theta_{\rt(j)}|, \forall j\in\cA, 
  \text{ and } \\
  &\frac{1}{a_{N}}\left|\sum_{i=1}^{N}
  \frac{\delta_{i}\{y_{i}-p(\x_{i(\cA)};\htheta_{\rw(\cA)})\}}{\pi(\x_{i},y_{i})}
  \dg_{(j)}(\x_{i(\cA)};\htheta_{\rw,(\cA)})\right|\le\frac{\lambda_{N}w_{j}}{a_{N}}, \forall j\notin\cA,
\end{split}
\end{align}
where $\htheta_{\rw(\cA)}$ is the solution of 
\begin{align*}
  \sumn\frac{\delta_{i}}{\pi(\x_{i},y_{i})}
  \{y_{i}-p(\x_{i(\cA)};\htheta_{\rw(\cA)})\}\dg_{(j)}(\x_{i(\cA)};\htheta_{\rw(\cA)})
  =\lambda_{N}w_{j}\sgn(\theta_{\rt(j)}), \forall j\in\cA.
\end{align*}
Now, we define
$\htheta_{(\cA)}^{*}=\arg\min_{\btheta_{(\cA)}}\gamma_{\rw}(\btheta_{(\cA)})$, where 
\begin{align*}
  \gamma_{\rw}(\btheta_{(\cA)})
  =\frac{1}{2}\btheta_{(\cA)}\tp(a_{N}^{2}\bH_{\rw})\btheta_{(\cA)}
  -\sumn\eeta_{i(\cA)}\tp(\btheta_{(\cA)}-\btheta_{\rt(\cA)})
  +\lambda_{N}\sum_{j\in\cA}w_{j}\sgn(\theta_{\rt(j)})\theta_{(j)}.
\end{align*}
Letting $\bm{\xi}$ be the vector of components to be
$w_{j}\sgn(\theta_{\rt(j)})$, for $j\in\cA$, we have that
\begin{align}\label{eq:ipw-theta-star}
  \btheta_{(\cA)}^{*}-\btheta_{\rt(\cA)}=(a_{N}^{2}\bH_{\rw})^{-1}
  \left\{\sumn\eeta_{i,(\cA)}-\lambda_{N}\bm{\xi}\right\}.
\end{align}
Therefore, 
\begin{align*}
  &\|\btheta_{(\cA)}^{*}-\btheta_{\rt(\cA)}\|
  \le\|\bH_{\rw}^{-1}\|\frac{1}{a_{N}^{2}}\left\|\sum_{i=1}^{N}\eeta_{i(\cA)}\right\|
  +\|\bH_{\rw}^{-1}\|\frac{1}{a_{N}^{2}}\lambda_{N}\|\bm{\xi}\|
  \lesssim\sqrt{\frac{s_{N}}{a_{N}^{2}}}
  +\frac{\lambda_{N}\sqrt{s_{N}}}{a_{N}^{2}b_{N}^{\gamma}},\\ 
  &\|\btheta_{(\cA)}^{*}-\btheta_{\rt(\cA)}\|_{\infty}
  \lesssim\frac{1}{a_{N}^{2}}\left\|\sum_{i=1}^{N}\eeta_{i(\cA)}\right\|_{\infty}
  +\frac{1}{a_{N}^{2}}\lambda_{N}\|\bm{\xi}\|_{\infty}
  \lesssim\sqrt{\frac{\log s_{N}}{a_{N}^{2}}}
  +\frac{\lambda_{N}}{a_{N}^{2}b_{N}^{\gamma}}
\end{align*}
Here, by the same arguments as~\cite{huang2008iterated} and Lemma 2
in~\cite{hjort2011asymptotics}, we can show that
\begin{align}\label{eq:ipw-theta}
  \|\htheta_{\rw(\cA)}-\btheta_{(\cA)}^{*}\|^{2}
  =o_{P}\left(\frac{s_{N}}{a_{N}^{2}}+\frac{\lambda_{N}^{2}s_{N}}{a_{N}^{4}b_{N}^{2\gamma}}\right),
\end{align}
which implies that 
$\|\htheta_{\rw(\cA)}-\btheta_{\rt,(\cA)}\|
\lesssim\sqrt{s_{N}/a_{N}^{2}}+(\lambda_{N}\sqrt{s_{N}})/(a_{N}^{2}b_{N}^{\gamma})$. Thus,
condition
$\sgn(\hat{\theta}_{\rw(j)})(\theta_{\rt(j)}-\hat{\theta}_{\rw(j)})<|\theta_{\rt(j)}|,\forall
j\in\cA$ satisfies under Assumption~\ref{asm:a6} and the fact that
$\|\htheta_{\rw(\cA)}-\btheta_{\rt(\cA)}\|_{\infty}\lesssim\sqrt{\log
s_{N}/a_{N}^{2}}+\lambda_{N}/(a_{N}^{2}b_{N}^{\gamma})$. Then, we turn to the second condition in~\eqref{eq:hd-kkt}.
From the previous proof of Theorem~\ref{thm:asym-ipw}, using the same notations, we have
that 
\begin{align*}
  &\frac{a_{N}\left\{\dl_{(j)}(\htheta_{\rw(\cA)})-\dl_{(j)}(\btheta_{\rt(\cA)})\right\}}{a_{N}^{2}}
  +\frac{1}{a_{N}^{2}}\sumn\frac{\delta_{i}}{\pi(\x_{i},y_{i})}\phi(\x_{i};\btheta_{\rt(\cA)})
  \dg\tp(\x_{i};\btheta_{\rt(\cA)})\hat{\bu}_{N(\cA)}\dg_{(j)}(\x_{i};\btheta_{\rt(\cA)})\\
  &=\tilde{\Delta}_{\rw(j)}+\tilde{R}_{\rw(j)}.
\end{align*}
From the proof of Theorem~\ref{thm:asym-ipw}, we know that
\begin{align*}
  |\tilde{\Delta}_{\rw(j)}|\le\frac{1}{a_{N}^{2}}
  \left|\sumn\frac{\delta_{i}}{\pi(\x_{i},y_{i})}
  \{y_{i}-p(\x_{i};\btheta_{\rt(\cA)})\}B(\x_{i})\right|\|\hat{\bu}_{N(\cA)}\|.
\end{align*}
Using the same arguments as we bound $\sumn\eta_{i(j)}$, we have that with
probability at least $1-a_{N}^{-1}$,
  $|\tilde{\Delta}_{\rw(j)}|\lesssim\sqrt{\log a_{N}/a_{N}^{2}}\|\hat{\bu}_{N(\cA)}\|$.
From the proof of Theorem~\ref{thm:asym-ipw}, we also have that
\begin{align*}
  |\tilde{R}_{\rw(j)}|\lesssim\frac{\|\hat{\bu}_{N(\cA)}\|^{2}}{Na_{N}}
  \sumn\frac{\delta_{i}}{\pi(\x_{i};y_{i})}B(\x_{i})
  +\frac{\|\hat{\bu}_{N(\cA)}\|^{2}}{a_{N}^{3}}
  \sumn\frac{\delta_{i}}{\pi(\x_{i};y_{i})}y_{i}B(\x_{i}).
\end{align*}
Note that since $\Var\{\delta_{i}/\pi(\x_{i};y_{i})B(\x_{i})|\x_{i}\}
=[\{1-\pi(\x_{i};y_{i})\}/\pi(\x_{i};y_{i})]B(\x_{i})\lesssim
B(\x_{i})/\{e^{\alpha_{\rt}}\varphi(\x_{i})\}$, we also have that with
probability at least $1-a_{N}^{-1}$,
\begin{align*}
  |\tilde{R}_{\rw(j)}|\lesssim\frac{\|\hat{\bu}_{N(\cA)}\|^{2}\log a_{N}}{a_{N}}
  \left\{\frac{1}{N}\sumn B(\x_{i})+\sqrt{\sumn\frac{B(\x_{i})}{N^{2}e^{\alpha_{\rt}}}}\right\}.
\end{align*}
Thus, we only need to consider the dominating term
\begin{align*}
  &\left|\frac{1}{a_{N}^{2}}\sumn\frac{\delta_{i}}{\pi(\x_{i},y_{i})}\phi(\x_{i};\btheta_{\rt(\cA)})
  \dg\tp(\x_{i};\btheta_{\rt(\cA)})\hat{\bu}_{N(\cA)}\dg_{(j)}(\x_{i};\btheta_{\rt(\cA)})\right|\\
  &\le\left|\frac{1}{a_{N}^{2}}\sumn\frac{\delta_{i}}{\pi(\x_{i};y_{i})}
  \phi(\x_{i};\btheta_{\rt(\cA)})B(\x_{i})\right|\|\hat{\bu}_{N(\cA)}\|_{\infty}
  \lesssim\|\hat{\bu}_{N(\cA)}\|_{\infty},
\end{align*}
where the last inequality can be obtained using arguments as we
bound $\bH_{\rw}$. Therefore,
\begin{align*}
  &\left|\frac{\dl_{(j)}(\htheta_{\rw(\cA)})}{a_{N}}\right|
  \lesssim\left|\frac{\dl_{(j)}(\btheta_{\rt(\cA)})}{a_{N}}\right|
  +\|\htheta_{\rw(\cA)}-\btheta_{\rt(\cA)}\|_{\infty}\\
  &\quad+\sqrt{\frac{\log a_{N}}{a_{N}^{2}}}
  \left(\sqrt{\frac{s_{N}}{a_{N}^{2}}}+\frac{\lambda_{N}\sqrt{s_{N}}}{a_{N}^{2}b_{N}^{\gamma}}\right)
  +\frac{\log a_{N}}{a_{N}}\left(\frac{s_{N}}{a_{N}^{2}}+\frac{\lambda_{N}^{2}s_{N}}{a_{N}^{4}b_{N}^{2\gamma}}\right)\\
  &\lesssim\sqrt{\log p_{N}}+\sqrt{\frac{\log s_{N}}{a_{N}^{2}}}
  +\frac{\log a_{N}}{a_{N}}
  \left(\sqrt{\frac{s_{N}}{a_{N}^{2}}}+\frac{\lambda_{N}\sqrt{s_{N}}}{a_{N}^{2}b_{N}^{\gamma}}\right)
  \lesssim \sqrt{\log p_{N}}.
\end{align*} 
The last inequality is because $s_{N}<p_{N}$ and Assumption~\ref{asm:a6}. Then,
we have that $\forall j\notin\cA$, and $a_{N}$ sufficiently large, since $\log
s_{N}\le\log p_{N}$ and $\sqrt{s_{N}} / a_{N}\lesssim\sqrt{\log p_{N}}$,
\begin{align*}
  \frac{a_{N}}{\lambda_{N}w_{j}}\left|\frac{\dl_{(j)}(\htheta_{\rw(\cA)})}{a_{N}}\right|
  \lesssim\frac{a_{N}\sqrt{\log p_{N}}}{\lambda_{N}w_{j}}
  +\frac{\sqrt{\log s_{N}}}{\lambda_{N}w_{j}}
  +\frac{\log a_{N}}{a_{N}}
  \left(\frac{\sqrt{s_{N}}}{\lambda_{N}w_{j}}
  +\frac{\sqrt{s_{N}}}{a_{N}w_{j}b_{N}^{\gamma}}\right)\to 0.
\end{align*}
Then, combining~\eqref{eq:ipw-theta-star},~\eqref{eq:ipw-theta}, and
$(\sqrt{s_{N}}\lambda_{N})/\sqrt{N_{1}}=\oo$, we have that
$\forall \bm{e}_{N}$, $\|\bm{e}_{N}\|=1$,
\begin{align*}
  \sqrt{N_{1}}\bm{e}_{N}\tp\V_{\rw(\cA)}^{-1/2}(\htheta_{\rw(\cA)}-\btheta_{\rt,(\cA)})
  =\bm{e}_{N}\tp\frac{\sqrt{\Exp\{e^{f(\x;\bbeta_{\rt})}\}}}{a_{N}}
  \V_{\rw(\cA)}^{-1/2}\bH_{\rw}^{-1}\sumn\eeta_{i,(\cA)}+\op\cvd\Nor(0,1),
\end{align*}
due to Lindeberg-Feller's central limit theorem indicated in the previous proof of
Theorem~\ref{thm:asym-ipw}.

\subsection{Proof of Proposition~\ref{prop:optlas}}

We first give a lemma for general optimal functions.
\begin{lemma}
\label{lem:optgeneral}
Assume that $h(\x)^2$ and $\varphi(\x)$ are integrable function with
$\Exp\{\varphi(\x)\}=1$. The optimal function $\varphi^{**}(\x)$ that minimize
the value $\Exp\left\{\frac{h^2(\x)}{\varphi(\x)}\right\}$ is given as
$\varphi^{**}(\x)=\frac{h(\x)}{\Exp\{h(\x)\}}$. 
\end{lemma}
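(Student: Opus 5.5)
The plan is to prove the lemma with the Cauchy--Schwarz inequality; no variational calculus is needed. Let $h$ be such that $h\ge0$. If $h$ can take negative values, replace it by $|h|$ throughout, since only $h^2$ enters the objective. Assume also that $\Exp\{h(\x)\}>0$; otherwise $h=0$ almost surely and the problem is trivial. Take any admissible $\varphi>0$ with $\Exp\{\varphi(\x)\}=1$.

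Write $h(\x)=\{h(\x)/\varphi^{1/2}(\x)\}\varphi^{1/2}(\x)$ and apply Cauchy--Schwarz:
\begin{equation*}
\Exp^2\{h(\x)\}\le\Exp\left\{\frac{h^2(\x)}{\varphi(\x)}\right\}\Exp\{\varphi(\x)\}=\Exp\left\{\frac{h^2(\x)}{\varphi(\x)}\right\}.
\end{equation*}
This gives a lower bound $\Exp^2\{h(\x)\}$ for the objective that does not depend on $\varphi$. It is finite because $h^2$ is integrable, and hence so is $h$.

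Next we check that $\varphi^{**}(\x)=h(\x)/\Exp\{h(\x)\}$ attains this bound. First, $\varphi^{**}$ is admissible, because $\Exp\{\varphi^{**}(\x)\}=1$. Second, substituting it into the objective gives
\begin{equation*}
\Exp\left\{\frac{h^2(\x)\Exp\{h(\x)\}}{h(\x)}\right\}=\Exp^2\{h(\x)\}.
\end{equation*}
On the set where $h=0$, the integrand is $0$ under the convention $0/0=0$. If one insists on $\varphi>0$, the infimum is still approached by mixing $\varphi^{**}$ with a small constant; this is a minor technical point.

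Finally, the equality condition in Cauchy--Schwarz identifies the optimum as essentially unique. Equality holds if and only if $h/\varphi^{1/2}$ and $\varphi^{1/2}$ are proportional almost surely, that is, $\varphi\propto h$. The normalisation $\Exp\{\varphi(\x)\}=1$ then forces the constant, so the minimiser is $\varphi^{**}$ almost surely.

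The main obstacle is only bookkeeping: handling points where $h=0$ and the strict positivity required of $\varphi$. We treat these by the convention $0/0=0$ or by the limiting argument above.
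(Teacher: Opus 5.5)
Your proof is correct and is essentially the paper's argument: Cauchy--Schwarz applied to $h(\x)=\{h(\x)/\sqrt{\varphi(\x)}\}\sqrt{\varphi(\x)}$ gives the lower bound $\Exp^2\{h(\x)\}$, which does not depend on $\varphi$, and the equality condition $\varphi\propto h$ together with $\Exp\{\varphi(\x)\}=1$ determines $\varphi^{**}$. Your extra bookkeeping covers points the paper leaves implicit: using $|h|$ when $h$ can be negative, the convention on the set $\{h=0\}$, and the limiting argument when $\varphi$ must be strictly positive. The paper can skip these because in its applications $h=p(\x;\btheta_{\rt})\|\cdot\|\ge 0$.
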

\begin{proof} of Lemma~\ref{lem:optgeneral}: 
Appying Cauchy-Schwartz inequality, we have that 
\begin{equation*}
\Exp\{h(\x)\}^2=\Exp\left\{\frac{h(\x)}{\sqrt{\varphi(\x)}}\sqrt{\varphi(\x)}\right\}^2
\leq\Exp\left\{\frac{h^2(\x)}{\varphi(\x)}\right\}\Exp\{\varphi(\x)\}
=\Exp\left\{\frac{h^2(\x)}{\varphi(\x)}\right\}.
\end{equation*}
Therefore, we have that
$\Exp\left\{\frac{h^2(\x)}{\varphi(\x)}\right\}\geq\Exp\{h(\x)\}^2$ and the
equality holds if and only if $\sqrt{\varphi(\x)}=Kh(\x)/\sqrt{\varphi(\x)}$,
where $K$ is a constant. Therefore, $\varphi^{**}(\x)=Kh(\x)$, and since
$\Exp\{\varphi^{**}(\x)\}=1$, we know that $\varphi^{**}(\x)=h(\x)/\Exp\{h(\x)\}$.
\end{proof}

Now, we prove Proposition~\ref{prop:optlas}.
\begin{proof}
We first calculate the optimal function that minimizes
$\text{tr}(\V_{\rw(\cA)})$. We have that 
\begin{align*}
&\text{tr}(\V_{\rw(\cA)})
=\text{tr}\left\{\M_{(\cA)}^{-1}\M_{\rw(\cA)}\M_{(\cA)}^{-1}\right\}\\
&=\text{tr}\left\{\M_{(\cA)}^{-1}\Exp\left[\left\{1+\frac{ce^{f(\x;\bbeta_{\rt})}}{\varphi(\x)}\right\}
e^{f(\x;\bbeta_{\rt})}\dg_{(\cA)}^{\otimes2}(\x;\btheta_{\rt})\right]\M_{(\cA)}^{-1}
\right\}.
\end{align*}
We focus on the values that related to $\varphi(\x)$. We know that 
\begin{equation*}
\Exp\left\{\varphi^{-1}(\x)e^{2f(\x;\bbeta_{\rt})}\dg_{(\cA)}^{\otimes2}(\x;\btheta_{\rt})\right\}
=e^{-2\alpha_{\rt}}\{1+\op\}\Exp\left\{\varphi^{-1}(\x)p^2(\x;\btheta_{\rt})\dg_{(\cA)}^{\otimes2}(\x;\btheta_{\rt})\right\}.
\end{equation*}
Therefore, we need to minimize 
\begin{align*}
&\text{tr}\left[\Exp\left\{\frac{p^2(\x;\btheta_{\rt})\M_{(\cA)}^{-1}
\dg_{(\cA)}^{\otimes2}(\x;\btheta_{\rt})\M_{(\cA)}^{-1}}{\varphi(\x)}\right\}\right]\\
&=\Exp\left[\text{tr}\left\{\frac{p^2(\x;\btheta_{\rt})\M_{(\cA)}^{-1}
\dg_{(\cA)}^{\otimes2}(\x;\btheta_{\rt})\M_{(\cA)}^{-1}}{\varphi(\x)}\right\}\right]
=\Exp\left[
\frac{p^2(\x;\btheta_{\rt})\|\M_{(\cA)}^{-1}
\dg_{(\cA)}(\x;\btheta_{\rt})\|^2}{\varphi(\x)}\right].
\end{align*}
Applying Lemma~\ref{lem:optgeneral}, we know that the minimizer is given as 
\begin{equation*}
\varphi_{\mmse}(\x)=\frac{p(\x;\btheta_{\rt})\|\M_{(\cA)}^{-1}
\dg_{(\cA)}(\x;\btheta_{\rt})\|}{\Exp\left\{p(\x;\btheta_{\rt})\|\M_{(\cA)}^{-1}
\dg_{(\cA)}(\x;\btheta_{\rt})\|\right\}}.
\end{equation*}
Next, we calculate the optimal function that minimize $\text{tr}(\M_{\rw(\cA)})$. We have
that  
\begin{align*}
\text{tr}(\M_{\rw(\cA)})
=\text{tr}\left\{\Exp\left[\left\{1+\frac{ce^{f(\x;\bbeta_{\rt})}}{\varphi(\x)}\right\}
e^{f(\x;\bbeta_{\rt})}\dg_{(\cA)}^{\otimes2}(\x;\btheta_{\rt})\right]
\right\}.
\end{align*}
Therefore, we need to minimize 
$
\text{tr}\left[\Exp\left\{\frac{p^2(\x;\btheta_{\rt})
\dg_{(\cA)}^{\otimes2}(\x;\btheta_{\rt})}{\varphi(\x)}\right\}\right]
=\Exp\left[
\frac{p^2(\x;\btheta_{\rt})\|
\dg_{(\cA)}(\x;\btheta_{\rt})\|^2}{\varphi(\x)}\right].
$
Applying Lemma~\ref{lem:optgeneral}, we know that the minimizer is given as 
\begin{equation*}
\varphi_{\mvc}(\x)=\frac{p(\x;\btheta_{\rt})\|
\dg_{(\cA)}(\x;\btheta_{\rt})\|}{\Exp\left\{p(\x;\btheta_{\rt})\|
\dg_{(\cA)}(\x;\btheta_{\rt})\|\right\}}.
\end{equation*}
\end{proof}

\subsection{Proof of Theorem~\ref{thm:optprb}}

In the proof of Theorem~\ref{thm:asym-ipw}, we know that 
$a_N(\htheta^{\radp}_{\rw(\cA)}-\btheta_{\rt(\cA)})\cvd\M_{(\cA)}^{-1}\M_{\rw(\cA)}^{1/2}\W_{(\cA)}$.
To simplify the representation, letting
$h(\btheta)=e^{-2\alpha_{\rt}}\text{MSPE}(\btheta)
=e^{-2\alpha_{\rt}}\Exp\left[\left\{p(\x;\btheta)-p(\x;\btheta_{\rt})\right\}^2\right]$,
we have that 
$\partial\left\{p(\x;\btheta)-p(\x;\btheta_{\rt})\right\}^2/\partial\btheta
=2\left\{p(\x;\btheta)-p(\x;\btheta_{\rt})\right\}\phi(\x;\btheta)\dg(\x;\btheta)$,
and 
\begin{align*}
\frac{\partial^2\left\{p(\x;\btheta)-p(\x;\btheta_{\rt})\right\}^2}{\partial\btheta\partial\btheta\tp}
&=2\phi^2(\x;\btheta)\dg^{\otimes2}(\x;\btheta)
+2\left\{p(\x;\btheta)-p(\x;\btheta_{\rt})\right\}\frac{\partial\phi(\x;\btheta)}{\partial\btheta}\dg(\x;\btheta)\\
&\quad+2\left\{p(\x;\btheta)-p(\x;\btheta_{\rt})\right\}\phi(\x;\btheta)\ddg(\x;\btheta).
\end{align*}
Note that $|p(\x;\btheta)-p(\x;\btheta_{\rt})|\leq 2$ and thus,
\begin{equation*}
\left|\frac{\partial\left\{p(\x;\btheta)-p(\x;\btheta_{\rt})\right\}^2}{\partial\btheta}\right|
\leq
2\left|p(\x;\btheta)-p(\x;\btheta_{\rt})\right|\left|\phi(\x;\btheta)\dg(\x;\btheta)\right|\leq
4B(\x),
\end{equation*}
and 
\begin{align*}
&\left|\frac{\partial^2\left\{p(\x;\btheta)-p(\x;\btheta_{\rt})\right\}^2}{\partial\btheta\partial\btheta\tp}\right|\\
&\leq 2\left|\phi^2(\x;\btheta)\dg^{\otimes2}(\x;\btheta)\right|
+2\left|\left\{p(\x;\btheta)-p(\x;\btheta_{\rt})\right\}\right|\left|\frac{\partial\phi(\x;\btheta)}{\partial\bt}\dg(\x;\btheta)\right|\\
&\quad+2\left|\left\{p(\x;\btheta)-p(\x;\btheta_{\rt})\right\}\right|\left|\phi(\x;\btheta)\ddg(\x;\btheta)\right|
\leq 10B(\x).
\end{align*}
Hence, due to dominating convergence theorem, we know that the expectation and
derivatives are interchangeable. Thus, we have that 
\begin{equation*}
\frac{\partial h(\btheta)}{\partial\btheta}
=e^{-2\alpha_{\rt}}\Exp\left[2\left\{p(\x;\btheta)-p(\x;\btheta_{\rt})\right\}\phi(\x;\btheta)\dg(\x;\btheta)\right],
\end{equation*}
and 
\begin{align*}
\frac{\partial h(\btheta)}{\partial\btheta\partial\btheta\tp}
&=e^{-2\alpha_{\rt}}\Exp\Big[2\phi^2(\x;\btheta)\dg^{\otimes2}(\x;\btheta)
+2\left\{p(\x;\btheta)-p(\x;\btheta_{\rt})\right\}\frac{\partial\phi(\x;\btheta)}{\partial\btheta}\dg(\x;\btheta)\\
&\quad+2\left\{p(\x;\btheta)-p(\x;\btheta_{\rt})\right\}\phi(\x;\btheta)\ddg(\x;\btheta)
\Big].
\end{align*}
Due to dominating convergence theorem, we also know that the first and second
derivative of $h(\btheta)$ are continuous. We have that 
\begin{equation*}
\frac{\partial h(\btheta)}{\partial\btheta}\Big|_{\btheta=\btheta_{\rt}}=\0,
\frac{\partial^2h(\bt)}{\partial\btheta\partial\btheta\tp}\Big|_{\btheta=\btheta_{\rt}}
=2e^{-2\alpha_{\rt}}\Exp\left[\phi^2(\x;\btheta_{\rt})\dg^{\otimes2}(\x;\btheta_{\rt})\right]
\end{equation*}
Note that
$e^{-2\alpha_{\rt}}\Exp\left[\phi^2(\x;\btheta_{\rt})\dg^{\otimes2}(\x;\btheta_{\rt})\right]\to\bOmega$
by the dominating convergence theorem. Now
applying Theorem 1.12(ii) in \cite{Shao2003}, we have that 
\begin{align*}
&a_N^2\left\{h(\htheta^{\radp}_{\rw(\cA)})-h(\btheta_{\rt(\cA)})\right\}
=a_N^2e^{-2\alpha_{\rt}}\Exp\left[\left\{p(\x;\htheta_{\rw(\cA)})-p(\x;\btheta_{\rt(\cA)})\right\}^2\right]\\
&\cvd\frac{1}{2!}2\W_{(\cA)}\tp\M_{\rw(\cA)}^{1/2}\M_{(\cA)}^{-1}\bOmega_{(\cA)}\M_{(\cA)}^{-1}\M_{\rw(\cA)}^{1/2}\W_{(\cA)}
=\W_{(\cA)}\tp\M_{\rw(\cA)}^{1/2}\M_{(\cA)}^{-1}\bOmega_{(\cA)}\M_{(\cA)}^{-1}\M_{\rw(\cA)}^{1/2}\W_{(\cA)}.
\end{align*}
Considering $N_1=a_N^2\Exp\left\{e^{f(\x;\bbeta_{\rt})}\right\}$,
applying Slutsky's theorem, we have that 
\begin{align*}
&N_1e^{-2\alpha_{\rt}}\Exp\left[\left\{p(\x;\htheta_{\rw(\cA)})-p(\x;\btheta_{\rt(\cA)})\right\}^2\right]\\
&\cvd
\Exp^{-1}\left\{e^{f(\x;\bbeta_{\rt})}\right\}\W_{(\cA)}\tp\M_{\rw(\cA)}^{1/2}\M_{(\cA)}^{-1}
\bOmega_{(\cA)}\M_{(\cA)}^{-1}\M_{\rw(\cA)}^{1/2}\W_{(\cA)}.
\end{align*}
Since $\W_{\rw(\cA)}=\Nor(\0,\I)$, we have 
\begin{align*}
&\Exp\left\{\W_{(\cA)}\tp\M_{\rw(\cA)}^{1/2}\M_{(\cA)}^{-1}\bOmega_{(\cA)}\M_{(\cA)}^{-1}\M_{\rw(\cA)}^{1/2}\W_{(\cA)}\right\}
=\text{tr}\left\{\M_{(\cA)}^{-1}\bOmega_{(\cA)}\M_{(\cA)}^{-1}\M_{\rw(\cA)}\right\}\\
&=\text{tr}\left\{\M_{(\cA)}^{-1}\bOmega_{(\cA)}\M_{(\cA)}^{-1}\Exp\left[\left\{1+\frac{ce^{f(\x;\bbeta_{\rt})}}{\varphi(\x)}\right\}
e^{f(\x;\bbeta_{\rt})}\dg_{(\cA)}^{\otimes2}(\x;\btheta_{\rt})\right].
\right\}
\end{align*}
We focus on the values that related to $\varphi(\x)$. We know that 
\begin{equation*}
\Exp\left\{\varphi^{-1}(\x)e^{2f(\x;\bbeta_{\rt})}\dg_{(\cA)}^{\otimes2}(\x;\btheta_{\rt})\right\}
=e^{-2\alpha_{\rt}}\{1+\op\}\Exp\left\{\varphi^{-1}(\x)p^2(\x;\btheta_{\rt})\dg_{(\cA)}^{\otimes2}(\x;\btheta_{\rt})\right\}.
\end{equation*}
Therefore, we need to minimize 
\begin{align*}
&\text{tr}\left[\Exp\left\{\M_{(\cA)}^{-1}\bOmega_{(\cA)}\M_{(\cA)}^{-1}
\frac{p^2(\x;\btheta_{\rt})\dg_{(\cA)}^{\otimes2}(\x;\btheta_{\rt})}{\varphi(\x)}\right\}\right]\\
&=\Exp\left[\text{tr}\left\{\M_{(\cA)}^{-1}\bOmega_{(\cA)}\M_{(\cA)}^{-1}
\frac{p^2(\x;\btheta_{\rt})\dg_{(\cA)}^{\otimes2}(\x;\btheta_{\rt})}{\varphi(\x)}\right\}\right]\\
&=\Exp\left[\text{tr}\left\{
\frac{p^2(\x;\btheta_{\rt})\bOmega_{(\cA)}^{1/2}\M_{(\cA)}^{-1}
\dg_{(\cA)}^{\otimes2}(\x;\btheta_{\rt})\M_{(\cA)}^{-1}\bOmega_{(\cA)}^{1/2}}{\varphi(\x)}\right\}\right]\\
&=\Exp\left[
\frac{p^2(\x;\btheta_{\rt})\|\bOmega_{(\cA)}^{1/2}\M_{(\cA)}^{-1}
\dg_{(\cA)}(\x;\btheta_{\rt})\|^2}{\varphi(\x)}\right].
\end{align*}
Applying Lemma~\ref{lem:optgeneral}, we know that the minimizer is given as 
\begin{equation*}
\varphi_{\mpr}(\x)=\frac{p(\x;\btheta_{\rt})\|\bOmega_{(\cA)}^{1/2}\M_{(\cA)}^{-1}
\dg_{(\cA)}(\x;\btheta_{\rt})\|}{\Exp\left\{p(\x;\btheta_{\rt})\|\bOmega_{(\cA)}^{1/2}\M_{(\cA)}^{-1}
\dg_{(\cA)}(\x;\btheta_{\rt})\|\right\}}.
\end{equation*}

\subsection{Proof of Proposition~\ref{prop:invar}}\label{sec:prfprop1}

First, we know that $g(\x;\btheta)=g(\A\x;\B\tp\btheta)$. Since the
equation holds for all $\x$ and $\btheta$, if we take the derivative with respect to
$\btheta$ on both sides, the equation still holds. Thus, we have that 
$\dg(\x;\btheta)=\B\dg(\A\x;\B\tp\btheta)$.
If we re-scale the whole covariate variable $\x$ to $\tx=\A\x$, we need to
reparameterize $\btheta_{\rt}$ to $\ttheta_{\rt}=\B\tp\btheta_{\rt}$ to remain the
problem invariant. We have that for $\tx$ and $\ttheta_{\rt}$
$\dg(\tx;\ttheta_{\rt})=\dg(\A\x;\B\tp\btheta_{\rt})
=\B^{-1}\dg(\x;\btheta_{\rt})$.
Now, we know that 
\begin{equation*}
\tilde{\M}=\Exp\{e^{f(\tx;\tbeta_{\rt})}\dg^{\otimes2}(\tx;\ttheta_{\rt})\}
=\B^{-1}\Exp\{e^{f(\x;\bbeta_{\rt})}\dg^{\otimes2}(\x;\btheta_{\rt})\}(\B\tp)^{-1}
=\B^{-1}\M(\B\tp)^{-1},
\end{equation*}
and 
\begin{equation*}
\tilde{\bOmega}=\Exp\{e^{2f(\tx;\tbeta_{\rt})}\dg^{\otimes2}(\tx;\ttheta_{\rt})\}
=\B^{-1}\Exp\{e^{2f(\x;\bbeta_{\rt})}\dg^{\otimes2}(\x;\btheta_{\rt})\}(\B\tp)^{-1}
=\B^{-1}\bOmega(\B\tp)^{-1}.
\end{equation*}
Thus, we have that 
\begin{align*}
&\|\tilde{\bOmega}^{1/2}\tilde{\M}^{-1}\dg(\tx;\ttheta_{\rt})\|^2
=\dg\tp(\tx;\ttheta_{\rt})\tilde{\M}^{-1}
\tilde{\bOmega}\tilde{\M}^{-1}\dg(\tx;\ttheta_{\rt})\\
&=\dg\tp(\x;\btheta_{\rt})(\B^{-1})\tp(\B\tp)\M^{-1}\B
\B^{-1}\bOmega(\B\tp)^{-1}\B\tp\M^{-1}\B\B^{-1}\dg(\x;\btheta_{\rt})\\
&=\dg\tp(\x;\btheta_{\rt})\M^{-1}\bOmega\M^{-1}\dg(\x;\btheta_{\rt})
=\|\bOmega^{1/2}\M^{-1}\dg(\x;\btheta_{\rt})\|^2.
\end{align*}
Therefore, the leveraging term is invariant. For the probability term, we know
that is only related to value $g(\x;\btheta_{\rt})=g(\tx;\ttheta_{\rt})$, we know
that it does not change after scaling inactive variables. This complete the
proof.

\subsection{Proof of Theorem~\ref{thm:asym-las}}\label{sec:prfthm2}

\subsubsection{Proof when $p_{N}$ and $s_{N}$ are fixed}\label{sec:lik-fixp}

We start with computing the probability of $y_{i}=1$ in the resultant
subsamples. Given the condition that the data point $(y_{i},\x_{i})$ is
subsampled, e.g., $\delta_{i}=1$, we have that
\begin{align*}
  &\Pr(y_{i}=1|\x_{i},\delta_{i}=1)
  =\frac{\Pr(\delta_{i}=1|y_{i}=1,\x_{i})\Pr(y_{i}=1|\x_{i})}{
    \sum_{\nu=0,1}\Pr(\delta_{i}=1|y_{i}=\nu,\x_{i})\Pr(y_{i}=\nu|\x_{i})}\\
  &=\frac{\frac{e^{g(\x_{i};\btheta)}}{1+e^{g(\x_{i};\btheta)}}}{\frac{e^{g(\x_{i};\btheta)}}{1+e^{g(\x_{i};\btheta)}}
  +\rho\varphi(\x_{i})\frac{1}{1+e^{g(\x_{i};\btheta)}}}
  =\frac{e^{g(\x_{i};\btheta)+l_{i}}}{1+e^{g(\x_{i};\btheta)+l_{i}}},
\end{align*}
where $l_{i}=-\log\{\rho\varphi(\x_{i})\}$. Therefore,
consider the maximum sampled conditional likelihood function with adaptive lasso
penalty:
\begin{align*}
&\Q_{\ruw}^{\htheta_{\rp}}(\btheta)
=-\sumn\delta_i^{\htheta_{\rp}}[y_ig(\x_i;\btheta)-\log\{1+e^{g(\x_i;\btheta)+l_i}\}]+
\lambda_N\sumjp\hat{w}_j|\beta_{(j)}|\\    
&=-\ell_{\ruw}^{\htheta_{\rp}}(\btheta)+\lambda_N\sumjp\hat{w}_j|\beta_{(j)}|,
-\ell_{\ruw}^{\htheta_{\rp}}(\btheta)+\lambda_N\sumjp\frac{1}{|\hat{\beta}_{\rp(j)}|^{\gamma}}|\beta_{(j)}|,
\end{align*}
We have
$\tu_N=a_N(\htheta_{\ruw}^{\htheta_{\rp}}-\btheta_{\rt})$ is 
the minimizer of 
$
\gamma_{\ruw}^{\htheta_{\rp}}(\bu)=Q^{\htheta_{\rp}}_{\ruw}(\btheta_{\rt}+a_N^{-1}\bu)-Q^{\htheta_{\rp}}_{\ruw}(\btheta_{\rt}).
$
\paragraph{Asymptotic normality:} We prove the asymptotic normality part in this
paragraph. By Taylor's expansion, 
\begin{align*}
\gamma_{\ruw}^{\htheta_{\rp}}(\bu)&=-\frac{1}{a_N}\bu\tp\dl_{\ruw}^{\htheta_{\rp}}(\btheta_{\rt})+\frac{1}{2a_N^2}
\sumn\delta_i^{\htheta_{\rp}}\phi_{\pi}^{\htheta_{\rp}}(\x_i;\btheta_{\rt})\{\bu\tp\dg(\x_i;\btheta_{\rt})\}^2-\Delta_{\ruw}^{\htheta_{\rp}}+R_{\ruw}^{\htheta_{\rp}}\\
&\quad+\frac{\lambda_N}{a_N}\sumjp \hat{w}_j a_N\left( \left|
\beta_{\rt(j)}+\frac{u_{(j)}}{a_N} \right|-|\beta_{\rt(j)}| \right).
\end{align*}
First, we consider the limit behavior of the MSCL function. In
\cite{wang2021nonuniform}, the authors proved that under
Assumptions~\ref{asm:a1} and~\ref{asm:a3},   
$%
a_N^{-1}\dl_{\ruw}^{\htheta_{\rp}}(\btheta_{\rt})\cvd(\bm{\Lambda}^{\rp}_{\ruw})^{1/2}\W,
$%
\begin{equation*}
\frac{1}{a_N^2}
\sumn\delta_i^{\htheta_{\rp}}\phi_{\pi}^{\htheta_{\rp}}(\x_i;\btheta_{\rt})\dg^{\otimes2}(\x_i;\btheta_{\rt})
\cvp\bm{\Lambda}^{\rp}_{\ruw},
\end{equation*}
and  
$\Delta_{\ruw}^{\htheta_{\rp}}=\op,\quad R_{\ruw}^{\htheta_{\rp}}=\op$. Thus,
$-\ell_{\ruw}^{\htheta_{\rp}}(\btheta_{\rt})\cvd-\bu\tp(\bm{\Lambda}^{\rp}_{\ruw})^{1/2}\W+0.5\bu\tp\bm{\Lambda}^{\rp}_{\ruw}\bu+\op$.
Next, we consider the limit behavior of the adaptive lasso penalty. Since we
assume $\hat{\bbeta}_{\rp}$ to be a consistent estimator, we know that when
$j\in\cA$, i.e., $\beta_{\rt(j)}\neq0$, 
$\hat{w}_j=|\hat{\beta}_{\rp(j)}|^{-\gamma}\cvp|\beta_{\rt(j)}|^{-\gamma}>0$,
and $a_N \left( \left|\beta_{\rt(j)}+u_{(j)}/a_N\right|-|\beta_{\rt(j)}|
\right)\to\sgn(\beta_{\rt(j)})u_{(j)}$.
Therefore, for $j\in\cA$, we have that
$(\lambda_N/a_N)\hat{w}_ja_N \left( \left| \beta_{\rt(j)}+u_{(j)}/a_N
\right|-|\beta_{\rt(j)}| \right)\cvp0$,
since $\lambda_N/a_N=\lambda_N/\sqrt{Ne^{\alpha_{\rt}}}\to0$.
On the other hand, when $j\in\cAc$, i.e., $\beta_{\rt(j)}=0$, we have that for
$u_{(j)}\neq 0$, 
\begin{equation*}
\frac{\lambda_N}{a_N}\hat{w}_ja_N \left( \left| \beta_{\rt(j)}+\frac{u_{(j)}}{a_N}
\right|-|\beta_{\rt(j)}| \right)
=\frac{\lambda_N}{a_N}\hat{w}_j|u_{(j)}|=\frac{\lambda_N}{a_N|\hat{\beta}_{\rp(j)}|^{\gamma}}
|u_{(j)}|\cvp\infty,
\end{equation*}
since $\lambda_N/(\sqrt{Ne^{\alpha_{\rt}}}|\hat{\beta}_{\rp(j)}|^{\gamma})\cvp\infty$.
Then, we have that $\gamma_{\ruw}^{\htheta_{\rp}}(\bu)\cvd\gamma_{\ruw}(\bu)$,
where  
\begin{equation*}
\gamma_{\ruw}(\bu)=
\begin{cases}
\frac{1}{2}\bu_{(\cA)}\tp\bm{\Lambda}^{\rp}_{\ruw}\bu_{(\cA)}-\bu_{(\cA)}\tp(\bm{\Lambda}^{\rp}_{\ruw})^{1/2}\W_{(\cA)} &
\text{if }u_{(j)}=0, \forall j\notin\cA  \\
\infty   & \text{otherwise}.  \\
\end{cases}
\end{equation*}
Note that the unique minimizer of $\gamma_{\ruw}(\bu)$ is
$((\bm{\Lambda}^{\rp}_{\ruw})^{-1}\W_{(\cA)}\tp,\0)\tp$ if we put all
the indexes of active 
variables in front. Thus, following the results of \cite{geyer1994asymptotics}
and \cite{fu2000asymptotics}, we have the minimizer of
$\gamma_{\ruw}^{\htheta_{\rp}}(\bu)$, $\tu_N$, satisfies that 
$\tu_{N(\cA)}\cvd(\bm{\Lambda}^{\rp}_{\ruw})^{1/2}\W_{(\cA)}$, and
$\tu_{N(\cAc)}\cvd\0$. 
Thus,
$\tu_{N(\cA)}=a_N(\htheta_{\ruw(\cA)}-\btheta_{\rt(\cA)})
\cvd\Nor(\0,(\bm{\Lambda}^{\rp}_{\ruw})^{-1})$.
We know that
\begin{equation*}
\sqrt{N_1}\V_{\ruw(\cA)}^{-1/2}(\bm{\Lambda}^{\rp}_{\ruw})^{-1/2}
=a_{N}\I.
\end{equation*}
Hence, applying Slusky's theorem, we have 
$\sqrt{N_1}\V_{\ruw(\cA)}^{-1/2}(\htheta_{\ruw(\cA)}-\btheta_{\rt(\cA)})
\cvd\Nor(\0,\I)$.
Therefore, we prove the part of asymptotic normality.

\paragraph{Consistency in variable selection}

We prove the consistency in variable selection in this paragraph. From the
result of asymptotic normality, we know
that $\hat{\beta}_{\ruw(j)}\cvp\beta_{\rt(j)}$ for every $j\in 
\cA$ and therefore $\Pr(j\in \hat{\cA}_{\ruw})\to1$. Thus, we only consider
$j'\in\cAc$. When $j'\in \hat{\cA}_{\ruw}$, we know that by K-K-T
optimality conditions, we have 
$\lambda_N\hat{w}_{j'}\sgn(\hat{\beta}_{\ruw(j')})
=\dl_{\ruw}^{\htheta_{\rp}}(\htheta_{\ruw})$,
which means
\begin{align*}
&\frac{\lambda_N\hat{w}_{j'}\sgn(\hat{\beta}_{\ruw(j')})}{a_N}=\frac{\dl_{\ruw}^{\htheta_{\rp}}
(\htheta_{\ruw})}{a_N}
=\frac{\dl_{\ruw}^{\htheta_{\rp}}(\btheta_{\rt})}{a_N}+\frac{a_N
\left\{\dl_{\ruw}^{\htheta_{\rp}}(\htheta_{\ruw}^{\htheta_{\rp}})-\dl_{\ruw}^{\htheta_{\rp}}(\btheta_{\rt})
\right\}}{a_N^2}=I_1+I_2.
\end{align*}
We have known that
$I_1=\dl_{\ruw}^{\htheta_{\rp}}(\btheta_{\rt})/a_N\cvd\W_{\ruw}$. We 
now prove that
proof that $I_2=\Op$. We apply Taylor expansion to the $k$-th element of
$\dl_{\ruw}^{\htheta_{\rp}}(\htheta_{\ruw}^{\htheta_{\rp}})$ and have that
\begin{equation*}
\frac{a_N \left\{
    \dl_{(k)}^{\htheta_{\rp}}(\htheta_{\ruw}^{\htheta_{\rp}})-\dl_{(k)}^{\htheta_{\rp}}(\btheta_{\rt})
  \right\}}{a_N^2}=-\frac{1}{a_N^2}\sumn\delta_i^{\htheta_{\rp}}\phi_{\pi}^{\htheta_{\rp}}(\x_i;\btheta_{\rt})
\dg_{(k)}(\x_i;\btheta_{\rt})\dg\tp(\x_i;\btheta_{\rt})\tu_N+\tilde{\Delta}_{(k)}^{\htheta_{\rp}}+\tilde{R}_{(k)}^{\htheta_{\rp}},
\end{equation*}
where,
$\tu_N=a_N(\htheta_{\ruw}^{\htheta_{\rp}}-\btheta_{\rt})=\Op$,
\begin{equation*}
\tilde{\Delta}_{\ruw(k)}^{\htheta_{\rp}}=\frac{1}{a_N^2}\sumn\delta_i^{\htheta_{\rp}}
\left\{ y_i-p_{\pi}^{\htheta_{\rp}}(\x_i;\btheta_{\rt}) \right\}\sum_{j=1}^d\ddg_{(kj)}(\x_i;\btheta_{\rt})\tilde{u}_{(j)},
\end{equation*}
and 
\begin{align*}
\tilde{R}_{\ruw(k)}^{\htheta_{\rp}}&=-\frac{1}{2a_N^3}\sumn\delta_i^{\htheta_{\rp}}\phi_{\pi}^{\htheta_{\rp}}(\x_i;\batheta_k)
\left\{1-2p_{\pi}^{\htheta_{\rp}}(\x_i;\batheta_k)
\right\}\dg_{(k)}(\x_i;\batheta_k)\tu_N\tp\dg^{\otimes2}(\x_i;\batheta_k)\tu_N\\
&\quad-\frac{2}{2a_N^3}\sumn\delta_i^{\htheta_{\rp}}\phi_{\pi}^{\htheta_{\rp}}(\x_i;\batheta_k)
\left\{\tu_N\tp\frac{\partial\dg_{(k)}(\x_i;\batheta_k)}{\partial\btheta}\right\} \left\{\tu_N\tp\dg(\x_i;\batheta_k) \right\}\\
&\quad-\frac{1}{2a_N^2}\sumn\delta_i^{\htheta_{\rp}}\phi_{\pi}^{\htheta_{\rp}}(\x_i;\batheta_k)\dg_{(k)}(\x_i;\batheta_k)
\left\{ \tu_N\tp\ddg(\x_i;\batheta_k)\tu_N \right\}\\
&\quad+\frac{1}{2a_N^3}\sumn\delta_i^{\htheta_{\rp}}\left\{
y_i-p_{\pi}^{\htheta_{\rp}}(\x_i;\batheta_k)
\right\}\tu_N\tp \frac{\partial^2\dg_{(k)}(\x_i;\batheta_k)}{\partial\btheta^2}\tu_N.
\end{align*}
where $\batheta_k$ is between $\htheta_{\ruw}$ and $\btheta_{\rt}$. First, we
prove 
that $\tilde{R}_{\ruw(k)}$ is $\op$. We have that  
\begin{align*}
  |\tilde{R}_{\ruw(k)}^{\htheta_{\rp}}|
  & \leq\frac{\|\tu_N\|^2}{2a_N^3}\sumn\delta_i^{\htheta_{\rp}}\phi_{\pi}^{\htheta_{\rp}}(\x_i;\batheta_k)
    \left|1-2p_{\pi}^{\htheta_{\rp}}(\x_i;\batheta_k)
    \right|\left|\dg_{(k)}(\x_i;\batheta_k)\right|\left\|\dg(\x_i;\batheta_k)\right\|^2\\
  &\quad+\frac{2\|\tu_N\|^2}{2a_N^3}\sumn\delta_i^{\htheta_{\rp}}\phi_{\pi}^{\htheta_{\rp}}(\x_i;\batheta_k)
    \left\|\frac{\partial\dg_{(k)}(\x_i;\batheta_k)}{\partial\btheta}\right\| \left\|
    \dg(\x_i;\batheta_k) \right\|\\
  &\quad+\frac{\|\tu_N\|^2}{2a_N^3}\sumn\delta_i^{\htheta_{\rp}}\phi_{\pi}^{\htheta_{\rp}}(\x_i;\batheta_k)\left|
    \dg_{(k)}(\x_i;\batheta_k) \right| \left\|\ddg(\x_i;\batheta_k)\right\|\\
  &\quad+\frac{\|\tu_N\|^2}{2a_N^3}\sumn\delta_i^{\htheta_{\rp}}p_{\pi}^{\htheta_{\rp}}(\x_i;\batheta_k)\left\|
    \frac{\partial^2\dg_{(k)}(\x_i;\batheta_k)}{\partial\btheta^2}
    \right\|+\frac{\|\tu_N\|^2}{2a_N^3}\sumn\delta_i^{\htheta_{\rp}}y_i\left\|
    \frac{\partial^2\dg_{(k)}(\x_i;\batheta_k)}{\partial\btheta^2} \right\|\\
  &\leq \frac{\|\tu_N\|^2}{2a_N^3}\sumn\delta_i^{\htheta_{\rp}}
    p_{\pi}^{\htheta_{\rp}}(\x_i;\batheta_k)C(\x_i;\batheta_k)+\frac{\|\tu_N\|^2}{2a_N^3}\sumn\delta_i^{\htheta_{\rp}}y_iB(\x_i)\\ 
  &\leq
    \frac{\|\tu_N\|^2e^{\acute{\alpha}_k-\alpha_{\rt}}e^{\alpha_{\rt}}}{2a_N^3}\sumn\delta_i^{\htheta_{\rp}}e^{f(\x_i;\acute{\bbeta}_k)
    - \log\{ \rho\varphi(\x_i)
    \}}C(\x_i;\batheta_k)+\frac{\|\tu_N\|^2}{2a_N^3}\sumn\delta_i^{\htheta_{\rp}}y_iB(\x_i)\\
  &\leq
    \frac{\|\tu_N\|^2e^{\acute{\alpha}_k-\alpha_{\rt}}}{2Na_N}\sumn\delta_i^{\htheta_{\rp}}e^{f(\x_i;\acute{\bbeta}_k)-\log\{
    \rho\varphi(\x_i)
    \}}C(\x_i;\batheta_k)+\frac{\|\tu_N\|^2}{2a_N^3}\sumn\delta_i^{\htheta_{\rp}}y_iB(\x_i)\\
  &=\frac{\|\tu_N\|^2e^{\acute{\alpha}_k-\alpha_{\rt}}}{2Na_N\rho}\sumn\delta_i^{\htheta_{\rp}}\varphi^{-1}(\x_i)e^{f(\x_i;\acute{\bbeta}_k)}C(\x_i;\batheta_k)
    +\frac{\|\tu_N\|^2}{2a_N^3}\sumn\delta_i^{\htheta_{\rp}}y_iB(\x_i)\\
  &\leq\frac{\|\tu_N\|^2e^{\acute{\alpha}_k-\alpha_{\rt}}}{2Na_N\rho}\sumn\delta_i^{\htheta_{\rp}}\varphi^{-1}(\x_i)B(\x_i)
    +\frac{\|\tu_N\|^2}{2a_N^3}\sumn y_iB(\x_i)
  =\op,
\end{align*}
where
\begin{align*}
C(\x_i;\batheta) &=
\left|\dg_{(k)}(\x_i;\batheta)\right|\left\{\left\|\dg(\x_i;\batheta_k)\right\|^2
                 + \left\|\ddg(\x_i;\batheta)\right\| \right\}\\
&\quad + \left\|\frac{\partial\dg_{(k)}(\x_i;\batheta_k)}{\partial\btheta}\right\|
\left\| \dg(\x_i;\batheta_k) \right\| + \left\|
  \frac{\partial^2\dg_{(k)}(\x_i;\batheta_k)}{\partial\btheta^2} \right\|.
\end{align*}
Therefore, we proved that $\tilde{R}_{\ruw(k)}=\op$. Next, we prove that
$\tilde{\Delta}_{\ruw(k)}=\op$. We know that $\Exp\left[ a_N^{-2}\sumn\delta_i^{\htheta_{\rp}}\left\{
    y_i-p_{\pi}^{\htheta_{\rp}}(\x_i;\btheta_{\rt})\right\}\ddg(\x_i;\btheta_{\rt})
  \Bigm|\htheta_{\rp} \right]=\0$. We also have
that for the every element of
$a_N^{-2}\sumn\left\{y_i-p(\x_i;\btheta_{\rt})\right\}\ddg(\x_i;\btheta_{\rt})$, we
have
\begin{align*}
&\Var\left[ a_N^{-2}\sumn\delta_i^{\htheta_{\rp}}\left\{
y_i-p_{\pi}^{\htheta_{\rp}}(\x_i;\btheta_{\rt})
\right\}\ddg_{(jl)}(\x_i;\btheta_{\rt})\Bigm|\htheta_{\rp} \right]\\
&\leq \frac{1}{a_N^4}\sumn\Exp \left\{
\delta_i^{\htheta_{\rp}}p_{\pi}^{\htheta_{\rp}}(\x_i;\btheta_{\rt})\ddg_{(jl)}^2(\x_i;\btheta_{\rt})\Bigm|\htheta_{\rp}\right\}
\leq \frac{1}{a_N^2}\Exp[e^{f(\x;\bbeta_{\rt})}\|\ddg(\x;\btheta_{\rt})\|^2]\to0.
\end{align*}
Thus, due to Chebyshev's inequality, we know that
$\tilde{\Delta}_{\ruw}^{\htheta_{\rp}}=\op$. Since we know that
$\frac{1}{a_N^2}
    \sumn\delta_i^{\htheta_{\rp}}\phi_{\pi}^{\htheta_{\rp}}(\x_i;\btheta_{\rt})\dg^{\otimes2}(\x_i;\btheta_{\rt})=\Op$. Hence,
we have that 
$\dl_{\ruw}(\htheta_{\ruw})/a_N=\Op$.
Note that we also have
$(\lambda_N\hat{w}_{j'})/a_N=(\lambda_N/a_N)|\hat{\beta}_{\rp(j')}|^{-\gamma}\cvp\infty$.
Therefore, 
\begin{align*}
&\Pr(j'\in \cA_N)\leq\Pr\left\{\lambda_N\hat{w}_{j'}\sgn(\hat{\beta}_{\ruw(j')})=\dl_{\ruw}^{\htheta_p}(\htheta_{\ruw})\right\}\\
&=\Pr\left\{\frac{\lambda_N\hat{w}_{j'}\sgn(\hat{\beta}_{\ruw(j')})}{a_N}
=\frac{\dl_{\ruw}^{\htheta_p}(\htheta_{\ruw})}{a_N}\right\}\to0.
\end{align*}
Thus, we prove the part of consistency of variable selection. 

\subsubsection{Proof when $p_{N}$ and $s_{N}$ diverge with $N$}\label{sec:lik-divergep}

In this section, we prove Theorem~\ref{thm:asym-las} when $p_{N}$ and $s_{N}$
diverge with $N$. Let $\eta_{i(j)}=\delta_i^{\htheta_{\rp}}\left\{
y_i-p_{\pi}^{\htheta_{\rp}}(\x_i;\btheta_{\rt})\right\}\dg_{(j)}(\x_i;\btheta_{\rt})$,
we have that the $j$-th element of $\dl_{\ruw}(\btheta_{\rt})$,
$\dl_{\ruw(j)}(\btheta_{\rt})=\sumn\eta_{i(j)}$, $\Exp(\eta_{i(j)})=0$, and
\begin{align*}
  &\Var(\eta_{i(j)}|\x_{i})
  =[\{1-p_{\pi}^{\htheta_{\rp}}(\x_{i};\btheta_{\rt})\}^{2}
  +\phi_{\pi}^{\htheta_{\rp}}(\x_{i};\btheta_{\rt})]
  p_{\pi}^{\htheta_{\rp}}(\x_{i};\btheta_{\rt})
  \{\dg_{(j)}(\x_{i};\btheta_{\rt})\}^{2}\\
  &\le e^{\alpha_{\rt}}e^{f(\x_{i};\bbeta_{\rt})}
  \{\dg_{(j)}(\x_{i};\btheta_{\rt})\}^{2}
  \le e^{\alpha_{\rt}}B^{2}(\x_{i}).
\end{align*}
Therefore, since $|y_{i}-p(\x_{i};\btheta_{\rt})|\le 1$, we have that
$|\eta_{i,(j)}|\le\max_{i=1,\ldots,N}B(\x_{i})=R_{N}$. 
Thus, using the same deduction, we have that
\begin{align}\label{eq:las-infnorm}
  \Pr\left(\left\|\frac{1}{a_{N}^{2}}\sumn\eeta_{i(\cA)}\right\|_{\infty}
  \gtrsim\sqrt{\frac{\log s_{N}}{a_{N}^{2}}}\right)
  <\frac{1}{s_{N}},
  \Pr\left(\left\|\frac{1}{a_{N}^{2}}\sumn\eeta_{i(\cAc)}\right\|_{\infty}
  \gtrsim\sqrt{\frac{\log p_{N}}{a_{N}^{2}}}\right)
  <\frac{1}{p_{N}},
\end{align}
and
$\Pr\left(\left\|a_{N}^{-2}\sumn\eeta_{i(\cA)}\right\|
>\sqrt{a_{N}^{-2}s_{N}}\right)<5^{-s_{N}}$.
Next, we similarly need to show that 
$\bH_{\ruw}=a_{N}^{-2}\sumn\delta_{i}^{\htheta_{\rp}}
\phi_{\pi}^{\htheta_{\rp}}(\x_{i};\btheta_{\rt})\dg_{(\cA)}^{\otimes2}(\x_{i};\btheta_{\rt})$
is positive-definite with a high probability.
Using the same arguments in~\cite{wang2021nonuniform} of (S.28), we know that
$\Var(\bH_{\ruw,i}|\x_{i})\le e^{\alpha_{\rt}}B(\x_{i})$, where
$\bH_{\ruw}=(1/a_{N}^{2})\sumn\bH_{\ruw,i}$, and $\|\bH_{\ruw,i}\|\le R_{N}$.
Again, applying Bernstein's inequality, we have that
$\|\bH_{\ruw}-\Exp(\bH_{\ruw})\|\lesssim\sqrt{s_{N}/a_{N}^{2}}+s_{N}/a_{N}^{2}$
and
$\|\bH_{\ruw}-\Exp(\bH_{\ruw})\|_{\infty}\lesssim\sqrt{s_{N}^{2}/a_{N}^{2}}+s_{N}^{3/2}/a_{N}^{2}$,
which implies that $\|\bH_{\ruw}\|$ and $\|\bH_{\ruw}\|_{\infty}$ are bounded
from both below and above with high probabilities.
Now, due to the K-K-T conditions, we know that
$\htheta=(\htheta_{\ruw(\cA)}\tp,\0\tp)\tp$ is the unique adaptive lasso
estimator if and only if
\begin{align*}
  \dl_{\ruw}(\hat{\theta}_{(j)})=\lambda_{N}w_{j}\sgn(\hat{\theta}_{(j)}),
  \text{ for }j\in\cA, \text{ and }
  |\dl_{\ruw}(\hat{\theta}_{(j)})|\le\lambda_{N}w_{j}, \text{ for }j\notin\cA.
\end{align*}
To prove the K-K-T conditions, it is sufficient to show that
\begin{align}\label{eq:hd-kkt2}
\begin{split}
  &\sgn(\theta_{\rt(j)})(\theta_{\rt(j)}-\hat{\theta}_{\ruw(j)})<|\theta_{\rt(j)}|, \forall j\in\cA, 
  \text{ and } \\
  &\frac{1}{a_{N}}\left|\sum_{i=1}^{N}
  \delta_{i}^{\htheta_{\rp}}\{y_{i}-p_{\pi}^{\htheta_{\rt}}(\x_{i(\cA)};\htheta_{\ruw(\cA)})\}
  \dg_{(j)}(\x_{i(\cA)};\htheta_{\ruw(\cA)})\right|\le\frac{\lambda_{N}w_{j}}{a_{N}}, \forall j\notin\cA,
\end{split}
\end{align}
where $\htheta_{\ruw(\cA)}$ is the solution of 
\begin{align*}
  \sumn\delta_{i}^{\htheta_{\rp}}
  \{y_{i}-p_{\pi}^{\htheta_{\rt}}(\x_{i(\cA)};\htheta_{\ruw(\cA)})\}
  \dg_{(j)}(\x_{i(\cA)};\htheta_{\ruw(\cA)})
  =\lambda_{N}w_{j}\sgn(\hat{\theta}_{\rt,(j)}), \forall j\in\cA.
\end{align*}
Now, we define
$\htheta_{(\cA)}^{*}=\arg\min_{\btheta_{(\cA)}}\gamma_{\ruw}(\btheta_{(\cA)})$, where 
\begin{align*}
  \gamma_{\ruw}(\btheta_{(\cA)})
  =\frac{1}{2}\btheta_{(\cA)}\tp(a_{N}^{2}\bH_{\ruw})\btheta_{(\cA)}
  -\sumn\eeta_{i(\cA)}\tp(\btheta_{(\cA)}-\btheta_{\rt(\cA)})
  +\lambda_{N}\sum_{j\in\cA}w_{j}\sgn(\theta_{\rt(j)})\theta_{(j)}.
\end{align*}
We also have that
\begin{align}\label{eq:lik-theta-star}
  \btheta_{(\cA)}^{*}-\btheta_{\rt(\cA)}=(a_{N}^{2}\bH_{\ruw})^{-1}
  \left\{\sumn\eeta_{i(\cA)}-\lambda_{N}\xi\right\},
\end{align}
and $\|\btheta_{(\cA)}^{*}-\btheta_{\rt(\cA)}\|\lesssim\sqrt{s_{N}/a_{N}^{2}}
+(\lambda_{N}\sqrt{s_{N}})/(a_{N}^{2}b_{N}^{\gamma})$, 
$\|\btheta_{(\cA)}^{*}-\btheta_{\rt(\cA)}\|_{\infty}\lesssim\sqrt{\log s_{N}/a_{N}^{2}}
+(\lambda_{N})/(a_{N}^{2}b_{N}^{\gamma})$ as we have shown in the proof of Theorem~\ref{thm:asym-ipw}.
By the same arguments as~\cite{huang2008iterated} and Lemma 2
in~\cite{hjort2011asymptotics}, we can show that
\begin{align}\label{eq:lik-theta}
  \|\htheta_{\ruw(\cA)}-\btheta_{(\cA)}^{*}\|^{2}
  =o_{P}\left(\frac{s_{N}}{a_{N}^{2}}+\frac{\lambda_{N}^{2}s_{N}}{a_{N}^{4}b_{N}^{2\gamma}}\right),
\end{align}
which implies that 
$\|\htheta_{\ruw(\cA)}-\btheta_{\rt(\cA)}\|
\lesssim\sqrt{s_{N}/a_{N}^{2}}+(\lambda_{N}\sqrt{s_{N}})/(a_{N}^{2}b_{N})$.
Thus, the first condition in~\eqref{eq:hd-kkt2} satisfies under
Assumption~\ref{asm:a6} and the fact that
$\|\htheta_{\ruw(\cA)}-\btheta_{\rt(\cA)}\|_{\infty}
\lesssim\sqrt{\log s_{N}^{2}/a_{N}^{2}}+\lambda_{N}/(a_{N}^{2}b_{N}^{\gamma})$. Then, we turn to the second condition
in~\eqref{eq:hd-kkt2}. From the proof of Theorem~\ref{thm:asym-las}, using the
same notations, we have that 
\begin{align*}
  &\frac{a_{N}\left\{\dl_{(j)}(\htheta_{\ruw(\cA)})-\dl_{(j)}(\btheta_{\rt(\cA)})\right\}}{a_{N}^{2}}
  +\frac{1}{a_{N}^{2}}\sumn\delta_{i}^{\htheta_{\rp}}\phi_{\pi}^{\htheta_{\rp}}(\x_{i};\btheta_{\rt(\cA)})
  \dg\tp(\x_{i};\btheta_{\rt(\cA)})\hat{\bu}_{N(\cA)}\dg_{(j)}(\x_{i};\btheta_{\rt(\cA)})\\
  &=\tilde{\Delta}_{\ruw(j)}^{\htheta_{\rp}}+\tilde{R}_{\ruw(j)}^{\htheta_{\rp}}.
\end{align*}
From the proof of Theorem~\ref{thm:asym-las}, we know that
\begin{align*}
  |\tilde{\Delta}_{\ruw(j)}^{\htheta_{\rp}}|\le\frac{1}{a_{N}^{2}}
  \left|\sumn\delta_{i}^{\htheta_{\rp}}
  \{y_{i}-p_{\pi}^{\htheta_{\rp}}(\x_{i};\btheta_{\rt(\cA)})\}B(\x_{i})\right|\|\hat{\bu}_{N(\cA)}\|.
\end{align*}
Using the same arguments as we bound $\sumn\eta_{i(j)}$, we have that with
probability at least $1-a_{N}^{-1}$,
$|\tilde{\Delta}_{\ruw(j)}^{\htheta_{\rp}}|\lesssim\sqrt{\log a_{N}/a_{N}^{2}}\|\hat{\bu}_{N(\cA)}\|$.
From the proof of Theorem~\ref{thm:asym-las}, we also have that
\begin{align*}
  |\tilde{R}_{\ruw(j)}^{\htheta_{\rp}}|\lesssim\frac{\|\hat{\bu}_{N(\cA)}\|^{2}}{Na_{N}\rho}
  \sumn\delta_{i}^{\htheta_{\rp}}\varphi^{-1}(\x_{i})B(\x_{i})
  +\frac{\|\hat{\bu}_{N(\cA)}\|^{2}}{a_{N}^{3}}
  \sumn y_{i}B(\x_{i}).
\end{align*}
Note that since 
\begin{align*}
  &\Var\left\{\delta_{i}^{\htheta_{\rp}}\varphi^{-1}(\x_{i})B(\x_{i})\Big|\x_{i}\right\}
  \le\frac{B^{2}(\x_{i})}{\varphi^{2}(\x_{i})}\Exp(\delta_{i}^{\htheta_{\rp}}|\x_{i}) \\
  &=\frac{B^{2}(\x_{i})}{\varphi^{2}(\x_{i})}
  \left[p(\x_{i};\btheta_{\rt})\varphi^{-1}(\x_{i})B(\x_{i})
  +\rho\{1-p(\x_{i};\btheta_{\rt})\}B(\x_{i})\right]
  \lesssim e^{\alpha_{\rt}}\frac{B^{3}(\x_{i})}{\varphi^{3}(\x_{i})}
  +\rho B(\x_{i})
\end{align*}
Therefore, we also have that with probability at least $1-a_{N}^{-1}$,
\begin{align*}
  |\tilde{R}_{\ruw(j)}|\lesssim\frac{\|\hat{\bu}_{N(\cA)}\|^{2}\log a_{N}}{a_{N}}
  \left\{\frac{1}{N}\sumn B(\x_{i})
  +\sqrt{\sumn\frac{\max\{R_{N}^{3}(\x_{i}),R_{N}(\x_{i})\}}{N^{2}}}\right\}.
\end{align*}
Thus, we only need to consider the dominating term
\begin{align*}
  &\left|\frac{1}{a_{N}^{2}}\sumn\delta_{i}^{\htheta_{\rt}}
  \phi_{\pi}^{\htheta_{\rp}}(\x_{i};\btheta_{\rt(\cA)})
  \dg\tp(\x_{i};\btheta_{\rt(\cA)})\hat{\bu}_{N(\cA)}\dg_{(j)}(\x_{i};\btheta_{\rt(\cA)})\right|\\
  &\le\left|\frac{1}{a_{N}^{2}}\sumn\delta_{i}^{\htheta_{\rt}}
  \phi_{\pi}^{\htheta_{\rp}}(\x_{i};\btheta_{\rt(\cA)})B(\x_{i})\right|\|\hat{\bu}_{N(\cA)}\|_{\infty}
  \lesssim\|\hat{\bu}_{N(\cA)}\|_{\infty},
\end{align*}
where the last inequality can be obtained using arguments as we
bound $\bH_{\ruw}$. Therefore,
\begin{align*}
  &\left|\frac{\dl_{(j)}(\htheta_{\ruw(\cA)})}{a_{N}}\right|
  \lesssim\left|\frac{\dl_{(j)}(\btheta_{\rt(\cA)})}{a_{N}}\right|
  +\|\htheta_{\ruw(\cA)}-\btheta_{\rt(\cA)}\|_{\infty}\\
  &\quad+\sqrt{\frac{\log a_{N}}{a_{N}^{2}}}
  \left(\sqrt{\frac{s_{N}}{a_{N}^{2}}}+\frac{\lambda_{N}\sqrt{s_{N}}}{a_{N}^{2}b_{N}^{\gamma}}\right)
  +\frac{\log a_{N}}{a_{N}}\left(\frac{s_{N}}{a_{N}^{2}}+\frac{\lambda_{N}^{2}s_{N}}{a_{N}^{4}b_{N}^{2\gamma}}\right)\\
  &\lesssim\sqrt{\log p_{N}}+\sqrt{\frac{\log s_{N}}{a_{N}^{2}}}
  +\frac{\log a_{N}}{a_{N}}
  \left(\sqrt{\frac{s_{N}}{a_{N}^{2}}}+\frac{\lambda_{N}\sqrt{s_{N}}}{a_{N}^{2}b_{N}^{\gamma}}\right)
  \lesssim \sqrt{\log p_{N}}.
\end{align*} 
Then, we have that $\forall j\notin\cA$,
\begin{align*}
  \frac{a_{N}}{\lambda_{N}w_{j}}\left|\frac{\dl_{(j)}(\htheta_{\ruw(\cA)})}{a_{N}}\right|
  \lesssim\frac{a_{N}\sqrt{\log p_{N}}}{\lambda_{N}w_{j}}
  +\frac{\sqrt{\log s_{N}}}{\lambda_{N}w_{j}}
  +\frac{\log a_{N}}{a_{N}}
  \left(\frac{\sqrt{s_{N}}}{\lambda_{N}w_{j}}
  +\frac{\sqrt{s_{N}}}{a_{N}w_{j}b_{N}^{\gamma}}\right).
\end{align*}
Then, combining~\eqref{eq:lik-theta-star},~\eqref{eq:lik-theta}, and 
$(\sqrt{s_{N}}\lambda_{N})/\sqrt{N_{1}}=\oo$, we have that
$\forall\bm{e}_{N}$,$\|\bm{e}_{N}\|=1$,
\begin{align*}
  \sqrt{N_{1}}\bm{e}_{N}\tp\V_{\ruw(\cA)}^{-1/2}(\htheta_{\ruw(\cA)}-\btheta_{\rt,(\cA)})
  =\bm{e}_{N}\tp\frac{1}{a_{N}}(\bLambda_{\ruw(\cA)}^{\rp})^{1/2}
  \bH_{\ruw}^{-1}\sumn\eeta_{i,(\cA)}+\op\cvd\Nor(0,1),
\end{align*}
due to Lindeberg-Feller's central limit theorem indicated in the previous proof of
Theorem~\ref{thm:asym-las}.

\subsection{Proof of Theorem~\ref{thm:effi-comp}}\label{sec:effi-comp}

Let $h=1+c\{\varphi(\x)\}^{-1}e^{f(\x;\bbeta_{\rt})}$, $\bm{v}=\sqrt{e^{f(\x;\bbeta_{\rt})}}\dg_{(\cA)}(\x;\ttheta)$, $\bm{f}=h^{\frac{1}{2}}\bm{v}$,
and $\bm{g}=h^{-\frac{1}{2}}\bm{v}$. We have that
$
\Exp(\bm{g}\bm{f}\tp)=\Exp(\bm{f}\bm{g}\tp)=\Exp(\bm{v}\bm{v}\tp)
=\Exp\left\{e^{f(\x;\bbeta_\rt)}\dg_{(\cA)}^{\otimes2}(\x;\btheta_{\rt})\right\}=\M_{(\cA)},
$
$
\Exp(\bm{f}\bm{f}\tp)=\Exp(h\bm{v}\bm{v}\tp)
=\Exp\left[\left\{1+ce^{f(\z;\bbeta_\rt)}/\varphi(\x)\right\}
e^{f(\x;\bbeta_\rt)}\dg_{(\cA)}^{\otimes2}(\x;\btheta_{\rt})\right]=\M_{\rw(\cA)},
$
$
\Exp(\bm{g}\bm{g}\tp)=\Exp(h^{-1}\bm{v}\bm{v}\tp)
=\Exp\left[\{e^{f(\x;\bbeta_\rt)}\dg_{(\cA)}^{\otimes2}(\x;\btheta_{\rt})\}/\{1+
c\varphi^{-1}(\x)e^{f(\x;\bbeta_{\rt})}\}
\right]=\bLambda_{\ruw(\cA)}.
$
Now, applying the matrix form of Cauchy-Schwartz's inequality (see \cite{tripathi1999matrix}), we have that 
\begin{align*}
\bLambda_{\ruw(\cA)}=\Exp(\bm{g}\bm{g}\tp)
&\geq\Exp(\bm{g}\bm{f}\tp)\{\Exp(\bm{f}\bm{f}\tp\}^{-1}\Exp(\bm{f}\bm{g}\tp)\\
&=\M_{(\cA)}\{\M_{\rw(\cA)}\}^{-1}\M_{(\cA)}=\Exp\left\{e^{f(\x;\bbeta_{\rt})}\right\}
\{\V_{\rw(\cA)}\}^{-1}.
\end{align*}
The equality holds if and only if $h$ is a constant, which implies $c=0$. Therefore, simple algebra 
shows that
$\V_{\ruw(\cA)}=\Exp\left\{e^{f(\x;\bbeta_{\rt})}\right\}\{\bLambda_{\ruw(\cA)}\}^{-1}
\le\V_{\rw(\cA)}$.

\subsection{Proof of Theorem~\ref{thm:cr}}\label{sec:cr-bound}

For any asymptotic unbiased estimator such that 
$\htheta_{U(\cA)}=\U_{(\cA)}(\btheta_{\rt};\Ds)+\op/\sqrt{N_1}$,
where
\begin{equation*}
\Exp\{\U_{(\cA)}(\btheta_{\rt};\Ds)|\X\}=\btheta_{\rt(\cA)}, \text{ and }
\Exp\left\{\frac{\partial\U_{(\cA)}(\btheta_{\rt};\Ds)}{\partial\btheta_{(\cA)}\tp}\Big|\X\right\}=\0,
\end{equation*} 
we first prove that $Cov\{\U_{(\cA)}(\btheta_{\rt};\Ds),\dl_{\ruw(\cA)}(\btheta_\rt)\}=\I$. 
Note that S.4 in \cite{wang2021nonuniform} tells us that for sampled data,
the joint density of the responses given the covariates is 
$\exp\{\ell_{\ruw}(\btheta)+\sumn\delta_iy_il_l\}$. Therefore, we have that
\begin{align*}
&\frac{\partial}{\partial\btheta_{(\cA)}\tp}
\Exp\{\U_{(\cA)}(\btheta_{\rt};\Ds)|\X\}
=\frac{\partial}{\partial\btheta_{(\cA)}\tp}\int
\U_{(\cA)}(\btheta_{\rt};\Ds)e^{\ell_{\ruw}(\btheta_{\rt})+\sumn\delta_iy_il_l}\dd\y\\
&=\int\frac{\partial\U_{(\cA)}(\btheta_{\rt};\Ds)}{\partial\btheta_{(\cA)}\tp}
e^{\ell_{\ruw}(\btheta_{\rt})+\sumn\delta_iy_il_l}\dd\y
+\int\U_{(\cA)}(\btheta_{\rt};\Ds)\dl_{\ruw(\cA)}\tp(\btheta_{\rt})
e^{\ell_{\ruw}(\btheta_{\rt})+\sumn\delta_iy_il_l}\dd\y\\
&=\Exp\left\{\frac{\partial\U_{(\cA)}(\btheta_{\rt};\Ds)}{\partial\btheta_{(\cA)}\tp}\Big|\X\right\}
+\Exp\{\U_{(\cA)}(\btheta_{\rt};\Ds)\dl_{\ruw(\cA)}\tp(\btheta_{\rt})|\X\}
=\Exp\{\U_{(\cA)}(\btheta_{\rt};\Ds)\dl_{\ruw(\cA)}\tp(\btheta_{\rt})|\X\}.
\end{align*}
Since $\Exp\{\U_{(\cA)}(\btheta_{\rt};\Ds)|\X\}=\btheta_{\rt(\cA)}$, we have 
that $\Exp\{\U_{(\cA)}(\btheta_{\rt};\Ds)\dl_{\ruw(\cA)}\tp(\btheta_{\rt})|\X\}
=\I$, which implies that $\Exp\{\U_{(\cA)}
(\btheta_{\rt};\Ds)\dl_{\ruw(\cA)}\tp(\btheta_{\rt})\}=\I$. Therefore
\begin{align*}
&Cov\{\U_{(\cA)}
(\btheta_{\rt};\Ds),\dl_{\ruw(\cA)}(\btheta_{\rt})\}\\
&=\Exp\{\U_{(\cA)}(\btheta_{\rt};\Ds)\dl_{\ruw(\cA)}\tp(\btheta_{\rt})\}
-\Exp\{\U_{(\cA)}(\btheta_{\rt};\Ds)\}
\Exp\{\dl_{\ruw(\cA)}(\btheta_{\rt})\}\tp=\I,
\end{align*}
where the last inequality is due to (S.19) in~\cite{wang2021nonuniform}, such 
that $\Exp\{\dl_{\ruw(\cA)}(\btheta_{\rt})\}=\0$. Now, we apply
the matrix form of Cauchy-Schwartz's inequality and obtain that
\begin{align*}
&\Var\{\U_{(\cA)}(\btheta_{\rt};\Ds)\}\\
&\geq Cov\{\U_{(\cA)}(\btheta_{\rt};\Ds),\dl_{\ruw(\cA)}(\btheta_{\rt})\}
\Var\{\dl_{\ruw(\cA)}(\btheta_{\rt})\}^{-1}
Cov\{\U_{(\cA)}(\btheta_{\rt};\Ds),\dl_{\ruw(\cA)}(\btheta_{\rt})\}\\
&=\frac{1}{a_N^2}\{\bLambda_{\ruw(\cA)}^{-1}+\op\}
=\frac{1}{N_1}\V_{\ruw(\cA)}+\frac{\op}{N_1}.
\end{align*}
Taking limits on both sides, we obtain that $\V_{U(\cA)}\geq\V_{\ruw(\cA)}$.

\section{Details and complexity of practical algorithm}
\label{sec:dtlalg}
In this section, we provide more details of the practical algorithm in
Section~\ref{sec:theory}.

\subsection{Two-step algorithm}\label{sec:algo}

We take a pilot sample by uniform sampling
with the sampling rate $\rho_1=N_{\rp}/2N_1^{*}$ for the ones and
$\rho_0=N_{\rp}/2N_0^{*}$ for the zeros.
Denote a pilot sample of actual sample size $N_{\rp}^{*}$ as
$\{(\x_i^{\rp},y_i^{\rp})\}_{i=1}^{N_{\rp}^{*}}$, the pilot estimate
of $\btheta$ as $\htheta_{\rp}$, and the pilot estimate of the active set as
$\hat{\cA}_{\rp}=\{j:\hat{\beta}_{\rp(j)}\neq0\}$. We propose the following
moment estimators of $\M_{(\cA)}$ and $\bOmega_{(\cA)}$:   
\begin{equation}
\label{eq:estM}
\hat{\M}_{\hA}^{\rp}=\frac{1}{N_{\rp}}\sum_{i=1}^{N_{\rp}^{*}}\frac{e^{f(\x^{\rp T}_i\hat{\bbeta}_{\rp})}
\dg_{\hA}^{\otimes2}(\x^{\rp}_i;\htheta_{\rp})}{\rho_0+y_i^{\rp}(\rho_1-\rho_0)},
\end{equation}
\begin{equation}
\label{eq:estSig}
\hat{\bOmega}_{\hA}^{\rp}=\frac{1}{N_{\rp}}\sum_{i=1}^{N_{\rp}^{*}}\frac{e^{2f(\x^{\rp T}_i\hat{\bbeta}_{\rp})}
\dg_{\hA}^{\otimes2}(\x^{\rp}_i;\htheta_{\rp})}{\rho_0+y_i^{\rp}(\rho_1-\rho_0)}.
\end{equation}
We also use the following moment estimator to estimate the denominator of~\eqref{eq:optA}:
\begin{equation}
\label{eq:estdenom}
\frac{1}{N_{\rp}}\sum_{i=1}^{N_{\rp}^{*}}\frac{\omega_i^{\mmse}}{\rho_0+y_i^{\rp}(\rho_1-\rho_0)},
\end{equation}
where
$\omega_i^{\mmse}=p(\x_i^{\rp};\htheta_{\rp})\|(\hat{\M}_{\hA}^{\rp})^{-1}
\dg_{\hA}(\x^{\rp}_i;\htheta_{\rp})\|$. 
If using~\eqref{eq:optL} or
\eqref{eq:optpr}, we use 
\begin{align*}
&\omega_i^{\mvc}=p(\x_i^{\rp};\htheta_{\rp})\|
\dg_{\hA}(\x^{\rp}_i;\htheta_{\rp})\|,\text{ or }\\
&
\omega_i^{\mpr}=p(\x_i^{\rp};\htheta_{\rp})\|(\hat{\bOmega}_{\hA}^{\rp})^{1/2}(\hat{\M}_{\hA}^{\rp})^{-1}
\dg_{\hA}(\x^{\rp}_i;\htheta_{\rp})\|,
\end{align*}
respectively, instead of $\omega_i^{\mmse}$ in \eqref{eq:estdenom}. Now, we
present the proposed two-step procedure in Algorithm~\ref{alg:adplasS} with more
details than Algorithm~\ref{alg:adplas} in Section~\ref{sec:theory}.
\begin{algorithm}[H]%
  \caption{Subsampling adaptive lasso algorithm}
  \label{alg:adplasS}
  \begin{algorithmic}[1]
    \STATE
    \begin{itemize}
    \item Take a pilot sample $\{(\x_i^{\rp},y_i^{\rp})\}_{i=1}^{N_{\rp}^{*}}$ of
      expected sample size $N_{\rp}$ using
      $\{\pi(y_i)=\rho_0+y_i(\rho_1-\rho_0)\}_{i=1}^N$ and obtain a pilot
      estimator
    \begin{equation}
    \label{eq:pl-las}
    \htheta_{\rp}:=\mathop{\arg\max}_{\btheta}\left\{\sum_{i=1}^{N_{\rp}^{*}}[y_i^{\rp}g(\x_i^{\rp};\btheta) -
    \log\{1+e^{g(\x_i^{\rp};\btheta)+l}\}] - \lambda_{\rp}\sumjp|\beta_{(j)}|\right\},
    \end{equation}
    where $N_{\rp}^{*}$ is the actual pilot sample
    size and $l=\log(N_0^{*}/N_1^{*})$. We call this first stage
    screening.
  \item Calculate approximate optimal sampling probabilities
      $\{\hat{\pi}(\x_i,y_i)=y_i+(1-y_i)\rho\hat\varphi(\x_i)\}_{i=1}^N$ by
      replacing $\hat\varphi(\x_i)$ with
      $\varphi^{\radp}_{\mmse}(\x_i;\htheta_{\rp})$,
      $\varphi^{\radp}_{\mvc}(\x_i;\htheta_{\rp})$, or
      $\varphi^{\radp}_{\mpr}(\x_i;\htheta_{\rp})$, based on \eqref{eq:optA},
      \eqref{eq:optL}, or \eqref{eq:optpr}, respectively. The denominator of
      \eqref{eq:optA} is estimated using \eqref{eq:estdenom}, and we replace
      $\omega_i^{\mmse}$ with $\omega_i^{\mvc}$ or $\omega_i^{\mpr}$ for the
      denominator of \eqref{eq:optL} or \eqref{eq:optpr}, respectively.  If
      using $\pi^{\radp}_{\mmse}(\x)$ or $\pi^{\radp}_{\mpr}(\x)$, estimate
      $\M_{(\cA)}$ and $\bOmega_{(\cA)}$ using the moment estimators in
      \eqref{eq:estM} and \eqref{eq:estSig}, respectively.
    \end{itemize}
    \STATE Use Algorithm~\ref{alg:poi}  with the estimated optimal sampling
      probabilities to obtain a subsample
      $\{(\x_i^{\rs},y_i^{\rs})\}_{i=1}^{N_{\rs}^{*}}$ and compute the adaptive
      lasso estimator: 
    \begin{equation*}
    \htheta_{\ruw}^{\radp}:=\mathop{\arg\max}_{\btheta}\left\{\sum_{i=1}^{N_{\rs}^{*}}[y_i^{\rs}g(\x_i^{\rs};\btheta) -
    \log\{1+e^{g(\x_i^{\rs};\btheta)+l_i^{\rs}}\}] -
    \lambda_N\sum_{j\in\hat{\cA}_{\rp}}\frac{|\beta_{(j)}|}{|\hat{\beta}_{\rp(j)}|^{\gamma}}\right\},
    \end{equation*}
    where $N_{\rs}^{*}$ is the actual subsample size, based on the smaller model
    obtained from the first stage screening. We call this step the second stage
    screening.
  \end{algorithmic}
\end{algorithm}
\begin{remark}
Our algorithm naturally integrates the MSCL function with the adaptive
lasso penalty. It can also be implemented when $p>N$ as long as the
dimension of selected variables is smaller than $N$ in the first-stage screening. If the model is sparse and the data are massive, this is usually possible in practice. Screening algorithms such as sure independence
screening \citep{fan2008sure} can also be used for the first stage screening to guarantee that the dimension of second-stage screening is smaller than the subsample size. Furthermore, the first stage screening can help to speed up the computation, as shown by the analysis of computational complexity in Section~\ref{sec:comp}.
\end{remark}

We consider a coordinate descent method to calculate the estimators defined in
Algorithm~\ref{alg:adplasS}. (see \cite{friedman2007pathwise},
\cite{friedman2010Regularization} and \cite{yuan2012improved}). In each cycle, we
need to find an optimal direction $\bd$ at a starting point $\ttheta$. We
consider the quadratic approximation of
$Q_{\ruw}^{\htheta_{\rp}}(\ttheta+\bd)-Q_{\ruw}^{\htheta_{\rp}}(\ttheta)$, which is 
\begin{align*}
&Q_{\ruw}^{\htheta_{\rp}}(\ttheta+\bd)-Q_{\ruw}^{\htheta_{\rp}}(\ttheta)\\
&=\sumn\delta_i[-y_ig(\x_i;\ttheta+\bd)+
\log\{1+e^{g(\x_i;\ttheta+\bd)+l_i}\}]+\lambda_N\sumjp\hat{w}_j|\beta_{(j)}+d_{(j)}|\\
&\quad-\sumn\delta_i[-y_ig(\x_i;\ttheta) -
\log\{1+e^{g(\x_i;\ttheta)+l_i}\}] + \lambda_N\sumjp\hat{w}_j|\beta_{(j)}|\\
&\approx\dl_{\ruw}^{\htheta_{\rp}}(\ttheta)^T\bd+\frac{1}{2}\bd^T\ddl_{\ruw}^{\htheta_{\rp}}(\ttheta)\bd +
\lambda_N\sumjp\left\{\hat{w}_j|\tilde{\beta}_{(j)}+d_{(j)}| - \hat{w}_j|\tilde{\beta}_{(j)}|\right\},
\end{align*}
$\dl_{\ruw}^{\htheta_{\rp}}(\ttheta)=-\sumn\delta_i^{\htheta_{\rp}}\left\{
  y_i-p_{\pi}^{\htheta_{\rp}}(\x_i,\ttheta)\right\}\dg(\x_i;\ttheta)$,
$\ddl_{\ruw}^{\htheta_{\rp}}(\ttheta)=\sumn\delta_i^{\htheta_{\rp}}\phi_{\pi}^{\htheta_{\rp}}(\x_i,\ttheta)\dg^{\otimes2}
    (\x_i;\ttheta)$.
Thus, using coordinate descent to obtain the optimal direction, the quadratic
approximation for the $j$-th element is given as
\begin{align*}
Q_{\ruw}^{\htheta_{\rp}}(\bd+z\bm{e}_j)-Q_{\ruw}^{\htheta_{\rp}}(\bd)
&=\dl_{\ruw(j)}(\ttheta)z+\left\{\ddl_{\ruw}(\ttheta)\bd\right\}_{(j)}z+\frac{1}{2}\ddl_{\ruw(jj)}(\ttheta)z^2\\
&\quad+\lambda_N\hat{w}_j|\tilde{\beta}_{(j)}+d_{(j)}+z| - \lambda_N\hat{w}_j|\tilde{\beta}_{(j)}+d_{(j)}|.
\end{align*}
Then, we have the value of $z$ that minimizes
$Q_{\ruw}^{\htheta_{\rp}}(\bd+z\bm{e}_j)-Q_{\ruw}^{\htheta_{\rp}}(\bd)$ is 
\begin{equation*}
z^{**}=
\begin{cases}
\frac{\dl^{\htheta_{\rp}}_{\ruw(j)}(\ttheta)+\left\{\ddl_{\ruw}^{\htheta_{\rp}}(\ttheta)\bd\right\}_{(j)}+\lambda
  \hat{w}_j}{-\ddl_{\ruw(jj)}^{\htheta_{\rp}}(\ttheta)} & \text{if }\tilde{\beta}_{(j)}+d_{(j)}+z\geq 0  \\
\frac{\dl_{\ruw(j)}^{\htheta_{\rp}}(\ttheta)+\left\{\ddl_{\ruw}^{\htheta_{\rp}}(\ttheta)\bd\right\}_{(j)}-\lambda
  \hat{w}_j}{-\ddl_{\ruw(jj)}^{\htheta_{\rp}}(\ttheta)} & \text{if }\tilde{\beta}_{(j)}+d_{(j)}+z\leq 0  \\
-\tilde{\beta}_{(j)}-d_{(j)} & \text{otherwise},
\end{cases}
\end{equation*}
which is the same as
$z^{**}=\max\left\{ z_1,-\tilde{\beta}_{(j)}-d_{(j)}
\right\} - \max \left\{ -z_2,
\tilde{\beta}_{(j)}+d_{(j)}\right\} + \tilde{\beta}_{(j)}+d_{(j)}$,
where 
\begin{equation*}
z_1=\frac{\dl_{\ruw(j)}^{\htheta_{\rp}}(\ttheta)+\left\{\ddl_{\ruw}^{\htheta_{\rp}}(\ttheta)\bd\right\}_{(j)}+\lambda
  \hat{w}_j}{-\ddl_{\ruw(jj)}^{\htheta_{\rp}}(\ttheta)},
z_2=\frac{\dl_{\ruw(j)}^{\htheta_{\rp}}(\ttheta)+\left\{\ddl_{\ruw}^{\htheta_{\rp}}(\ttheta)\bd\right\}_{(j)}-\lambda
  \hat{w}_j}{-\ddl_{\ruw(jj)}^{\htheta_{\rp}}(\ttheta)}.
\end{equation*}
For $g(\x;\btheta)=\alpha+f(\x\tp\bbeta)$, we know that 
$\ddl_{\ruw}^{\htheta_{\rp}}(\ttheta)=\sumn\delta_i^{\htheta_{\rp}}\phi_{\pi}^{\htheta_{\rp}}(\x_i,\ttheta)\dg^{\otimes2}
    (\x_i;\ttheta)=\G\tp\bm{\Phi}\G$,
where $\G=(\G_{(1)},\G_{(2)},\ldots,\G_{(N)})\tp$, $\G_{i}=(1,\df(\x_i\tp\tilde{\bbeta})\x_i\tp)\tp$,
and
$\bm{\Phi}=diag\{\delta_i^{\htheta_{\rp}}\phi_{\pi}^{\htheta_{\rp}}(\x_i,\ttheta)
\}$. Thus, we have 
$\left\{\ddl_{\ruw}^{\htheta_{\rp}}(\ttheta)\bd\right\}_{(j)}=\left( \G\tp\bm{\Phi}\G\bd
\right)\tp \bm{e}_j = (\G\bd)\tp\bm{\Phi}(\G\bm{e}_j) =
(\G\bd)\tp\bm{\Phi}(\G\bm{e}_j) = (\G\bd)\tp\bm{\Phi}\G_{(j)}$.
Therefore, we can store $\G\bd$ and keep updating $\G\bd$ with 
$\G(\bd+z\bm{e}_j)=\G\bd+z\G\bm{e}_j=\G\bd+\G_{(j)}z$.
Thus, we do not need to obtain the full matrix
$\ddl_{\ruw}(\ttheta)=\G\tp\bm{\Phi}\G$. We only need to calculate the diagonal
elements: $\ddl_{\ruw(jj)}(\ttheta)=\G_{(j)}\tp\bm{\Phi}\G_{(j)},j=1,...,p+1$
and
$\left\{\dl_{\ruw}(\ttheta)\bd\right\}_{(j)}=(\G\bd)\tp\bm{\Phi}\G_{(j)},
j=1,...,p+1$. From the analysis above, we can notice that the computational
complexity of one cycle calculating optimal direction $\bd$ is
$O(\zeta_{\inn}Np)$, where $\zeta_{\inn}$ denotes the number of inner iteration.

\subsection{Computational complexity}\label{sec:complexity}

Considering the
form of 
$g(\x;\btheta)=\alpha+f(\x\tp\bbeta)$, the computational complexity for coordinate descent with data
  of size $N$ and dimension $p$ is $O(\zeta_{\inn}Np)$ per inner-cycle where
  $\zeta_{\inn}$ represents the number of inner iterations
 (detailed derivations of
this complexity is presented Section~\ref{sec:dtlalg}). Thus, the computational
complexity of full data lasso is $O(\zeta_{\oi}Np)$, where
$\zeta_{\oi}=\zeta_{\out}\zeta_{\inn}$ and $\zeta_{\out}$ is the number of
outer iterations. The computational complexity of the full data adaptive lasso is
$O(\zeta_{\rp}^{\rmle}Np^2+\zeta_{\oi}Np)$ with the MLE as the pilot estimator, and
$O(\zeta_{\rp,\oi}^{\rlas}Np+\zeta_{\oi}Nq)$ with lasso as the pilot estimator,
where $\zeta_{\rp}^{\mle}$ and $\zeta_{\rp,\oi}^{\rlas}$ are the iteration
numbers in the two pilot estimators, respectively. The coordinate descent algorithm often requires a large $\zeta_{\oi}$ or $\zeta_{\rp,\oi}^{\rlas}$ while Newton's
algorithm requires a small $\zeta_{\rp}^{\rmle}$, so it is often the case that
$\zeta_{\rp}^{\rmle}p<\zeta_{\oi}$. Therefore, the time complexity of the
adaptive lasso is $O(\zeta_{\oi} Np)$, which is the same as the full data lasso
estimator.

Now, we analyze the time complexity of Algorithm~\ref{alg:adplasS}. 
We start with the computational complexity of the optimal probabilities, for
which the main computational cost
is to approximate $\|\M_{(\cA)}^{-1}\dg_{(\cA)}(\x_i;\btheta)\|$ or
$\|\bOmega_{(\cA)}^{1/2}\M_{(\cA)}^{-1}\dg_{(\cA)}(\x_i;\btheta)\|$, respectively, for
$i=1,...,N$.
Since
$\dg_{\hA}(\x_i;\btheta)=(1,\df(\x_i\tp\bbeta)\x_{i\hA}\tp)\tp$,
the complexity of calculating
$\dg_{\hA}(\x_i;\btheta)$'s is $O(Nq)$, and the computational
complexity of $\hat{\M}_{\hA}^{\rp}$ or $\hat{\bOmega}_{\hA}^{\rp}$ is
$O\left\{N_{\rp}(q+1)^2\right\}=O(N_{\rp}q^2)$. Taking the inverse
$(\hat{\M}_{\hA}^{\rp})^{-1}$ and finding the square root
$(\hat{\bOmega}_{\hA}^{\rp})^{1/2}$ both take $O(q^3)$ time.
Thus, the computational
complexity of calculating $\|(\hat{\M}_{\hA}^{\rp})^{-1}\dg_{\hA}(\x_i;\btheta)\|$'s or
$\|(\hat{\bOmega}_{\hA}^{\rp})^{1/2}(\hat{\M}_{\hA}^{\rp})^{-1}\dg_{\hA}(\x_i;\btheta)\|$'s is
$O(Nq+N_{\rp}q^2+q^3+Nq^2)=O(Nq^2)$. Therefore, the complexity of approximating the optimal
probabilities in \eqref{eq:optA} or \eqref{eq:optpr} is $O(Nq^2)$.
The computational complexity of approximating the optimal probabilities
in~\eqref{eq:optL} is only $O(Nq)$, because there is no need to
compute $\hat{\M}_{\hA}^{\rp}$ or $\hat{\bOmega}_{\hA}^{\rp}$. Next, we analyze
the complexity of parameter estimation. The
average subsample size with the sampling rate $\rho$ is on average 
\begin{equation*}
\Exp(N_1^{*})+\rho\{N-\Exp(N_1^{*})\}=
N\Exp\{f(\x;\btheta_{\rt})\}\{(1-\rho)e^{\alpha_{\rt}}+\rho+\oo\}
=O\{N(e^{\alpha_{\rt}}+\rho)\}.
\end{equation*}
Thus, 
the computational complexity of the two-step algorithm is
$O\{\zeta_{\rp,\oi}^{\rlas}N_{\rp}p+Nq^2+\zeta_{\oi}N(e^{\alpha_{\rt}}+\rho)q\}$
using optimal probabilities in \eqref{eq:optA} 
or \eqref{eq:optpr}, and it is $O\{\zeta_{\rp,\oi}^{\rlas}N_{\rp}p+Nq+\zeta_{\oi}
  N(e^{\alpha_{\rt}}+\rho)q\}$ using the optimal probabilities
  in~\eqref{eq:optL}. 

\section{Details of simulation settings} 
\label{sec:apsimu}

\subsection{Simulation details for Section~\ref{sec:intro}}
\label{sec:simuintro}

We first present the detailed settings of the simulations in
Section~\ref{sec:intro}, where we 
illustrate the scale-dependent issues of optimal subsampling probabilities. Our
simulation based on logistic regression models with the true parameter
$\bbeta_{\rt}$ to be 6-dimensional vectors and covariates $\x\sim
lognormal(\0,\bSigma)$ with the $(i,j)$-th element of $\bSigma$ is given as
$\bSigma_{ij}=0.5^{|i-j|},1\le i,j\le 6$. We consider two cases of parameters: 
\begin{enumerate}[(a)]
\item{\textbf{Non-sparse parameter}:}
  $\bbeta_{\rt}=(-1,-1,-0.01,-0.01,-0.01,-0.01)$ and $\alpha_{\rt}=-4$.
\item{\textbf{Sparse parameter}:} $\bbeta_{\rt}=(-1,0,0,0,0,0)$ and
  $\alpha_{\rt}=-5$.
\end{enumerate}
We generate full data of size $N=500000$ according to the above logistic models.
To investigate the effects of scale transformation, we multiply
the $\x_{(6)}$ with $s$ ($s=0.01,0.1,1,10,100$) and
divide $\bbeta_{\rt(6)}$ with the same $s$ to remain $\x\tp\bbeta_{\rt}$ to be
the same and thus the logistics regression model does not change. We obtain
subsamples with optimal subsampling probabilities described in
\cite{wang2021nonuniform} with transformed $\x$ under each $s$ and calculate the
resultant subsampling estimators. We set the nominal pilot sample size to
$N_{\rp}=800$ and nominal subsample size $N_{\rs}=1000$ (see details in
\cite{wang2021nonuniform}). We repeat the experiment for 500 times under each
scale and compute the mean prediction error.

\subsection{Simulation details for Section~\ref{sec:numeric}}
\label{sec:simunum}

For the estimation procedures in the second step of our two-step algorithm, we
choose $\gamma=1$, which means that the weights in the adaptive lasso
penalty are $\hat{w}_j=1/|\hat{\beta}_{\rp(j)}|, 1\leq j\leq q$, with $q$ being the number of
selected variables in the first stage screening. Furthermore, we consider
uniform sampling, the full data lasso and the full data adaptive lasso as baselines for
comparison. For the uniform sampling method, we use a similar two-step algorithm as
presented in Algorithm~\ref{alg:adplas} but set the sampling function in the
second step as $\varphi(\x)=1$, which means the sampling
probabilities are a constant $\rho$. We use lasso to implement the first stage screening and adaptive lasso
with $\gamma=1$ to implement the second stage screening for a fair comparison. For
full data lasso, we apply the lasso algorithm to the full
data. For the full data
adaptive lasso, we use the full data MLE estimator as the pilot estimator to
construct the weights and then apply the adaptive lasso algorithm to the
full data set. %

\section{Additional numerical results}\label{sec:addsimu}

\subsection{Results on variable selection omitted in Section~\ref{sec:numeric}}
\label{sec:vari-select-resultsS}

\begin{table}[ht]
\caption{Mean number of selected variables in Case A and Case B.}
\label{tb:mnumvS}
\centering
\begin{tabular}{lccccc}\hline
       & \multicolumn{5}{c}{Case A (five active variables)}              \\\hline
$\rho$ & first-stage & Uni        & A-OS       & L-OS       & P-OS       \\\hline
0.0025 & 14.87(0.30) & 4.99(0.02) & 5.02(0.02) & 5.03(0.02) & 5.01(0.02) \\
 0.005 & 14.68(0.28) & 5.05(0.02) & 5.07(0.02) & 5.06(0.02) & 5.08(0.02) \\
0.0075 & 14.20(0.27) & 5.09(0.03) & 5.08(0.02) & 5.08(0.02) & 5.09(0.03) \\
  0.01 & 14.46(0.30) & 5.13(0.02) & 5.13(0.02) & 5.13(0.02) & 5.12(0.03) \\
\hline
\end{tabular}
\begin{tabular}{lccccc}
        &   \multicolumn{5}{c}{Case B (four active variables)}            \\\hline
$\rho$  & first-stage & Uni        & A-OS       & L-OS       & P-OS       \\\hline
 0.0025 & 16.81(0.33) & 3.91(0.02) & 4.01(0.03) & 4.00(0.02) & 4.01(0.03) \\
  0.005 & 17.88(0.34) & 4.04(0.03) & 4.10(0.03) & 4.08(0.04) & 4.07(0.03) \\
 0.0075 & 17.19(0.33) & 4.06(0.02) & 4.13(0.03) & 4.10(0.03) & 4.12(0.03) \\
   0.01 & 17.51(0.34) & 4.07(0.03) & 4.08(0.03) & 4.08(0.03) & 4.08(0.03) \\
\hline
\end{tabular}
\end{table}

\begin{table}[ht]
\caption{Rates of excluding active variables (false negative rate) in Case A and Case B.}
\label{tb:covermodelS}
\centering
\begin{tabular}{lcccc}\hline
       & \multicolumn{4}{c}{Case A}                               \\\hline
$\rho$ & Uni          & A-OS         & L-OS         & P-OS         \\\hline
0.0025 & 0.094(0.013) & 0.076(0.012) & 0.068(0.011) & 0.070(0.012) \\
 0.005 & 0.052(0.010) & 0.048(0.010) & 0.050(0.010) & 0.044(0.010) \\
0.0075 & 0.054(0.010) & 0.052(0.010) & 0.052(0.010) & 0.052(0.010) \\
  0.01 & 0.040(0.009) & 0.036(0.008) & 0.036(0.008) & 0.036(0.008) \\
\hline
\end{tabular}
\begin{tabular}{lcccc}
       & \multicolumn{4}{c}{Case B}                               \\\hline
$\rho$ & Uni          & A-OS         & L-OS         & P-OS         \\\hline
0.0025 & 0.108(0.014) & 0.074(0.012) & 0.074(0.012) & 0.070(0.012) \\
 0.005 & 0.046(0.009) & 0.034(0.008) & 0.034(0.008) & 0.036(0.008) \\
0.0075 & 0.062(0.011) & 0.046(0.009) & 0.046(0.009) & 0.044(0.009) \\
  0.01 & 0.056(0.010) & 0.050(0.010) & 0.052(0.010) & 0.050(0.010) \\
\hline
\end{tabular}
\end{table}

\begin{table}[ht]
\caption{Rates of selecting the true model.}
\label{tb:truemodelS}
\centering
\begin{tabular}{lcccc}\hline
       & \multicolumn{4}{c}{Case A}                               \\\hline
$\rho$ & Uni          & A-OS         & L-OS         & P-OS         \\\hline
0.0025 & 0.856(0.016) & 0.870(0.015) & 0.868(0.015) & 0.878(0.015) \\
 0.005 & 0.884(0.014) & 0.872(0.015) & 0.880(0.015) & 0.872(0.015) \\
0.0075 & 0.858(0.016) & 0.868(0.015) & 0.868(0.015) & 0.868(0.015) \\
  0.01 & 0.854(0.016) & 0.866(0.015) & 0.862(0.015) & 0.868(0.015) \\
\hline
\end{tabular}
\begin{tabular}{lcccc}
       & \multicolumn{4}{c}{Case B}                               \\\hline
$\rho$ & Uni          & A-OS         & L-OS         & P-OS         \\\hline
0.0025 & 0.862(0.015) & 0.872(0.015) & 0.864(0.015) & 0.876(0.015) \\
 0.005 & 0.898(0.014) & 0.888(0.014) & 0.898(0.014) & 0.902(0.013) \\
0.0075 & 0.848(0.016) & 0.850(0.016) & 0.854(0.016) & 0.848(0.016) \\
  0.01 & 0.874(0.015) & 0.874(0.015) & 0.876(0.015) & 0.872(0.015) \\
\hline
\end{tabular}
\begin{tabular}{lcccc}
       & \multicolumn{4}{c}{Case C}                               \\\hline
$\rho$ & Uni          & A-OS         & L-OS         & P-OS         \\\hline
0.0025 & 0.824(0.017) & 0.874(0.015) & 0.880(0.015) & 0.884(0.014) \\
 0.005 & 0.880(0.015) & 0.884(0.014) & 0.884(0.014) & 0.882(0.014) \\
0.0075 & 0.908(0.013) & 0.916(0.012) & 0.910(0.013) & 0.914(0.013) \\
  0.01 & 0.908(0.013) & 0.910(0.013) & 0.908(0.013) & 0.910(0.013) \\
\hline
\end{tabular}
\end{table}

It is seen in Table~\ref{tb:truemodelS} that, no
subsampling method dominates others.
Table~\ref{tb:covermodel} and Table~\ref{tb:covermodelS} show that uniform sampling has
higher rates of excluding active variables than optimal subsampling
procedures. Although uniform sampling may have
a higher rate of selecting the true model in some cases, given that it is
more likely to exclude important variables, optimal sampling may be preferable
in practice.

\subsection{Comparison with standardization}\label{sec:stand}

Another approach to avoid
scale-dependency is to standardize the data. We compare the proposed
scale-independent optimal probabilities with the approach of data
standardization here. For the data standardization approach, we standardize the data, calculate the
optimal probabilities, and then implement subsampled adaptive lasso algorithm.
We used the same pilot estimation methods for fair comparisons. 

We first compare the eMSE and eMSPE in Figure~\ref{fig:mses} and
Figure~\ref{fig:mspes}, respectively. We use sP-OS to denote the approach with
data standardization and use P-OS to denote the approach without data
standardization.

\begin{figure}[ht]
  \centering 
  \begin{subfigure}{0.235\textwidth}
    \includegraphics[width=\textwidth]{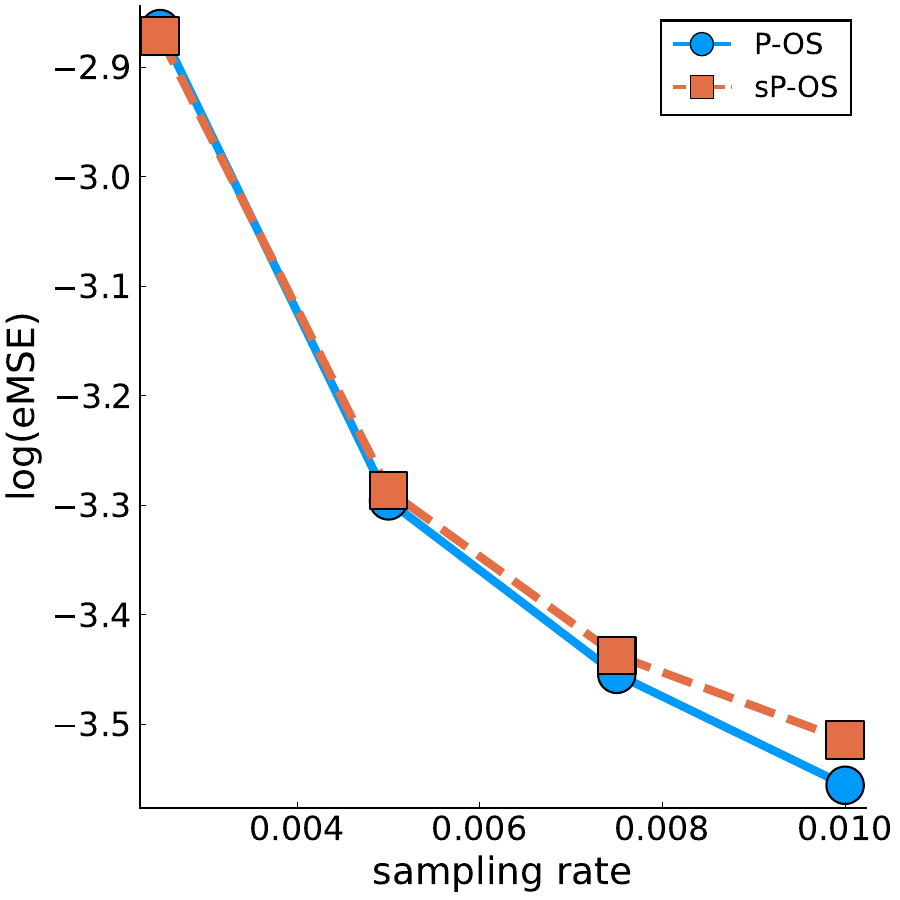}
    \caption{Case A}
  \end{subfigure}
  \begin{subfigure}{0.235\textwidth}
    \includegraphics[width=\textwidth]{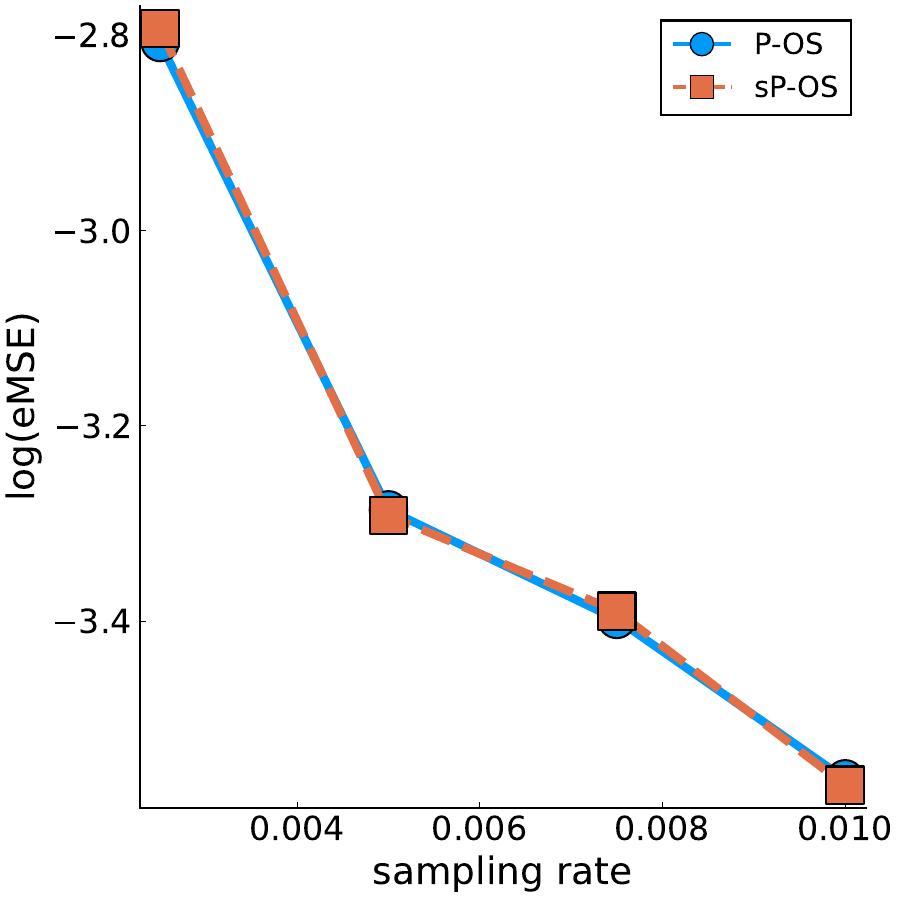}
    \caption{Case B}
  \end{subfigure}
  \begin{subfigure}{0.235\textwidth}
    \includegraphics[width=\textwidth]{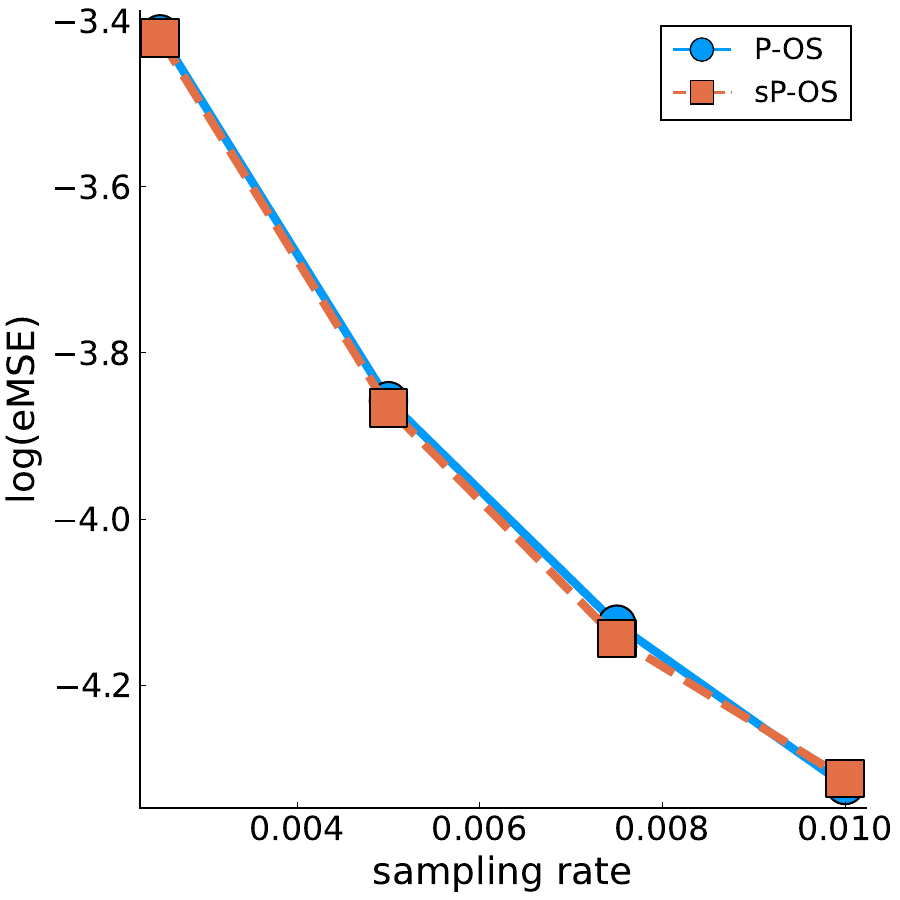}
    \caption{Case C}
  \end{subfigure}
  \caption{Empirical median squared error of estimated probability
    for different parameters with different sampling rates. The same pilot
    sample size is $N_{\rp}=500$.}
  \label{fig:mses}
\end{figure}

In Figure~\ref{fig:mses}, we notice that the performances of P-OS and sP-OS are
similar, this is also true for eMSPE.
However, standardization may decrease the rate of selecting the true model. We
present results of variable selection in Table~\ref{tb:scale-ocv} and
Table~\ref{tb:scale-cov}.

\begin{figure}[ht]
  \centering 
  \begin{subfigure}{0.235\textwidth}
    \includegraphics[width=\textwidth]{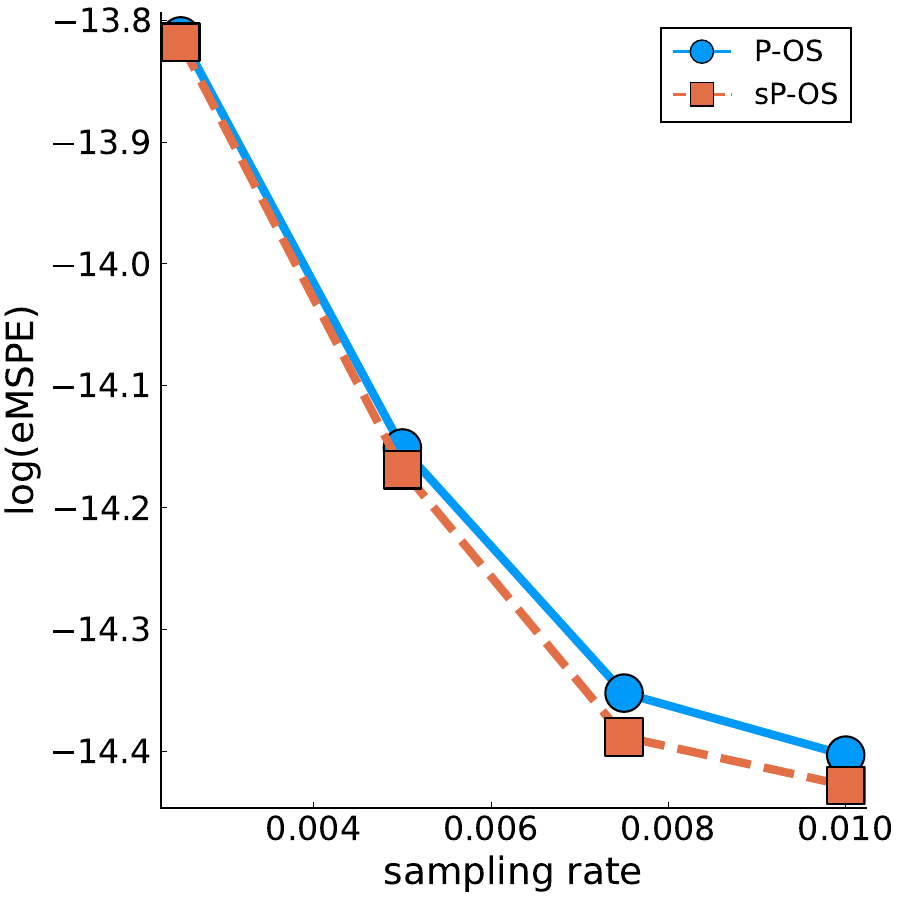}
    \caption{Case A}
  \end{subfigure}
  \begin{subfigure}{0.235\textwidth}
    \includegraphics[width=\textwidth]{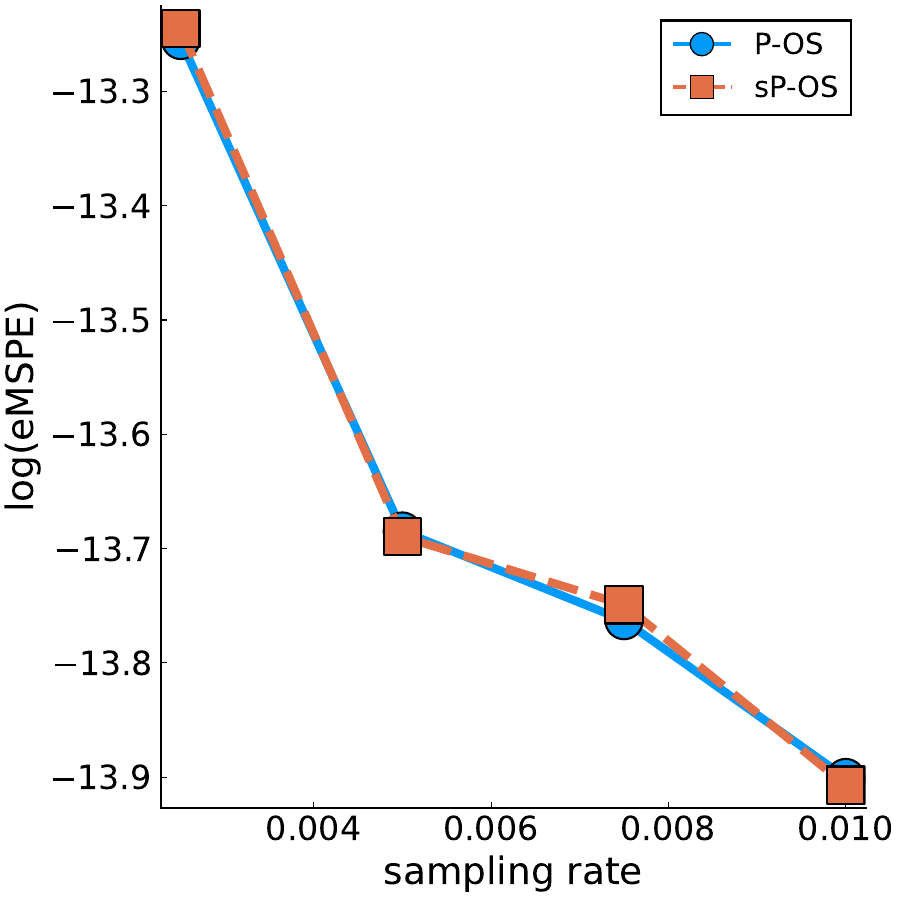}
    \caption{Case B}
  \end{subfigure}
    \begin{subfigure}{0.235\textwidth}
    \includegraphics[width=\textwidth]{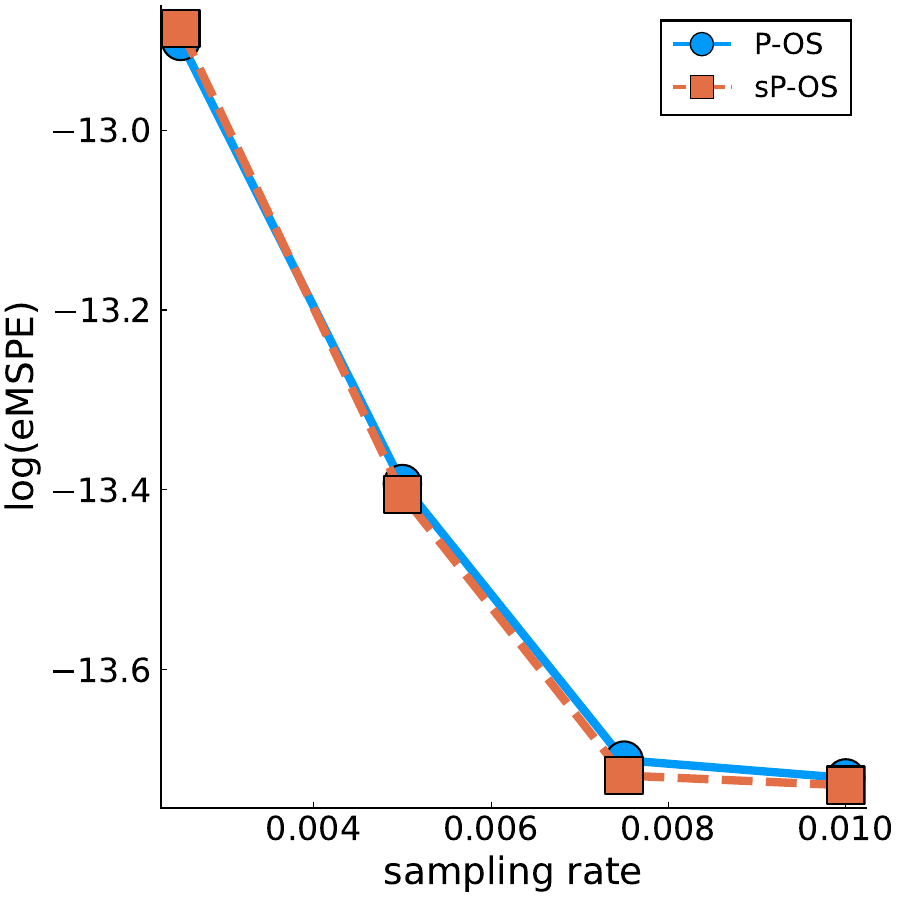}
    \caption{Case C}
  \end{subfigure}
  \caption{Empirical median squared error of estimated probability for
    different parameters with different sampling rates. The same pilot
    sample size is $N_{\rp}=500$.}
  \label{fig:mspes}
\end{figure}

We notice in Table~\ref{tb:scale-ocv} and Table~\ref{tb:scale-cov} that
the rates of selecting true models by $\hbeta_{\mathrm{P-OS}}^{\radp}$ is higher
than $\hbeta_{\mathrm{sP-OS}}^{\radp}$ without much increase on the rates of
excluding active variables. Therefore, although standardization is an approach
to solve the scale-dependency issues, it may decrease the rates of selecting
true models in practice.
\begin{table}[H]
\caption{Rates of selecting true models.}
\label{tb:scale-ocv}
\centering
\begin{tabular}{lcccccccccc}\hline
       & \multicolumn{3}{c}{A} & \multicolumn{3}{c}{B} & \multicolumn{3}{c}{C}\\\hline
$\rho$ & sUni  & P-OS  & sP-OS & sUni  & P-OS  & sP-OS & sUni  & P-OS  & sP-OS\\\hline %
0.0025 & 0.848 & 0.878 & 0.862 & 0.850 & 0.876 & 0.862 & 0.828 & 0.884 & 0.880\\ %
0.005  & 0.878 & 0.872 & 0.850 & 0.890 & 0.902 & 0.890 & 0.882 & 0.882 & 0.880\\ %
0.0075 & 0.850 & 0.868 & 0.846 & 0.850 & 0.848 & 0.842 & 0.906 & 0.914 & 0.910\\ %
0.01   & 0.840 & 0.868 & 0.844 & 0.868 & 0.872 & 0.862 & 0.900 & 0.910 & 0.904\\ %
\hline                 
\end{tabular}
\end{table}

\begin{table}[H]
\caption{Rates of excluding active variables (false negative rate).}
\label{tb:scale-cov}
\centering
\begin{tabular}{lcccccccccc}\hline
       & \multicolumn{3}{c}{A} & \multicolumn{3}{c}{B} & \multicolumn{3}{c}{C}\\\hline
$\rho$ & sUni  & P-OS  & sP-OS & sUni  & P-OS  & sP-OS & sUni  & P-OS  & sP-OS\\\hline %
0.0025 & 0.088 & 0.070 & 0.068 & 0.106 & 0.070 & 0.070 & 0.164 & 0.084 & 0.084\\ %
0.005  & 0.056 & 0.044 & 0.044 & 0.046 & 0.036 & 0.036 & 0.100 & 0.066 & 0.066\\ %
0.0075 & 0.052 & 0.052 & 0.052 & 0.062 & 0.044 & 0.044 & 0.062 & 0.046 & 0.046\\ %
0.01   & 0.040 & 0.036 & 0.036 & 0.056 & 0.050 & 0.050 & 0.068 & 0.054 & 0.054\\ %
\hline                 
\end{tabular}
\end{table}

\vskip 0.2in
\bibliographystyle{natbib}
\bibliography{references}

\newcommand{\noop}[1]{}
\begin{thebibliography}{}

\bibitem[Ai \emph{et~al.}(2021)Ai, Yu, Zhang, and Wang]{ai2019optimal}
Ai, M., Yu, J., Zhang, H., and Wang, H. (2021).
\newblock Optimal subsampling algorithms for big data regressions.
\newblock \emph{Statistica Sinica} \textbf{31}, 2, 749--772.

\bibitem[Bezanson \emph{et~al.}(2017)Bezanson, Edelman, Karpinski, and Shah]{julia}
Bezanson, J., Edelman, A., Karpinski, S., and Shah, V.~B. (2017).
\newblock Julia: A fresh approach to numerical computing.
\newblock \emph{SIAM review} \textbf{59}, 1, 65--98.

\bibitem[Chawla \emph{et~al.}(2002)Chawla, Bowyer, Hall, and Kegelmeyer]{chawla2002smote}
Chawla, N.~V., Bowyer, K.~W., Hall, L.~O., and Kegelmeyer, W.~P. (2002).
\newblock Smote: synthetic minority over-sampling technique.
\newblock \emph{Journal of artificial intelligence research} \textbf{16}, 321--357.

\bibitem[Douzas and Bacao(2017)]{douzas2017self}
Douzas, G. and Bacao, F. (2017).
\newblock Self-organizing map oversampling (somo) for imbalanced data set learning.
\newblock \emph{Expert systems with Applications} \textbf{82}, 40--52.

\bibitem[Drummond \emph{et~al.}(2003)Drummond, Holte, \emph{et~al.}]{drummond2003c4}
Drummond, C., Holte, R.~C., \emph{et~al.} (2003).
\newblock C4. 5, class imbalance, and cost sensitivity: why under-sampling beats over-sampling.
\newblock In \emph{Workshop on learning from imbalanced datasets II}, vol.~11.

\bibitem[Fan and Lv(2008)]{fan2008sure}
Fan, J. and Lv, J. (2008).
\newblock Sure independence screening for ultrahigh dimensional feature space.
\newblock \emph{Journal of the Royal Statistical Society: Series B (Statistical Methodology)} \textbf{70}, 5, 849--911.

\bibitem[Firth(1993)]{firth1993bias}
Firth, D. (1993).
\newblock {Bias reduction of maximum likelihood estimates}.
\newblock \emph{Biometrika} \textbf{80}, 1, 27--38.

\bibitem[Friedman \emph{et~al.}(2007)Friedman, Hastie, H{\"o}fling, and Tibshirani]{friedman2007pathwise}
Friedman, J., Hastie, T., H{\"o}fling, H., and Tibshirani, R. (2007).
\newblock {Pathwise coordinate optimization}.
\newblock \emph{The Annals of Applied Statistics} \textbf{1}, 2, 302 -- 332.

\bibitem[Friedman \emph{et~al.}(2010)Friedman, Hastie, and Tibshirani]{friedman2010Regularization}
Friedman, J.~H., Hastie, T., and Tibshirani, R. (2010).
\newblock Regularization paths for generalized linear models via coordinate descent.
\newblock \emph{Journal of Statistical Software} \textbf{33}, 1, 1--22.

\bibitem[Fu and Knight(2000)]{fu2000asymptotics}
Fu, W. and Knight, K. (2000).
\newblock Asymptotics for lasso-type estimators.
\newblock \emph{The Annals of statistics} \textbf{28}, 5, 1356--1378.

\bibitem[Geyer(1994)]{geyer1994asymptotics}
Geyer, C.~J. (1994).
\newblock On the asymptotics of constrained m-estimation.
\newblock \emph{The Annals of statistics}  1993--2010.

\bibitem[Han \emph{et~al.}(2005)Han, Wang, and Mao]{han2005borderline}
Han, H., Wang, W.-Y., and Mao, B.-H. (2005).
\newblock Borderline-smote: a new over-sampling method in imbalanced data sets learning.
\newblock In \emph{International conference on intelligent computing},  878--887. Springer.

\bibitem[Hjort and Pollard(2011)]{hjort2011asymptotics}
Hjort, N.~L. and Pollard, D. (2011).
\newblock Asymptotics for minimisers of convex processes.
\newblock \emph{arXiv preprint arXiv:1107.3806} .

\bibitem[Huang \emph{et~al.}(2008)Huang, Ma, and Zhang]{huang2008iterated}
Huang, J., Ma, S., and Zhang, C.-H. (2008).
\newblock The iterated lasso for high-dimensional logistic regression.
\newblock \emph{The University of Iowa, Department of Statistics and Actuarial Sciences} \textbf{7}.

\bibitem[JuliaStats \emph{et~al.}(2014-2025)]{juliastats2022lasso}
JuliaStats \emph{et~al.} (2014-2025).
\newblock {Lasso.jl}: Lasso/elastic net linear and generalized linear models.
\newblock Julia package for Lasso and Elastic Net models. Based on the paper by Friedman, Hastie, and Tibshirani (2010) in Journal of Statistical Software.

\bibitem[Keret and Gorfine(2023)]{keret2023analyzing}
Keret, N. and Gorfine, M. (2023).
\newblock Analyzing big ehr data—optimal cox regression subsampling procedure with rare events.
\newblock \emph{Journal of the American Statistical Association} \textbf{118}, 544, 2262--2275.

\bibitem[Liu \emph{et~al.}(2008)Liu, Wu, and Zhou]{liu2008exploratory}
Liu, X.-Y., Wu, J., and Zhou, Z.-H. (2008).
\newblock Exploratory undersampling for class-imbalance learning.
\newblock \emph{IEEE Transactions on Systems, Man, and Cybernetics, Part B (Cybernetics)} \textbf{39}, 2, 539--550.

\bibitem[Mathew \emph{et~al.}(2017)Mathew, Pang, Luo, and Leong]{mathew2017classification}
Mathew, J., Pang, C.~K., Luo, M., and Leong, W.~H. (2017).
\newblock Classification of imbalanced data by oversampling in kernel space of support vector machines.
\newblock \emph{IEEE transactions on neural networks and learning systems} \textbf{29}, 9, 4065--4076.

\bibitem[Pukelsheim(2006)]{pukelsheim2006optimal}
Pukelsheim, F. (2006).
\newblock \emph{Optimal design of experiments}.
\newblock SIAM.

\bibitem[Ramesh \emph{et~al.}(2023)Ramesh, Zhang, Sharpe, Penne, Haller, Lum, Lee, Lee, Pershing, Miller, Lorch, and Hyman]{ramesh2023thyroid}
Ramesh, S., Zhang, Q.~E., Sharpe, J., Penne, R., Haller, J., Lum, F., Lee, A.~Y., Lee, C.~S., Pershing, S., Miller, J.~W., Lorch, A., and Hyman, L. (2023).
\newblock Thyroid eye disease and its vision-threatening manifestations in the academy iris registry: 2014--2018.
\newblock \emph{American Journal of Ophthalmology} \textbf{253}, 74--85.
\newblock Epub 2023 May 17.

\bibitem[Shao(2003)]{Shao2003}
Shao, J. (2003).
\newblock \emph{Mathematical Statistics, 2nd}.
\newblock Springer-Verlag, New York.

\bibitem[Tripathi(1999)]{tripathi1999matrix}
Tripathi, G. (1999).
\newblock A matrix extension of the cauchy-schwarz inequality.
\newblock \emph{Economics Letters} \textbf{63}, 1, 1--3.

\bibitem[Wang(2020)]{wang2020logistic}
Wang, H. (2020).
\newblock Logistic regression for massive data with rare events.
\newblock In \emph{International Conference on Machine Learning},  9829--9836. PMLR.

\bibitem[Wang and Ma(2021)]{wang2021optimal}
Wang, H. and Ma, Y. (2021).
\newblock Optimal subsampling for quantile regression in big data.
\newblock \emph{Biometrika} \textbf{108}, 1, 99--112.

\bibitem[Wang \emph{et~al.}(2021)Wang, Zhang, and Wang]{wang2021nonuniform}
Wang, H., Zhang, A., and Wang, C. (2021).
\newblock Nonuniform negative sampling and log odds correction with rare events data.
\newblock \emph{Advances in Neural Information Processing Systems} \textbf{34}.

\bibitem[Wang \emph{et~al.}(2018)Wang, Zhu, and Ma]{WangZhuMa2017}
Wang, H., Zhu, R., and Ma, P. (2018).
\newblock Optimal subsampling for large sample logistic regression.
\newblock \emph{Journal of the American Statistical Association} \textbf{113}, 522, 829--844.

\bibitem[Wang \emph{et~al.}(2022)Wang, Zou, and Wang]{wang2022sampling}
Wang, J., Zou, J., and Wang, H. (2022).
\newblock Sampling with replacement vs poisson sampling: a comparative study in optimal subsampling.
\newblock \emph{IEEE Transactions on Information Theory} \textbf{68}, 10, 6605--6630.

\bibitem[Yao and Wang(2018)]{yao2018optimal}
Yao, Y. and Wang, H. (2018).
\newblock Optimal subsampling for softmax regression.
\newblock \emph{Statistical Papers}  585--599.

\bibitem[Yu \emph{et~al.}(2020)Yu, Wang, Ai, and Zhang]{yu2020optimal}
Yu, J., Wang, H., Ai, M., and Zhang, H. (2020).
\newblock Optimal distributed subsampling for maximum quasi-likelihood estimators with massive data.
\newblock \emph{Journal of the American Statistical Association} \textbf{0}, 0, 1--12.
\newblock DOI:10.1080/01621459.2020.1773832.

\bibitem[Yuan \emph{et~al.}(2012)Yuan, Ho, and Lin]{yuan2012improved}
Yuan, G.-X., Ho, C.-H., and Lin, C.-J. (2012).
\newblock An improved glmnet for l1-regularized logistic regression.
\newblock \emph{J. Mach. Learn. Res.} \textbf{13}, 1999--2030.

\bibitem[Zhang and Lu(2007)]{zhang2007adaptive}
Zhang, H.~H. and Lu, W. (2007).
\newblock {Adaptive Lasso for Cox's proportional hazards model}.
\newblock \emph{Biometrika} \textbf{94}, 3, 691--703.

\bibitem[Zhang \emph{et~al.}(2021)Zhang, Ning, and Ruppert]{Zhang2020optimal}
Zhang, T., Ning, Y., and Ruppert, D. (2021).
\newblock Optimal sampling for generalized linear models under measurement constraints.
\newblock \emph{Journal of Computational and Graphical Statistics} \textbf{30}, 1, 106--114.

\bibitem[Zou(2006)]{zou2006adaptive}
Zou, H. (2006).
\newblock The adaptive lasso and its oracle properties.
\newblock \emph{Journal of the American Statistical Association} \textbf{101}, 476, 1418--1429.

\end{thebibliography}

\end{document}